\documentclass[accepted]{uai2022} 

\usepackage[american]{babel}

\usepackage{natbib} 
    \bibliographystyle{plainnat}
    
\usepackage{mathtools} 
\usepackage{booktabs} 
\usepackage{tikz} 

%
%
%
%






\usepackage[utf8]{inputenc} 
\usepackage[T1]{fontenc}    
\usepackage{hyperref}       
\usepackage{url}            
\usepackage{amsfonts}       
\usepackage{nicefrac}       
\usepackage{microtype}      
\usepackage{xcolor}         

\usepackage{microtype}
\usepackage{graphicx}
\usepackage{booktabs} 

\usepackage{commath}
\usepackage{amsthm}
\usepackage{amsmath}
\usepackage{amssymb}
\usepackage{bbm}
 \usepackage{subcaption}

\usepackage[toc,page]{appendix}

\usepackage{algorithm}
\usepackage{algorithmic}

\allowdisplaybreaks

\newtheorem{theorem}{Theorem}

\newtheorem{proposition}{Proposition}
\newtheorem{lemma}{Lemma}

\newtheorem{definition}{Definition}



\title{On Provably Robust Meta-Bayesian Optimization}

%
%
\author[1]{\href{mailto:<dzx@nus.edu.sg>?Subject=Your UAI 2022 paper}{Zhongxiang Dai}{}}
\author[1]{Yizhou Chen}
\author[2]{Haibin Yu}
\author[1]{Bryan Kian Hsiang Low}
\author[3]{Patrick Jaillet}
\affil[1]{%
Department of Computer Science, National University of Singapore, Republic of Singapore
}
\affil[2]{%
Department of Data Platform, Tencent
}
\affil[3]{%
Department of Electrical Engineering and Computer Science, Massachusetts Institute of Technology, USA
}
  
  \begin{document}
\maketitle

\begin{abstract}
\emph{Bayesian optimization} (BO) has become popular for sequential optimization of black-box functions. When BO is used to optimize a target function, we often have access to previous evaluations of potentially related functions. This begs the question as to whether we can leverage these previous experiences to accelerate the current BO task through \emph{meta-learning} (meta-BO), while ensuring \emph{robustness} against potentially harmful dissimilar tasks that could sabotage the convergence of BO. This paper introduces two scalable and provably robust meta-BO algorithms: \emph{robust meta-Gaussian process-upper confidence bound} (RM-GP-UCB) and \emph{RM-GP-Thompson sampling} (RM-GP-TS). We prove that both algorithms are asymptotically no-regret even when some or all previous tasks are dissimilar to the current task, and show that RM-GP-UCB enjoys a better theoretical robustness than RM-GP-TS. We also exploit the theoretical guarantees to optimize the weights assigned to individual previous tasks through regret minimization via online learning, which diminishes the impact of dissimilar tasks and hence further enhances the robustness. Empirical evaluations show that (a) RM-GP-UCB performs effectively and consistently across various applications, and (b) RM-GP-TS, despite being less robust than RM-GP-UCB both in theory and in practice, performs competitively in some scenarios with less dissimilar tasks and is more computationally efficient. 
\end{abstract}


\section{Introduction}
\label{sec:intro}
\emph{Bayesian optimization} (BO) has recently gained immense popularity as an efficient method to optimize black-box functions~\citep{shahriari2016taking},
and it has found success in a variety of applications such as 
automated \emph{machine learning} (ML)~\citep{snoek2012practical}, \emph{reinforcement learning} (RL)~\citep{wilson2014using}, among others.
BO uses a \emph{Gaussian process} (GP)~\citep{rasmussen2004gaussian} as a surrogate to represent the belief about the objective function and, in each iteration, 
queries the input parameters that maximize an \emph{acquisition function}.
In particular, the BO algorithms based on the \emph{GP-upper confidence bound} (GP-UCB)~\citep{srinivas2009gaussian} and \emph{GP-Thompson sampling} (GP-TS)~\citep{chowdhury2017kernelized} acquisition functions have been shown to be asymptotically \emph{no-regret} and perform competitively in practice.
When using BO to optimize a \emph{target function}, we sometimes have access to a set of evaluations of some potentially related functions.
For example, when using BO for hyperparameter optimization of an ML model trained on a target dataset, 
we often have access to some previously completed BO tasks using other potentially related datasets~\citep{golovin2017google}.
These previous tasks, if similar to the current task, may be exploited to accelerate the current BO task.
However, if some (or even all) previous tasks are in fact dissimilar to the current task, their use may turn out to 
incorporate harmful information and 
sabotage the convergence of BO~\citep{feurer2018scalable}.
This begs the question as to whether we can leverage 
previous tasks to improve the efficiency of the current BO task, while
ensuring \emph{robustness} against 
harmful dissimilar tasks such that they do not affect the trademark \emph{no-regret} convergence of BO. 

Exploiting previous learning experiences to improve the efficiency of the current task is the goal of \emph{meta-learning}~\citep{vanschoren2018meta}.
Meta-learning is a broad field with various applications in supervised learning~\citep{finn2017model}, 
RL~\citep{xu2018meta}, active learning~\citep{pang2018meta}, among others. 
The major challenges in meta-learning include (a) the transfer of information from previous tasks to the current task, and 
(b) characterization of task similarity which is crucial for identifying harmful dissimilar tasks~\citep{vanschoren2018meta}.
The application of meta-learning to BO (or \emph{meta-BO}) has been explored by previous studies which differ in how these two challenges are addressed.
Some works, such as multitask BO~\citep{swersky2013multi}, transfer the information from previous tasks by building a joint GP surrogate using the observations from all previous and current tasks,
with the task similarity either represented by meta-features~\citep{bardenet2013collaborative,yogatama2014efficient} or learned from observations~\citep{swersky2013multi,wang2018regret}.
These works, however, are limited by the scalability of GP due to including all previous and current observations in a single GP~\citep{feurer2018scalable}.\footnote{Some works such as~\citet{perrone2018scalable} and~\citet{volpp2020meta} replace GP by other surrogate models such as neural networks for scalability, however, they lack the principled uncertainty estimate and theoretical guarantee offered by GP.}
To this end, other recent works transfer information from previous tasks using a more scalable approach:
They build a separate GP surrogate for each individual task and use a weighted combination of either the individual surrogate functions 
or acquisition functions for query selection~\citep{feurer2018scalable,wistuba2016two,wistuba2018scalable}. 
A more detailed review of related works is presented in Sec.~\ref{sec:related_works}.
However, none of the previous works has provided a theoretical performance guarantee to ensure robust performances in the presence of harmful dissimilar tasks.
A robust theoretical guarantee is important for guaranteeing the consistent performances of meta-BO algorithms in various real-world applications, which is crucial for their practical deployment.

To this end, this paper introduces two scalable and provably robust meta-BO algorithms: \emph{robust meta-GP-upper confidence bound} (RM-GP-UCB) and \emph{robust meta-GP-Thompson sampling} (RM-GP-TS).
Both algorithms compute the acquisition function (GP-UCB or GP-TS) for each individual task 
and select the next query via either a weighted combination (RM-GP-UCB) or in a probabilistic way (RM-GP-TS) (Sec.~\ref{sec:om_gp_ucb}).
As a result, like the works of~\citet{feurer2018scalable,wistuba2016two,wistuba2018scalable}, a separate GP surrogate is built for each previous task, making our algorithms scale well in the number of meta-tasks and observations in each meta-task.
Our major contributions include:
\textbf{Firstly}, we prove robust theoretical convergence guarantees for both RM-GP-UCB and RM-GP-TS (Sec.~\ref{sec:theoretical_analysis}). In particular, both algorithms are asymptotically \emph{no-regret} for \emph{any} given set of previous tasks, i.e., even if some or all previous tasks are dissimilar to the target task. Moreover, we show that RM-GP-UCB enjoys a superior robustness guarantee compared with RM-GP-TS (Sec.~\ref{subsec:theory:ts}).
\textbf{Secondly}, to further enhance our robustness against dissimilar tasks, we exploit the theoretical guarantees to learn the task similarity (and hence identify dissimilar tasks) in a principled way, 
by minimizing the regret upper bounds via a computationally cheap online learning algorithm known as \emph{Follow-The-Regularized-Leader} (Sec.~\ref{sec:online_regret_minimization}).
\textbf{Lastly}, we use extensive empirical evaluations to show that: RM-GP-UCB performs effectively and consistently across a wide range of tasks; RM-GP-TS, despite under-performing in adverse scenarios (i.e., when a large number of previous tasks are dissimilar), performs competitively in some favorable cases with less dissimilar tasks and is much more computationally efficient.
Of note, our theoretical and empirical comparisons between RM-GP-UCB and RM-GP-TS may provide useful insights for other meta-BO algorithms in general (and potentially for other related algorithms such as meta-RL) in terms of the relative strengths and weaknesses of UCB- and TS-based meta-learning algorithms.
\section{Background and Problem Formulation}
\label{sec:background}
\textbf{Bayesian Optimization.}
This work tackles the problem of sequentially maximizing an unknown function $f:\mathcal{D}\rightarrow\mathbb{R}$. 
In each iteration $t=1,\ldots,T$, an input $\mathbf{x}_t \in \mathcal{D}$ (a $D\geq 1$-dimensional vector) is 
queried to yield
$y_t\triangleq f(\mathbf{x}_t) + \epsilon$ where $\epsilon \sim \mathcal{N}(0,\sigma^2)$ is a Gaussian noise with variance $\sigma^2$.
The performance of BO is typically measured by \emph{cumulative regret}: $R_T\triangleq\sum_{t=1,\ldots,T}[f(\mathbf{x}^*) - f(\mathbf{x}_t)]$ 
where $\mathbf{x}^* \in \arg\max_{\mathbf{x}\in \mathcal{D}}f(\mathbf{x})$ is a global maximizer of $f$. 
It is desirable for a BO algorithm to achieve \emph{no regret} by making its $R_T$ grow sublinearly 
such that its \emph{simple regret} $S_T\triangleq\min_{t=1,\ldots,T}[f(\mathbf{x}^*) - f(\mathbf{x}_t)] \leq R_T/T$ goes to $0$ asymptotically. 
During BO, we model the belief about $f$ using a \emph{Gaussian process} (GP) $\{f(\mathbf{x})\}_{\mathbf{x}\in {\mathcal{D}}}$.
That is, any finite subset of $\{f(\mathbf{x})\}_{\mathbf{x}\in {\mathcal{D}}}$ follows a multivariate Gaussian distribution~\citep{rasmussen2004gaussian}.
A GP is fully specified by its prior mean $\mu(\mathbf{x})$ and kernel function $k(\mathbf{x}, \mathbf{x}')$, and we assume w.l.o.g. that $\mu(\mathbf{x})=0$ and $k(\mathbf{x}, \mathbf{x}') \leq 1$ $\forall \mathbf{x}, \mathbf{x}' \in \mathcal{D}$.
We focus on the widely used Squared Exponential (SE) kernel.
Given 
$T$ noisy observations $\mathbf{y}_{T}\triangleq [y_t]^{\top}_{t=1,\ldots,T}$ at inputs $\mathbf{x}_1,\ldots,\mathbf{x}_T$, the posterior GP belief of $f$ at 
input $\mathbf{x} \in \mathcal{D}$ is Gaussian with the following posterior mean and variance:
\begin{equation}
\begin{split}
    \mu_T(\mathbf{x}) \triangleq\displaystyle\mathbf{k}_T(\mathbf{x})^\top(\mathbf{K}_T+\lambda I)^{-1}\mathbf{y}_{T}, \,\\
    \sigma_T^2(\mathbf{x}) \triangleq\displaystyle k(\mathbf{x},\mathbf{x})-\mathbf{k}_T(\mathbf{x})^\top(\mathbf{K}_T+\lambda I)^{-1}\mathbf{k}_T(\mathbf{x}),
\end{split}
\label{gp_posterior}
\end{equation}
where $\mathbf{K}_T\triangleq [k(\mathbf{x}_t,\mathbf{x}_{t'})]_{t,t'=1,\ldots,T}$, $\mathbf{k}_T(\mathbf{x})\triangleq [k(\mathbf{x}_t,\mathbf{x})]^\top_{t=1,\ldots,T}$, $\lambda$ is a regularization parameter.

\textbf{Meta-Bayesian Optimization.} 
We refer to the function $f$ being maximized as the \emph{target function} and the functions $f_i$ for $i=1,\ldots, M$ of the $M$ previous tasks as \emph{meta-functions}.
We use \textit{target task/observations} and \textit{meta-tasks/observations} in a similar manner. 
All functions are defined on the same domain $\mathcal{D}$ which is assumed to be discrete for simplicity, 
but the theoretical results can be easily generalized to continuous domains following the analysis of previous works~\citep{chowdhury2017kernelized,srinivas2009gaussian}.
We assume that $f$ and all $f_i$'s lie in the \emph{reproducing kernel Hilbert space} (RKHS) associated with the kernel $k$ such that their norm induced by the RKHS is bounded: $\norm{f}_{k} \leq B$, $\norm{f_i}_{k}\leq B,\forall i=1,\ldots,M$.
This assumption intuitively suggests that the target and meta-functions have the same degree of smoothness.
Same as the work of~\citet{wang2018regret} which has also performed theoretical analysis of a meta-learning algorithm for BO, 
we also assume that all meta- and target observations are corrupted by a Gaussian noise $\epsilon \sim \mathcal{N}(0,\sigma^2)$ with variance $\sigma^2$.
The number of observations from meta-task $i$ is a constant denoted as $N_i$, and  $N\triangleq\max_{i=1,\ldots,M}N_i$.
$\mathbf{x}_{i,j}$ and $y_{i,j}$ represent the $j$-th input and noisy output of meta-task $i$ respectively.
We define the \emph{function gap} $d_i\triangleq \max_{j=1,\ldots,N_i}\left|f(\mathbf{x}_{i,j})-f_i(\mathbf{x}_{i,j})\right| < \infty$  
which represents the maximum difference between the function values of $f$ and $f_i$ at any corresponding input $\mathbf{x}_{i,j}$ of meta-task $i$.
Note that for a given set of meta-observations for meta-task $i$, the function gap $d_i$ is an unknown constant characterizing the similarity between meta-task $i$ and the target task: 
a smaller function gap implies a stronger similarity.


\section{Robust Meta-Bayesian Optimization}
\label{sec:om_gp_ucb}
The acquisition function~\eqref{acq_func} adopted by RM-GP-UCB in iteration $t$ is a weighted combination of $M+1$ individual GP-UCB acquisition functions~\citep{srinivas2009gaussian} for the target task and the $M$ meta-tasks, each of which is calculated using the observations from a particular task:
\begin{equation}
\begin{split}
\overline{\zeta}^{\text{UCB}}_t(\mathbf{x})\triangleq &\nu_t\Big[{\sum}^M_{i=1}\omega_i \left[\overline{\mu}_{i}(\mathbf{x}) + \tau\overline{\sigma}_{i}(\mathbf{x})\right]\Big] + \\
&\left(1-\nu_t\right)\left[\mu_{t-1}(\mathbf{x})+\beta_t\sigma_{t-1}(\mathbf{x})\right].
\end{split}
\label{acq_func}
\end{equation}
In~\eqref{acq_func}, $\mu_{t-1}(\mathbf{x})$ and $\sigma_{t-1}(\mathbf{x})$ represent, respectively, the GP posterior mean and standard deviation~\eqref{gp_posterior} at $\mathbf{x}$ 
calculated using the target observations from iterations 1 to $t-1$. 
$\overline{\mu}_{i}(\mathbf{x})$ and $\overline{\sigma}_{i}(\mathbf{x})$
are computed using all meta-observations from meta-task $i$. 
$\beta_t>0$ and $\tau>0$ will be defined in Sec.~\ref{sec:theoretical_analysis}.
$\nu_t \in [0, 1]$ can be interpreted as the overall weight given to all meta-tasks in iteration $t$ and should be chosen to be non-increasing in $t$, 
which enforces the impact of meta-tasks in~\eqref{acq_func} to be non-increasing.
The \emph{meta-weights} $\omega_i$'s 
can be understood as the weights assigned to individual meta-tasks.
Note that since the dataset used to calculate $\overline{\mu}_{i}(\mathbf{x})$ and $\overline{\sigma}_{i}(\mathbf{x})$ 
is fixed with size $N_i$, the matrix inversion in~\eqref{gp_posterior} (i.e., the computational bottleneck for GP) can be pre-computed.
So, after $T$ iterations, RM-GP-UCB incurs $\mathcal{O}(T^3)$ time for covariance matrix inversion
(since only the target covariance matrix of size $T\times T$ needs to be inverted) 
and $\mathcal{O}(MN^2+T^2)$ time during predictive inference,  
which are less than the respective $\mathcal{O}((MN+T)^3)$ and $\mathcal{O}((MN+T)^2)$ time 
when all observations are included in a single GP.
In practice, the total number of BO iterations ($T$) is usually small, therefore, the differences between these corresponding computational costs can be large, especially when $M$ and $N$ are large.
Hence, RM-GP-UCB is scalable in the number of meta-tasks ($M$) and observations in each meta-task ($N$).

The acquisition function of RM-GP-TS is defined as:
\begin{equation}
\overline{\zeta}^{\text{TS}}_t(\mathbf{x})\triangleq
\begin{cases}
f^t(\mathbf{x}) & \text{with probability } 1-\nu_t \ ,\\
{\sum}^M_{i=1}\omega_i \overline{f}^t_{i}(\mathbf{x}) & \text{with probability } \nu_t,
\end{cases}
\label{eq:acq_func_ts}
\end{equation}
in which $f^t$ is a function sampled from the GP posterior of the target task: $f^t \sim \mathcal{GP}\left(\mu_{t-1}(\cdot), \beta_t^2 \sigma_{t-1}^2(\cdot)\right)$, and $f^t_i$ is sampled from the GP posterior of meta-task $i$: $\overline{f}^t_i \sim \mathcal{GP}\left(\overline{\mu}_{i}(\cdot), \tau^2 \overline{\sigma}_{i}^2(\cdot)\right)$.
Using approximation techniques such as random Fourier features (RFF) approximation~\citep{rahimi2008random} (which we use in all our experiments), the functions $f^t$ and $f_i^t$'s can be sampled efficiently, hence making RM-GP-TS computationally efficient (as we will demonstrate in Sec.~\ref{sec:experiment}).
Moreover, since the meta-observations of every meta-task is fixed, the use of approximation techniques such as RFF allows the functions $\overline{f}^t_i$'s to be sampled beforehand before the algorithm starts.
Refer to Appendix~\ref{app:ts:details} for more details on RM-GP-TS.

In iteration $t$ of either RM-GP-UCB or RM-GP-TS (Algorithm~\ref{OM_GP_UCB}), we first optimize the meta-weights and update $\nu_t$ (Sec.~\ref{sec:online_weight_estimation}), which corresponds to line $2$ of Algorithm~\ref{OM_GP_UCB}.
Next, the input $\mathbf{x}_t$ is selected by maximizing the acquisition function~\eqref{acq_func} (RM-GP-UCB) or~\eqref{eq:acq_func_ts} (RM-GP-TS), after which we query $\mathbf{x}_t$ and 
use the newly collected 
$(\mathbf{x}_t, y_t)$ to update the GP posterior belief~\eqref{gp_posterior}.

\begin{algorithm}
\begin{algorithmic}[1]
	\FOR{$t=1,2,\ldots, T$}
        \STATE Update $\omega_i$ for $i=1,\ldots,M$ via online meta-weight optimization and update $\nu_t$ (Sec.~\ref{sec:online_weight_estimation})
        \STATE $\mathbf{x}_t \leftarrow {\arg\max}_{\mathbf{x} \in \mathcal{D}} \overline{\zeta}^{\text{UCB}}_t(\mathbf{x})$ (for RM-GP-UCB)~\eqref{acq_func}, or
        $\mathbf{x}_t \leftarrow {\arg\max}_{\mathbf{x} \in \mathcal{D}} \overline{\zeta}^{\text{TS}}_t(\mathbf{x})$ (for RM-GP-TS)~\eqref{eq:acq_func_ts}
        \STATE Query $\mathbf{x}_t$ to observe $y_t$, and update GP posterior belief~\eqref{gp_posterior} using $(\mathbf{x}_t, y_t)$
	\ENDFOR
\end{algorithmic}
\caption{RM-GP-UCB/RM-GP-TS}
\label{OM_GP_UCB}
\end{algorithm}

\section{Theoretical Analysis}
\label{sec:theoretical_analysis}
\subsection{RM-GP-UCB}
\label{subsec:theory:rm_gp_ucb}
Theorem~\ref{regret_bound} presents an upper bound on the cumulative regret of RM-GP-UCB (proof in Appendix~\ref{app:first_section}).
\begin{theorem}[RM-GP-UCB]
\label{regret_bound}
Let $\delta \in (0,1)$.
Denote by $\gamma_t$ the maximum information gain about $f$ from observing any set of $t$ observations.
If RM-GP-UCB is run with:
$\lambda = 1+2/T$,
$\beta_t=B + \sigma \sqrt{2(\gamma_{t-1} + 1 + \log(4/\delta))}$,
$\tau=B + \sigma \sqrt{2(\gamma_{N} + 1 + \log(4M/\delta))}$,
$\nu_t\in [0, 1]$ and $\nu_{t+1} \leq \nu_{t}$, 
$\omega_i\geq 0$ and $\sum^M_{i=1}\omega_i=1$.
Then, with probability of $\geq 1 - 3\delta / 4$,
\begin{align}
    R_T \leq 2(\alpha+\tau) \sum^T_{t=1} \nu_t + \beta_T\sqrt{C_1 T \gamma_T} \nonumber \\
    = \widetilde{\mathcal{O}}\big( \big(\sum^M_{i=1}d_i\big) \sum^T_{t=1} \nu_t + \gamma_T\sqrt{T} \big),
\label{eq:regret:ucb:within:theorem}
\end{align}
where 
$C_1 \triangleq \frac{8}{1+\sigma^{-2}}$,
and $\alpha \triangleq \sum^M_{i=1}\omega_i \frac{N_i}{\sigma^2}(2\sqrt{2\sigma^2\log\frac{8N_i}{\delta}}+d_i).$
\end{theorem}

The second term $\gamma_T \sqrt{T}$ in the regret upper bound~\eqref{eq:regret:ucb:within:theorem} grows sub-linearly for the SE kernel for which $\gamma_T=\mathcal{O}((\log T)^{D+1})$.
Therefore, if $\nu_t$ is designed such that $\nu_t \rightarrow 0$ as $t \rightarrow \infty$, the first term also grows sub-linearly and hence RM-GP-UCB is asymptotically no-regret.


Theorem~\ref{regret_bound} holds for a given set of meta-tasks with fixed yet unknown $d_i$'s.
Note that we do not impose assumptions on the values of $d_i$'s, i.e., the similarities between the meta- and target tasks.
Therefore, Theorem~\ref{regret_bound} gives a robust regret upper bound which holds for \emph{any} given set of meta-tasks.
In other words, even in adverse scenarios where some or all meta-tasks are extremely dissimilar to the target task (i.e., when some or all $d_i$'s are very large),
RM-GP-UCB is still asymptotically no-regret, which indicates the robustness and generality of our algorithm.
This provides an assurance about the \emph{worst-case behavior} in any given scenario.\footnote{This notion of robustness is in line with that of \emph{robust optimization} (RO)~\citep{beyer2007robust} which also attempts to optimize the performance in the worst-case scenario. The difference is that RO optimizes an explicit objective, while we aim at preserving the no-regret property in the worst case.}
In our proof, the key step (Lemma~\ref{ucb_diff} in Appendix~\ref{app:first_section}) is to upper bound (by $\alpha$ in Theorem~\ref{regret_bound})
the overall error induced by the use of any given set of meta-observations, instead of the target observations at the same corresponding input locations, when calculating the acquisition function~\eqref{acq_func}.
These interpretations also explain the dependence of $\alpha$, hence the regret bound, on $d_i$ and $N_i$: 
Larger function gaps increase the error resulting from the use of the meta-observations, and 
a larger number of meta-observations also inflates the worst-case upper bound by accumulating the individual errors.
Of note, a limitation of our regret upper bound (Theorem~\ref{regret_bound}) is that it does not reflect the benefit of the use of the meta-tasks when they are indeed similar to the target task. Next, we use our theoretical analysis to give some insights on how the meta-tasks, if similar to the target task, help improve the convergence of our algorithm.

\textbf{Meta-tasks Can Improve the Convergence by Accelerating Exploration.}
In addition to characterizing the worst-case behavior, we also use our theoretical analysis to illustrate how meta-tasks can help RM-GP-UCB converge faster than standard GP-UCB.
As we have proved in Appendix~\ref{app:improved_bound}, at the early stage of the algorithm, the meta-tasks (if similar to the target task) can help RM-GP-UCB obtain a smaller regret upper bound than GP-UCB by \emph{reducing the uncertainty at the selected input}. 
Equivalently, the additional information from the meta-tasks allows RM-GP-UCB to \emph{reduce the degree of exploration at the early stage}.
Since initial exploration of BO usually incurs large regrets,
less exploration results in smaller regrets.
At later stages when $\nu_t$ becomes close to $0$, RM-GP-UCB converges to no regret at a similar rate to GP-UCB (i.e., the second term $\gamma_T \sqrt{T}$ in the regret upper bound~\eqref{eq:regret:ucb:within:theorem} dominates).

\subsection{RM-GP-TS}
\label{subsec:theory:ts}
Theorem~\ref{regret_bound_ts} gives an upper bound on the cumulative regret of RM-GP-TS (proof in Appendix~\ref{app:proof:theorem:ts}).
\begin{theorem}[RM-GP-TS]
\label{regret_bound_ts}
Define $d_i' \triangleq \max_{\mathbf{x}\in\mathcal{D}}| f(\mathbf{x}) - f_i(\mathbf{x}) |$.
With the same parameters as those defined in Theorem~\ref{regret_bound},
we have that with probability of at least $1 - 3\delta / 4$,
\[
R_T = \widetilde{\mathcal{O}}\big(\big(\sum^M_{i=1}\omega_i d_i' \big) \sum^T_{t=1} \nu_t + 
\sum^T_{t=1} \nu_t \sqrt{\gamma_t} + 
\gamma_T \sqrt{T} \big).
\]
\end{theorem}
Note that by definition, we have that $d_i' \geq d_i, \forall i$.
Similar to RM-GP-UCB, as long as $\nu_t$ is chosen such that $\nu_t \rightarrow 0$ as $t \rightarrow \infty$ and that
$\nu_t = o(1/\sqrt{\gamma_t})$,
all three terms in Theorem~\ref{regret_bound_ts} are sub-linear (for the SE kernel).
That is, RM-GP-TS is also asymptotically no-regret 
for any set of meta-tasks, even when some or all meta-tasks are dissimilar to the target task.
Moreover, comparing the extra terms in the regret upper bounds resulting from the use of the meta-tasks for both RM-GP-UCB (i.e., the first term of equation~\eqref{eq:regret:ucb:within:theorem} in Theorem~\ref{regret_bound})
and RM-GP-TS (i.e., the first two terms of Theorem~\ref{regret_bound_ts}) reveals that compared with RM-GP-UCB, RM-GP-TS suffers from a worse extra dependence on $T$ due to the meta-tasks.
Specifically, while the first terms of Theorems~\ref{regret_bound} and~\ref{regret_bound_ts} have the same dependence on $T$, the second term of Theorem~\ref{regret_bound_ts} introduces an extra dependence on $T$ which dominates the first term.
This suggests that in adverse scenarios with a large number of dissimilar tasks, RM-GP-TS may suffer from a worse convergence than RM-GP-UCB.
In other words, RM-GP-UCB enjoys a better theoretically guaranteed robustness against dissimilar tasks.



\subsection{Practical Implications}
Besides the theoretical insights, Theorems~\ref{regret_bound} and~\ref{regret_bound_ts} also provide two natural hints to the practical algorithmic design.
Firstly, note that both Theorems hold for all choices of meta-weights $\omega_i$'s.
Therefore,
we have the flexibility to choose the optimal $\omega_i$'s (i.e., learn the task similarity) by minimizing the regret upper bounds in Theorems~\ref{regret_bound} and~\ref{regret_bound_ts}.
Secondly, the first term in Theorem~\ref{regret_bound} suggests that we can lower the regret by making $\nu_t$ (i.e., the influence of the meta-tasks) decay faster if $\alpha$ in Theorem~\ref{regret_bound} (i.e., an upper bound on the error produced by using the meta-tasks) is larger.
The same reasoning applies to Theorem~\ref{regret_bound_ts}, i.e., we can decay $\nu_t$ faster if $\sum_{i=1,\ldots,M}\omega_i d_i'$ in Theorem~\ref{regret_bound_ts} is larger.
Both design choices can further strengthen the robustness of our algorithms against dissimilar meta-tasks by lessening their impact.
Unfortunately, they both require the values of the function gaps $d_i$'s which are unavailable.\footnote{$d_i$ can be used as an estimate of $d_i'$ since $d_i' \geq d_i$ (Sec.~\ref{subsec:theory:ts}).}
To this end, we devise a principled technique to estimate upper bounds on the function gaps, which is presented
in the next section.


\section{Online Meta-Weight Optimization}
\label{sec:online_regret_minimization}
In this section, we first introduce a principled technique for estimating high-probability upper bounds on the function gaps (Sec.~\ref{sec:estimate_d}) that, 
when combined with Theorems~\ref{regret_bound} and~\ref{regret_bound_ts}, naturally yields a principled method for optimizing the meta-weights 
through regret minimization via online learning.

\subsection{Online Estimation of Function Gaps}
\label{sec:estimate_d}
Inspired by the confidence region constructed by GP-UCB~\citep{srinivas2009gaussian,chowdhury2017kernelized} that contains the target function with high probability, 
after $t\geq 1$ target observations have been collected, define
\begin{equation}
\begin{split}
    &U_{t,i,j}\triangleq\mu_{t}(\mathbf{x}_{i,j}) + \beta_{t+1}\sigma_{t}(\mathbf{x}_{i,j})\, , \\
    &L_{t,i,j}\triangleq\mu_{t}(\mathbf{x}_{i,j}) - \beta_{t+1}\sigma_{t}(\mathbf{x}_{i,j}),
\end{split}
\label{UL}
\end{equation}
where $\mathbf{x}_{i,j}$ is the $j$-th input of meta-task $i$, $\beta_{t+1}$ is previously defined in Theorem~\ref{regret_bound}, and 
$U_{t,i,j}$ and $L_{t,i,j}$ can be interpreted, respectively, as the upper and lower confidence bounds of $f$ at $\mathbf{x}_{i,j}$ after $t$ iterations.
Lemma \ref{gaussian_bound} (Appendix \ref{app:first_section}) implies that
with probability of at least $1 - \delta/4$ 
($\delta$ is defined in Theorem \ref{regret_bound}):
$L_{t,i,j} \leq f(\mathbf{x}_{i,j}) \leq U_{t,i,j}, \forall t, i, j\ $.
Consequently, the following result gives high-probability upper bounds on the function gaps (proof in Appendix~\ref{app:upper_bound_func_gap}):
\begin{lemma}
\label{estimate_di}
With probability of at least 
$1 - \delta$,
\begin{equation*}
\begin{split}
    d_i \leq &\sqrt{2\sigma^2\log \Big[\big(8{\sum}^M_{i=1}N_i\big)/\delta\Big] } + \\
    &\max_{j=1,...,N_i}\left[\max \{|y_{i,j} - U_{t,i,j}|, |y_{i,j} - L_{t,i,j}|\}\right] \triangleq \overline{d}_{i,t},
\end{split}
\end{equation*}
for $t=1,\ldots, T$ and $i=1,\ldots,M$.
\end{lemma}
Unlike $d_i$, $\overline{d}_{i,t}$ can be efficiently calculated as its incurred time is linear in both $M$ and $N$.

\subsection{Online Meta-Weight Optimization through Regret Minimization}
\label{sec:online_weight_estimation}
In this section, we focus on RM-GP-UCB since the analysis for RM-GP-TS (deferred to Appendix~\ref{app:meta:weight:optimization:ts}) is similar and leads to the same update rules for $\omega_i$'s and $\nu_t$.
Combining Lemma~\ref{estimate_di} and Theorem~\ref{regret_bound} allows us to derive the following result for RM-GP-UCB (proof in Appendix~\ref{app:prop_1_proof}):
\begin{proposition}[RM-GP-UCB]
\label{regret_bound_2}
With probability of $\geq 1 - \delta$,
\begin{equation*}
\begin{split}
    R_T \leq &\frac{2}{\sigma^2} \Big[{\sum}^T_{t=1}\boldsymbol{\omega}^{\top} \boldsymbol{l}_t\Big] \Big[{\sum}^T_{t=1}\nu_t\Big] +\\
    &\quad 2\tau {\sum}^T_{t=1} \nu_t + \beta_T\sqrt{C_1 T \gamma_T},
\end{split}
\end{equation*}
where $\boldsymbol{\omega}\triangleq [\omega_i]_{i=1,\ldots,M}$, $\boldsymbol{l}_t\triangleq [l_{i,t}]_{i=1,\ldots,M}$, and $l_{i,t}\triangleq N_i (2\sqrt{2\sigma^2\log({8N_i/\delta})}+\overline{d}_{i,t})$.
\end{proposition}
Note that $\boldsymbol{l}_t$ can be efficiently computed after the $t$-th observation is collected.
The regret upper bound in Proposition~\ref{regret_bound_2} depends on  $\omega_i$'s only through the term $\sum^T_{t=1}\boldsymbol{\omega}^{\top} \boldsymbol{l}_t$  
which can be minimized to derive the optimal meta-weights.
This constitutes an \emph{online learning} problem with linear loss function and its solution $\boldsymbol{\omega}$ constrained to a probability simplex. 
An additional entropic regularization term is usually preferred so as to encourage a solution with a large entropy to stabilize it~\citep{bubeck2011introduction}.
This corresponds to encouraging the meta-weights to spread across a large number of meta-tasks, in order to discover as many similar meta-tasks as possible. 
As a result, by using $1/\eta$ ($\eta>0$) as the regularization parameter, the optimal $\boldsymbol{\omega}$ in iteration $t>1$ is obtained by solving the following optimization problem:
\begin{equation}
\label{online_learning_objective}
    \boldsymbol{\omega} \triangleq \mathop{\arg\min}_{\boldsymbol{\omega}'} {\sum}^{t-1}_{s=1}\boldsymbol{\omega}'^{\top} \boldsymbol{l}_s + 
    \eta^{-1}{\sum}^M_{i=1}\omega_i'\log \omega_i', 
\end{equation}
subject to the constraints: $\omega_i'\geq 0,\forall i$ and $\sum^M_{i=1}\omega_i'=1$.
When $t=1$, the optimal $\boldsymbol{\omega}$ follows from optimizing only the entropic regularization term, thus naturally entailing the uniform distribution $\omega_{i}=1/M,\forall i$.
Consequently,~\eqref{online_learning_objective} corresponds exactly to the online learning algorithm called \emph{Follow-The-Regularized-Leader} with an entropic regularizer~\citep{bubeck2011introduction} 
where $\eta$ represents the learning rate. Its optimal solution in iteration $t$ can be derived via Lagrange multiplier (Appendix~\ref{lagran}) as
\begin{equation}
\begin{split}
    \omega_{i} = \frac{e^{-\eta \sum^{t-1}_{s=1}l_{i,s}}}{\sum^M_{j=1}e^{-\eta \sum^{t-1}_{s=1}l_{j,s}}} \stackrel{(a)}{\approx}\frac{e^{-\eta N \sum^{t-1}_{s=1}\overline{d}_{i,s}}}{\sum^M_{j=1}e^{-\eta N \sum^{t-1}_{s=1}\overline{d}_{j,s}}},
\end{split}
\label{estimate_wi}
\end{equation}
for $i=1,\ldots,M$ where (a) follows from assuming that all $N_i$'s are close to $N$ for simplicity.
With this simplification, the first (noise-correction) term in the expression of $\overline{d}_{i,t}$ from Lemma~\ref{estimate_di} also cancels out, thus 
leading to a neat and elegant update rule for $\omega_i$ 
which we use in all our experiments.
As is evident from~\eqref{estimate_wi}, the update of $\omega_i$'s in each iteration only involves computing $\overline{d}_{i,t}$'s (incurring $\mathcal{O}(MN)$ time), 
adding one term to the summation on the exponent ($\mathcal{O}(M)$ time), and a normalization step ($\mathcal{O}(M)$ time), all of which are computationally cheap.
Intuitively,~\eqref{estimate_wi} assigns small weights to meta-tasks with a large cumulative estimated function gap which implies a less similar meta-task.

In addition, $\overline{d}_{i,t}$ from Lemma~\ref{estimate_di} also allows for the estimation of an upper bound on $\alpha$ (Theorem~\ref{regret_bound}) in each iteration 
(i.e., by simply replacing $d_i$ with $\overline{d}_{i,t}$) and thus facilitates an adaptive selection of $\nu_t$, as mentioned in Sec.~\ref{sec:theoretical_analysis}. 
Specifically, we set $\nu_1= 1$ and $\nu_t = \nu_{t-1}\times \min(r, (\sum^M_{i=1}\omega_i\overline{d}_{i,t})^{-\epsilon})$ for $t>1$,
in which we have dropped the constants independent of $\overline{d}_{i,t}$. 
$r \in (0,1)$ represents the minimum decaying rate to ensure the monotonic decay of $\nu_t$ such that RM-GP-UCB is no-regret (Sec.~\ref{subsec:theory:rm_gp_ucb}).
$\epsilon>0$ controls the aggressiveness of the adaptive decay such that a larger $\epsilon$ results in a faster decay.
With this scheme, when the overall estimated function gaps are larger (the meta-tasks are dissimilar), 
$\nu_t$ decays faster and thus the impact of the meta-tasks vanishes more quickly.


Importantly, when optimizing the values of $\omega_i$'s and $\nu_t$ as described above, we have taken into account the limitation of our regret upper bounds (i.e., they do not reflect the benefit of the use of the meta-tasks, Sec.~\ref{subsec:theory:rm_gp_ucb}) and hence incorporated additional practical considerations. Specifically, we have optimized the $\omega_i$'s with an additional entropic regularization term to encourage the $\omega_i$'s to spread across a large number of meta-tasks, and optimized $\nu_t$ such that it decreases faster if $\alpha$ (i.e., an upper bound on the error induced by the use of the meta-tasks) is larger.

\section{Experiments and Discussion}
\label{sec:experiment}
We use extensive real-world experiments to compare our RM-GP-UCB and RM-GP-TS with \emph{(1)} standard GP-UCB, two other GP-based scalable meta-BO algorithms: 
\emph{(2)} \emph{ranking-weighted Gaussian process ensemble} (RGPE)~\citep{feurer2018scalable} and \emph{(3)} \emph{transfer acquisition function} (TAF)~\citep{wistuba2018scalable},
\emph{(4)} multitask BO (MTBO)~\citep{swersky2013multi}, and \emph{(5)} the method from~\citep{wang2018regret} named \emph{point estimate meta-BO} (PEM-BO).
Since MTBO is relatively not scalable (Sec.~\ref{sec:intro}), we only apply it to those experiments with relatively small number of meta-tasks and observations for which MTBO is still computationally feasible.
We compare with PEM-BO~\citep{wang2018regret} in the experiment that is most favorable for this algorithm, i.e., with the largest number of meta-observations and a discrete domain (refer to Sec.~\ref{subsec:automl} for more details).
We set $\eta= 1/N$, $\epsilon= 0.7$ and $r= 0.7$ in all real-world experiments to demonstrate the robustness of our algorithm against the choice of these parameters.
In practice, the upper bound on the function gap, $\overline{d}_{i,t}$, from Lemma~\ref{estimate_di} may be too conservative; 
so, we replace the outer $\max$ operator over $j=1,...,N_i$ with the empirical mean in our experiments.\footnote{We explore the difference between 
them
in Appendix~\ref{app:subsec_max_mean}.}
Some details and results are deferred to Appendix~\ref{app:experiments} due to lack of space.
All error bars represent standard errors.
Our code is available at \url{https://github.com/daizhongxiang/meta-BO}.

\subsection{Synthetic Experiments}
\label{exp:synth}
We firstly explore the effectiveness of our online meta-weight optimization (Sec.~\ref{sec:online_regret_minimization}) and the impact of different algorithmic parameters by optimizing synthetic functions drawn from GPs.
For each objective function, we construct $M=4$ meta-tasks with $N=N_i=20$ meta-observations each. 
The function gaps are chosen as $d_1=d_2=0.05$ and $d_3=d_4=4.0$ such that 
the last $2$ meta-tasks are dissimilar to the target task.
Fig.~\ref{fig:synth_func_results}a plots the simple regrets averaged over $20$ randomly drawn synthetic functions, with $\eta N =1.0$, $\epsilon=0.7$, and $r=0.7$.
The figure shows that RM-GP-UCB with online meta-weight optimization 
significantly outperforms RM-GP-UCB with fixed meta-weights ($\omega_i=1/4$ for all $i$).
Fig.~\ref{fig:synth_func_results}b plots the meta-weights optimized by RM-GP-UCB for the red curve in Fig.~\ref{fig:synth_func_results}a, showing that the weights given to the last two meta-tasks which are dissimilar to the target task are rapidly reduced.
These results verify the effectiveness of online meta-weight optimization in reducing the impact of dissimilar meta-tasks.

\begin{figure}
	\centering
	\begin{tabular}{cc}
		\hspace{-4mm} \includegraphics[width=0.47\linewidth]{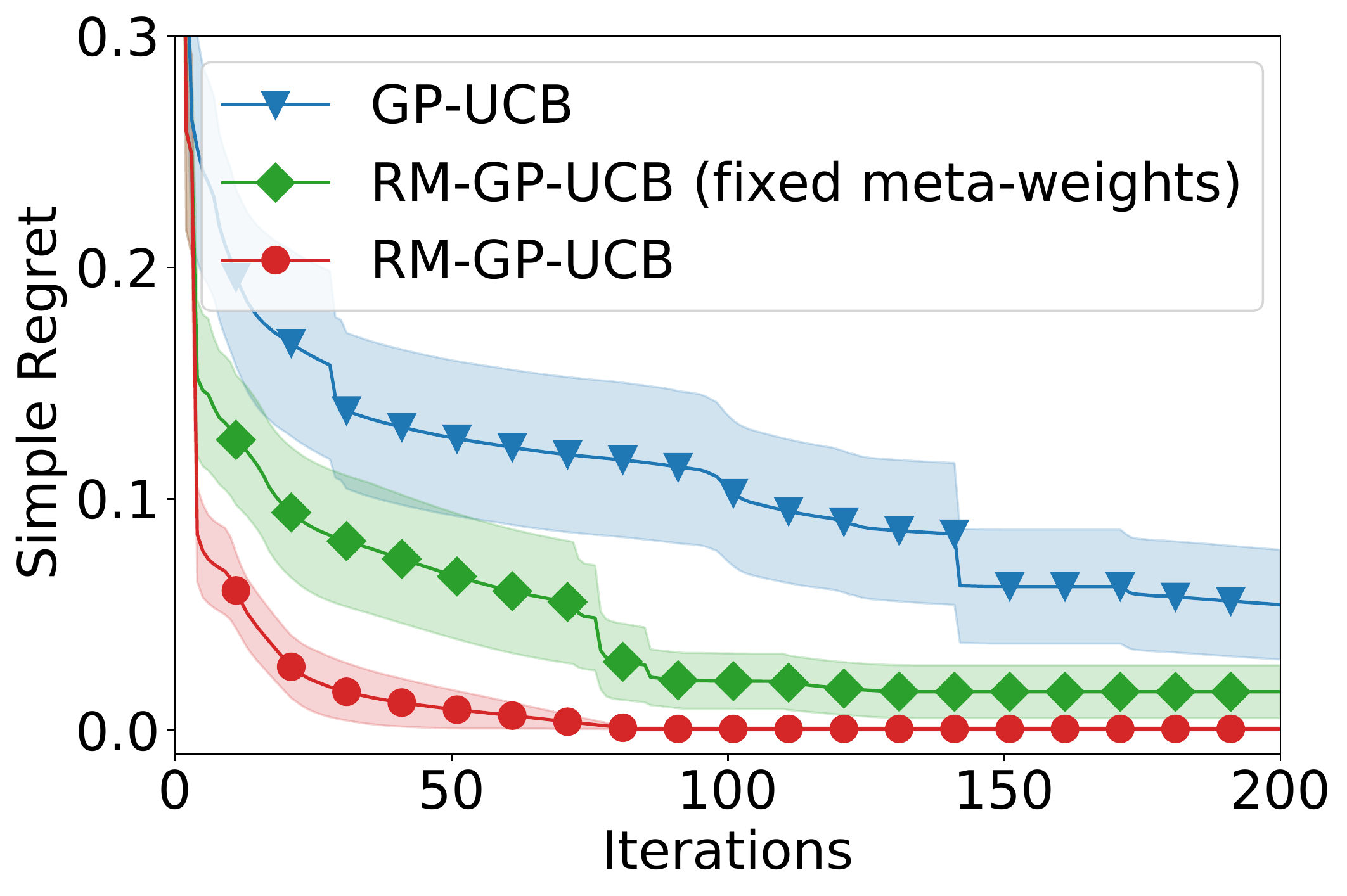} & \hspace{-6.2mm}
		\includegraphics[width=0.47\linewidth]{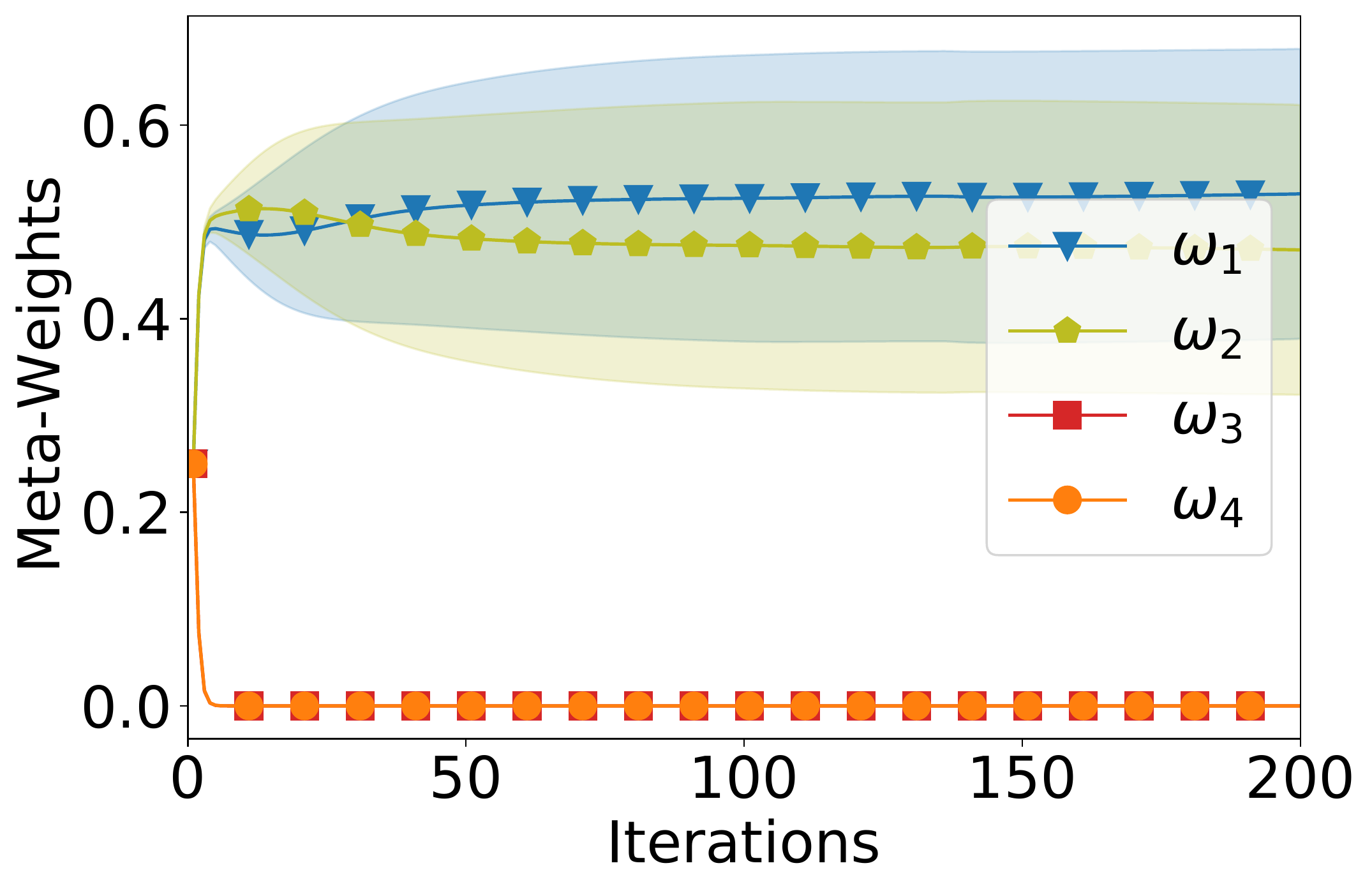}\\
		{(a)} & {(b)}\\
		\hspace{-4mm} \includegraphics[width=0.47\linewidth]{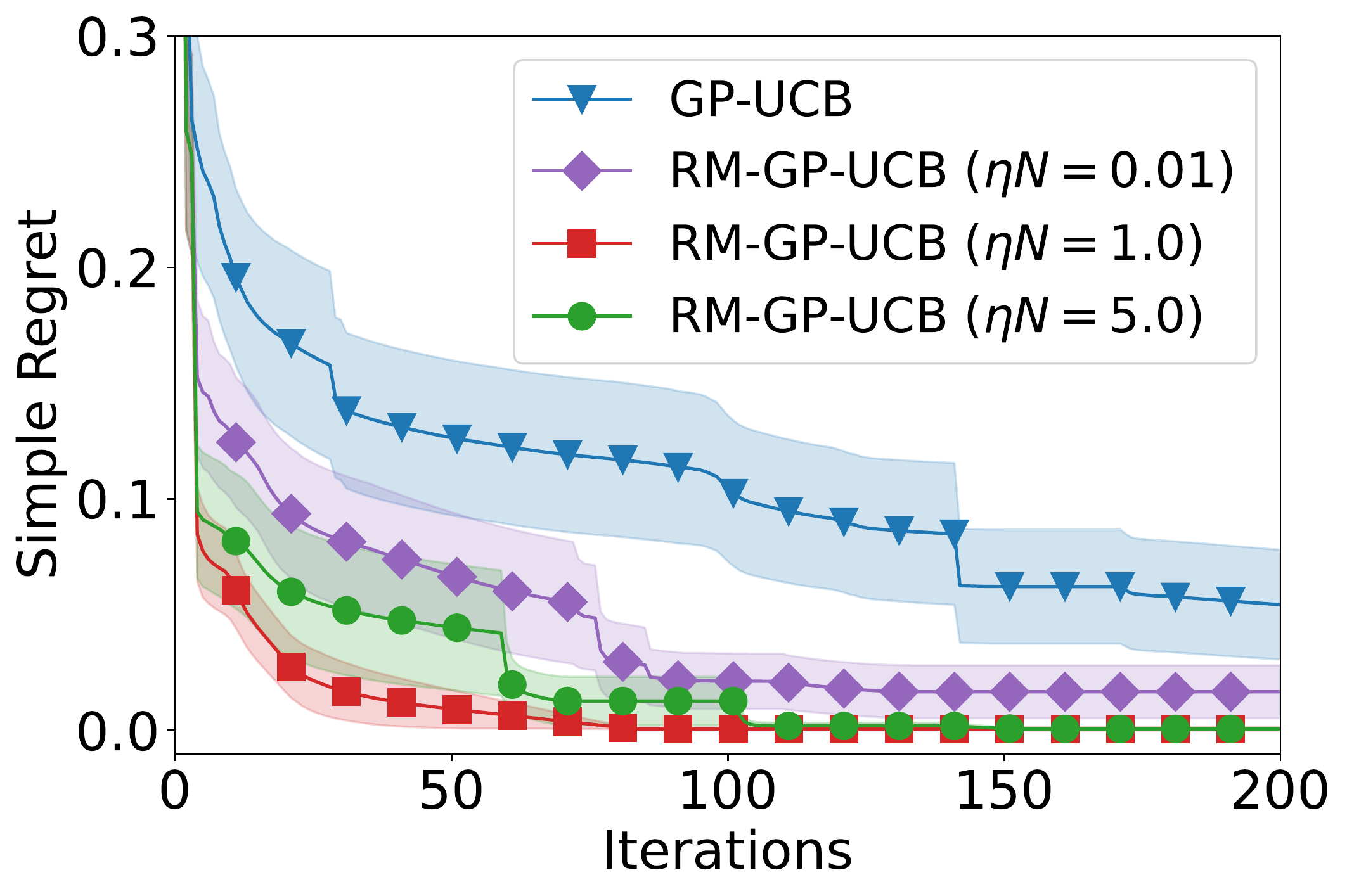}& \hspace{-6.2mm}
		\includegraphics[width=0.47\linewidth]{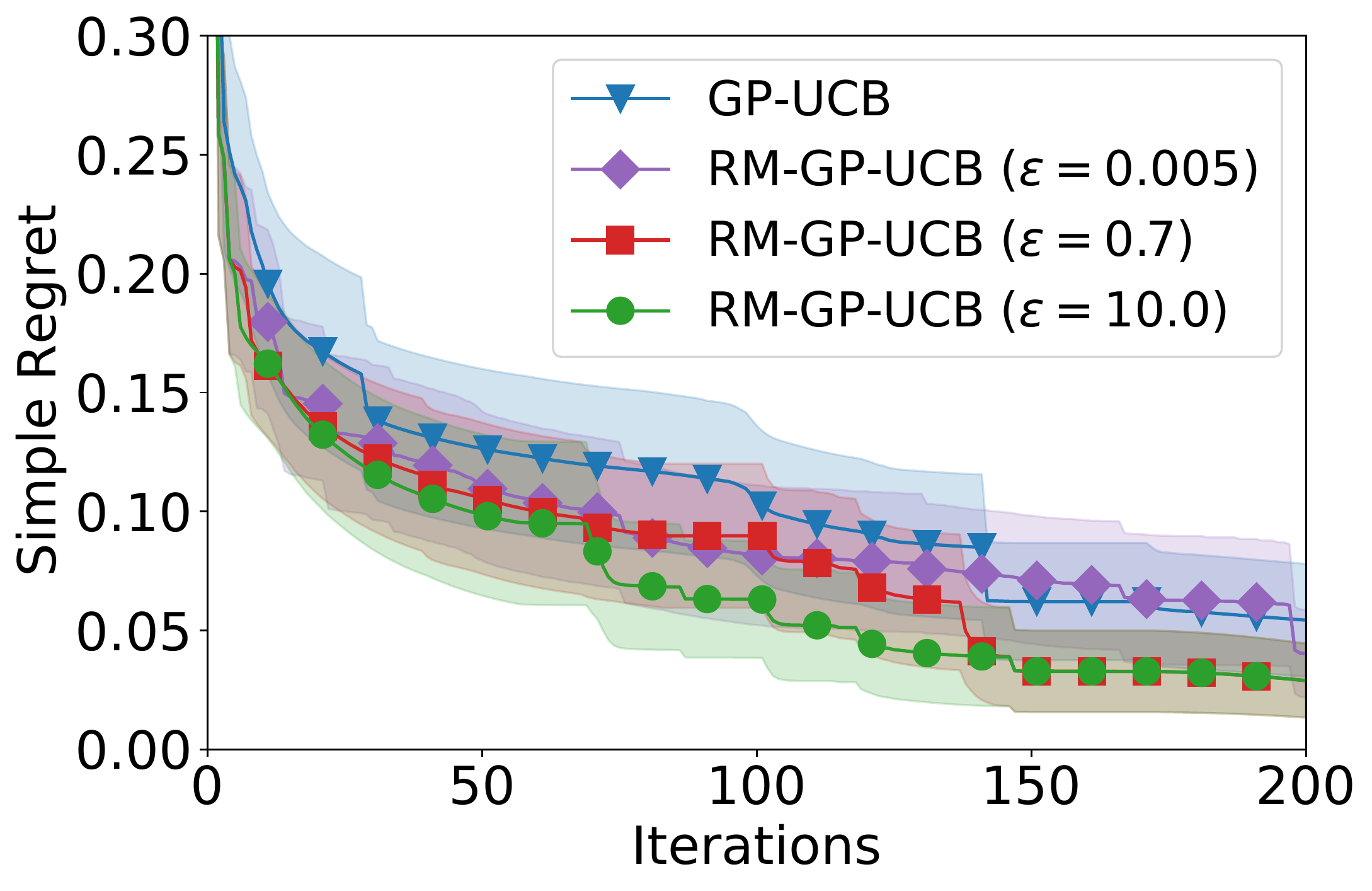}\\
		{(c)} & {(d)}\\
		\includegraphics[width=0.47\linewidth]{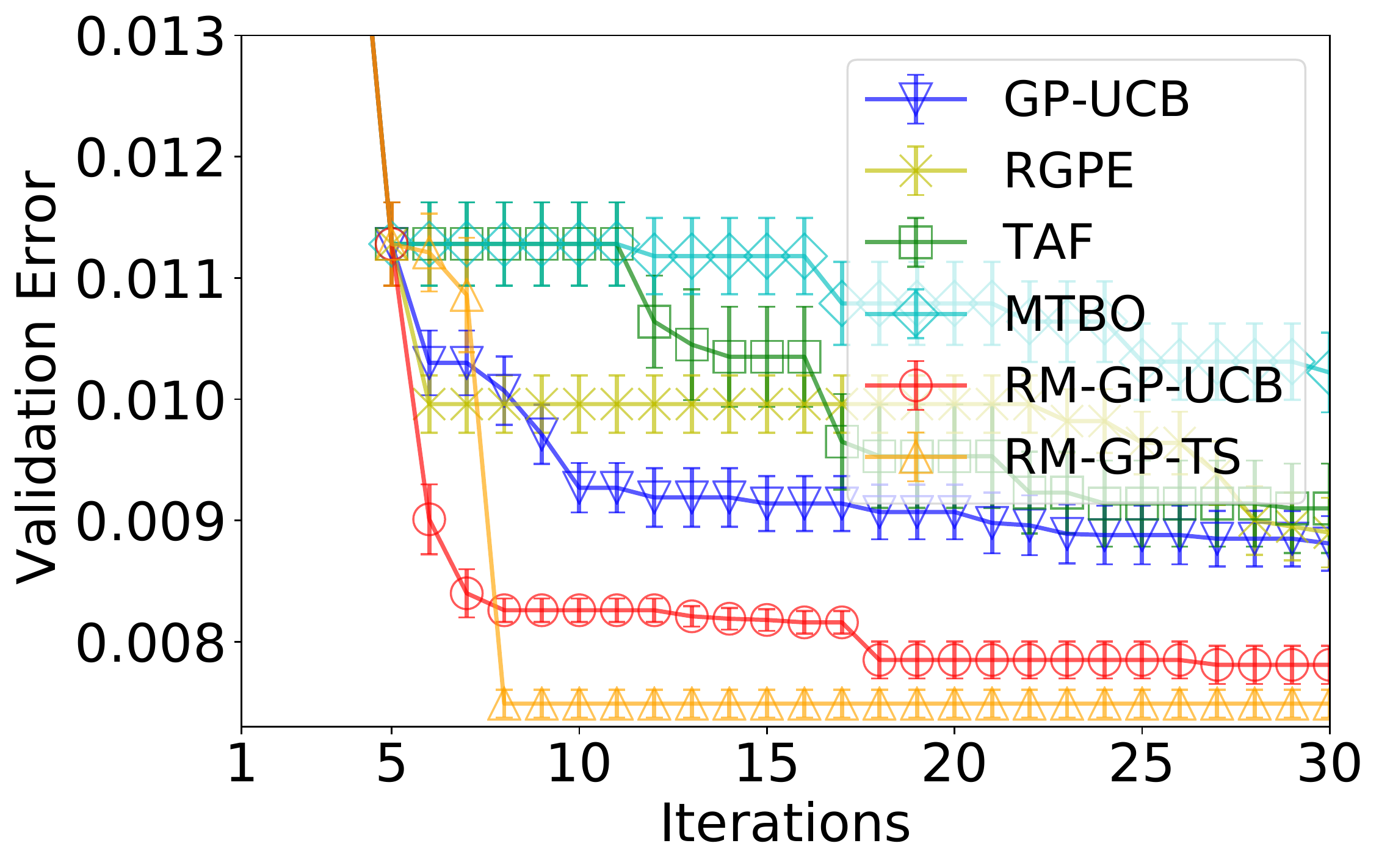} & \hspace{-5.3mm}
		\includegraphics[width=0.47\linewidth]{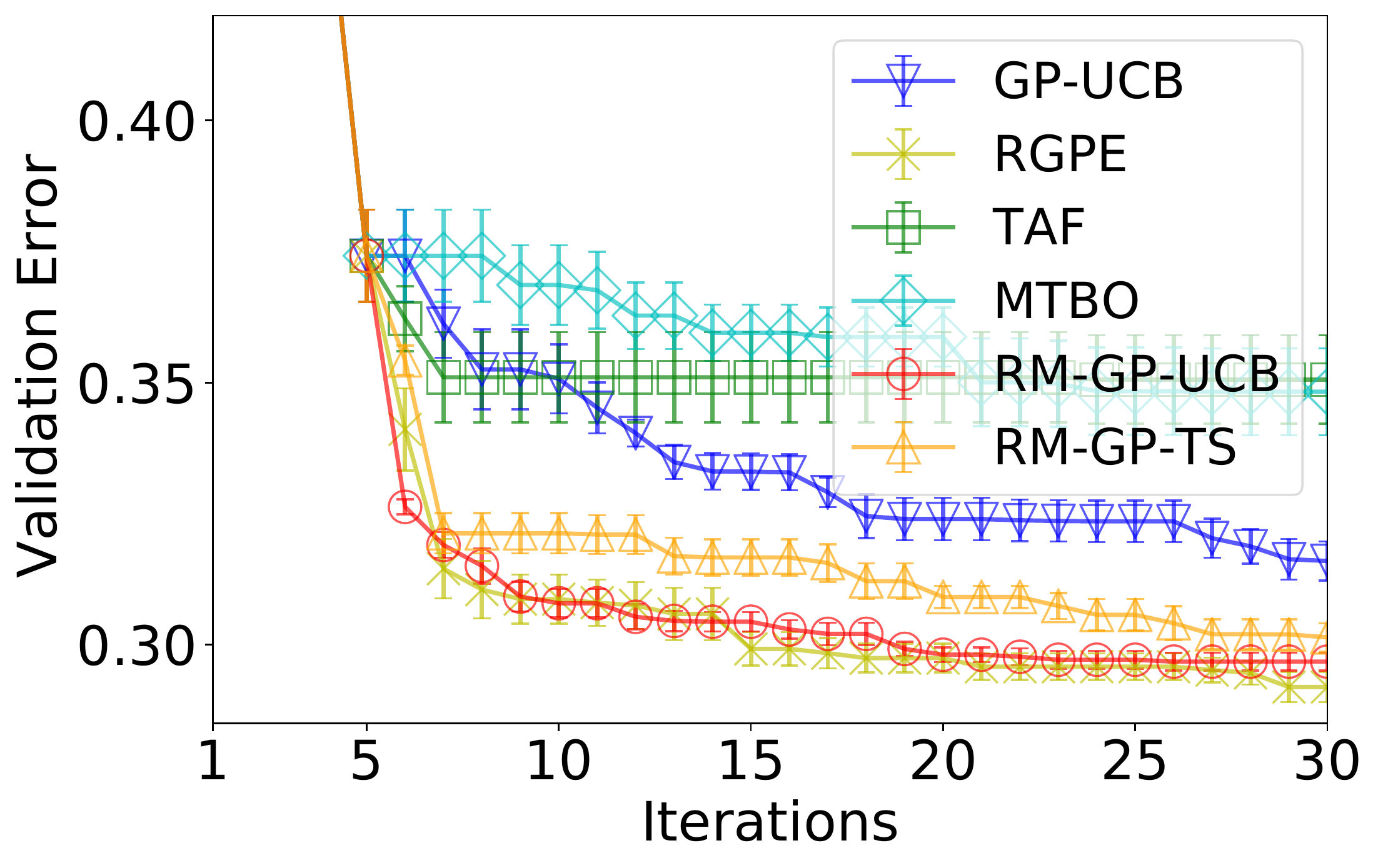}\\
		{(e)} & {(f)}\\
	\end{tabular}
	\caption{(a) The simple regret and (b) meta-weights optimized by RM-GP-UCB.
	The impact of (c) $\eta$ and (d) $\epsilon$. 
	Best validation error of CNN for (e) MNIST and (f) CIFAR-$10$.}
	\label{fig:synth_func_results}
\end{figure}

We also investigate the impact of $\eta$ and $\epsilon$.
Fig.~\ref{fig:synth_func_results}c shows the performances of different values of $\eta$, with fixed $\epsilon=0.7$ and $r=0.7$.
The figure demonstrates that an excessively small $\eta$ (purple curve) negatively impacts the performance, since RM-GP-UCB is unable to quickly reduce the weights of dissimilar meta-tasks (Fig.~\ref{fig:meta_weights_curves}a in Appendix~\ref{app:synth}).
Moreover, an overly large $\eta$ is also slightly detrimental (green curve) since it rapidly assigns a large weight to one of the two useful meta-tasks 
(Fig.~\ref{fig:meta_weights_curves}c in Appendix~\ref{app:synth}), thus failing to utilize the other useful meta-task.
Fig.~\ref{fig:synth_func_results}d illustrates the impact of $\epsilon$ when all function gaps are large: $d_i=8.0$ for all $i$.\footnote{We use $\eta=1/N$ and fix $r$ at a large value ($0.99$) so that the decaying rate of $\nu_t$ is purely decided by $\epsilon$.}
The figure shows that even when all meta-tasks are dissimilar, our adaptive selection of $\nu_t$
is able to diminish their negative impact and allow RM-GP-UCB to perform comparably to GP-UCB.
Furthermore, in this adverse scenario, a faster decline of the impact of the meta-tasks (i.e., faster decay of $\eta_t$ via larger $\epsilon$) leads to slightly better performance.

\subsection{Real-world Experiments}
\label{subsec:automl}
\textbf{Hyperparameter Tuning for Convolutional Neural Networks (CNNs).} 
We apply meta-BO to hyperparameter tuning of ML models with the previous tasks using other datasets as the meta-tasks.
We tune $3$ hyperparameters of CNNs using $4$ widely used image datasets: MNIST, SVHN, CIFAR-$10$ and CIFAR-$100$.
Specifically, in each experiment, one of the four datasets is selected to produce the target function $f$  
which maps a hyperparameter setting to a validation accuracy obtained using this dataset.
The meta-observations are generated from $3$ independent BO tasks (each with $50$ iterations) using the other $3$ datasets, 
i.e., $M=3$ and $N_i=50$ for $i=1,2,3$ in all $4$ experiments. 
The results for MNIST and CIFAR-$10$ are plotted in Figs.~\ref{fig:synth_func_results}e and~\ref{fig:synth_func_results}f while
the remaining results are shown in Appendix~\ref{app:auto_ml} (Fig.~\ref{fig:cnn_2}).
The results show that RM-GP-UCB is the only method that consistently performs well in all tasks, and that RM-GP-TS performs much better than RM-GP-UCB (and other methods) for MNIST, yet worse in the other tasks.
We have also adopted the 
Omniglot dataset~\citep{lake2015human} commonly used in meta-learning, for which RM-GP-UCB performs the best (Fig.~\ref{fig:omniglot}, Appendix~\ref{app:auto_ml}).

\textbf{Non-stationary Bayesian Optimization.}
Meta-BO can be naturally applied to non-stationary BO problems in which the unknown objective function evolves over time  
since the previous (outdated) observations can be treated as the meta-observations. 
We consider here automated ML for clinical diagnosis.
As the data from new patients becomes available regularly, clinicians often need to 
periodically 
update the dataset and
re-run hyperparameter optimization 
for the ML model used for clinical diagnosis.
This stimulates the question as to whether the previous hyperparameter tuning tasks using the outdated patients data can help accelerate the current task.
We consider the problem of diabetes prediction~\citep{smith1988using} with \emph{logistic regression} (LR) and tune $3$ LR hyperparameters.
We create $5$ progressively growing datasets (including the full dataset), 
treating (the hyperparameter tuning task using) the full dataset as the target task and the $4$ smaller datasets as the meta-tasks.
Specifically, the entire dataset consists of 768 data instances, among which 77 instances are set aside to measure the validation accuracy. 
The sizes of the 5 progressively growing training datasets (i.e., corresponding to the 4 meta-tasks and the target task, respectively) are 138, 276, 414, 552, and 691.
The results (Fig.~\ref{fig:cnn}a) show that RM-GP-TS outperforms all other methods in this task.
Moreover, we also compare the runtime of different methods in Fig.~\ref{fig:cnn}b: RM-GP-TS is significantly more efficient than all other methods, and the methods building separate GP surrogates for different tasks (i.e. RM-GP-UCB, RGPE and TAF) are more efficient than MTBO which includes all observations in a single GP (Sec.~\ref{sec:intro}).

\textbf{Hyperparameter Tuning for Support Vector Machines (SVMs).} 
We also tune the hyperparameters of SVMs using a tabular benchmark dataset~\citep{wistuba2015learning} which has also been adopted by RGPE~\citep{feurer2018scalable}.
The benchmark was constructed by evaluating a fixed grid of $288$ SVM hyperparameter configurations using $50$ \emph{diverse} datasets (i.e., containing many dissimilar tasks).
We follow the setting used by RGPE~\citep{feurer2018scalable}: In every trial, we fix one of the tasks as the target task, and the remaining $M=49$ tasks as the meta-tasks; for every meta-task $i$, we randomly select $N_i=50$ hyperparameter configurations 
as the meta-observations.
The results in Fig.~\ref{fig:cnn}c 
show that our RM-GP-UCB performs comparably to RGPE, 
outperforming the other methods; 
RM-GP-TS performs unsatisfactorily in this experiment with diverse tasks.
Of note, this experiment has the most favorable setting for PEM-BO~\citep{wang2018regret} because (a) PEM-BO has been shown to require a massive set of meta-observations ($\geq5000$)
to perform well~\citep{wang2018regret}, and this experiment has the largest number ($49\times50=2450$) of meta-observations among all experiments;
(b) the domain here is discrete, which is much easier for the application of PEM-BO.



\begin{figure}
	\centering
	\begin{tabular}{cc}
		\hspace{-4mm} \includegraphics[width=0.47\linewidth]{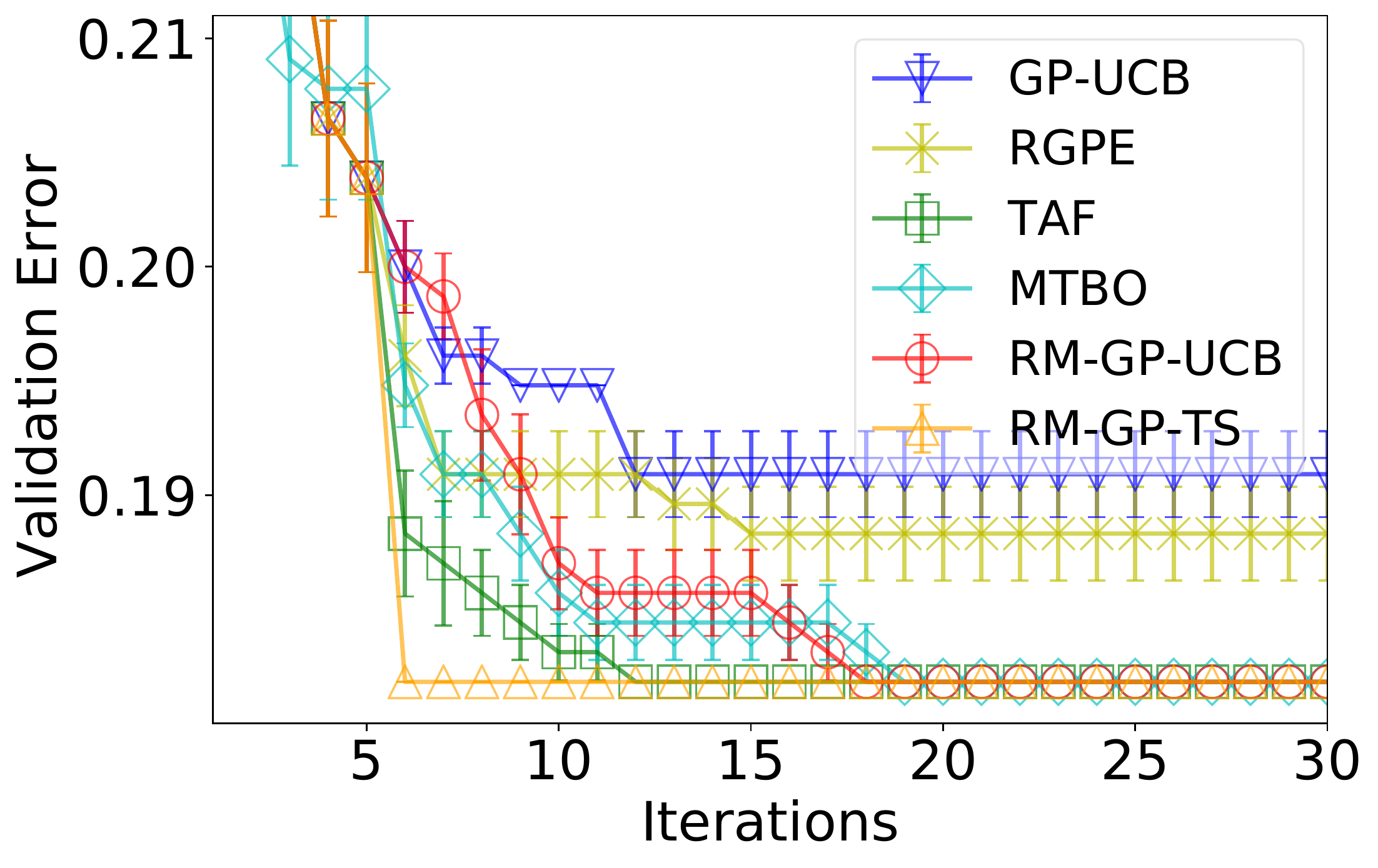} & \hspace{-4mm}
		\includegraphics[width=0.47\linewidth]{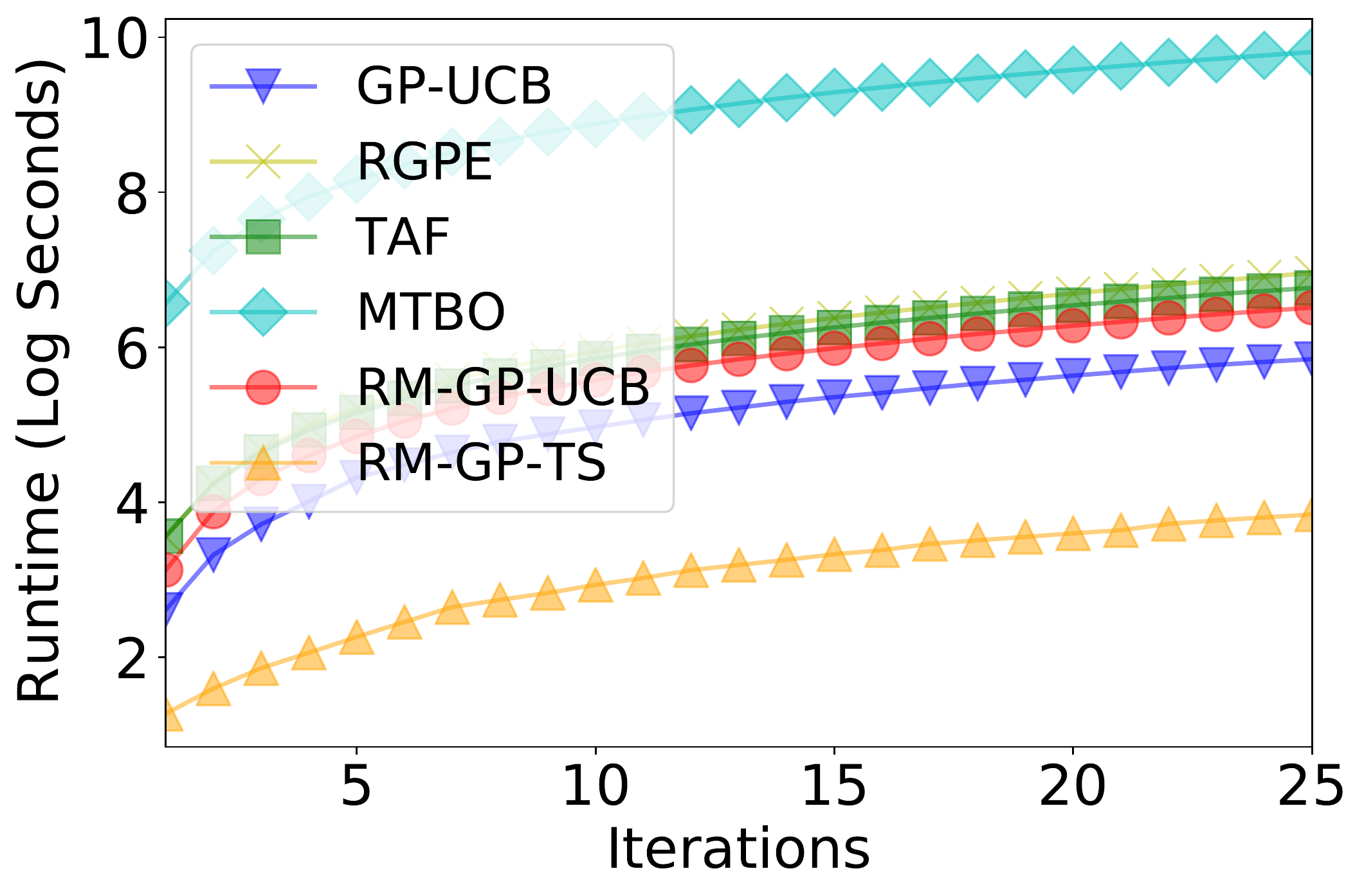}\\
		{(a)} & {(b)}\\
		\hspace{-4mm} \includegraphics[width=0.47\linewidth]{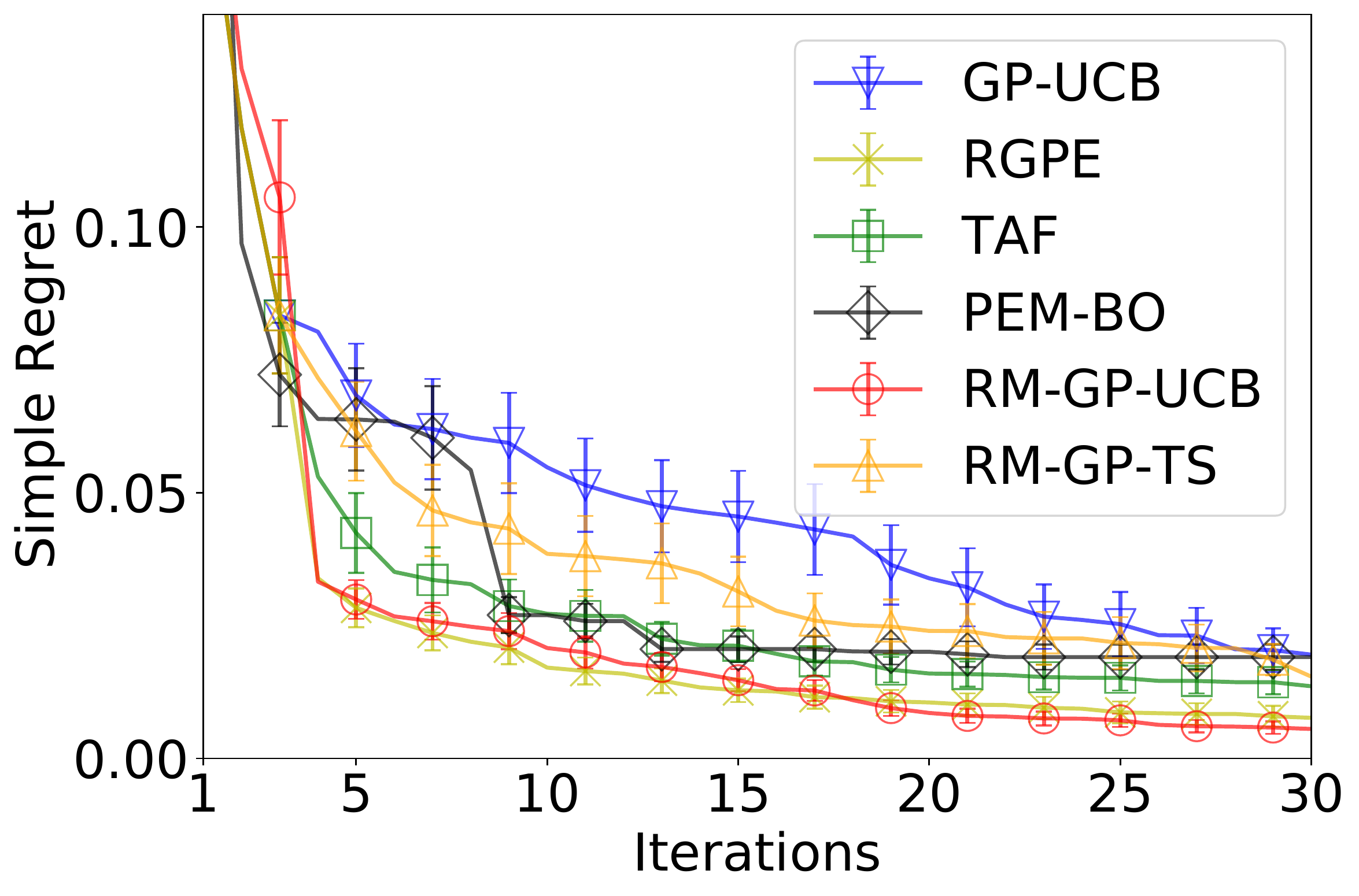}& \hspace{-4mm}
		\includegraphics[width=0.47\linewidth]{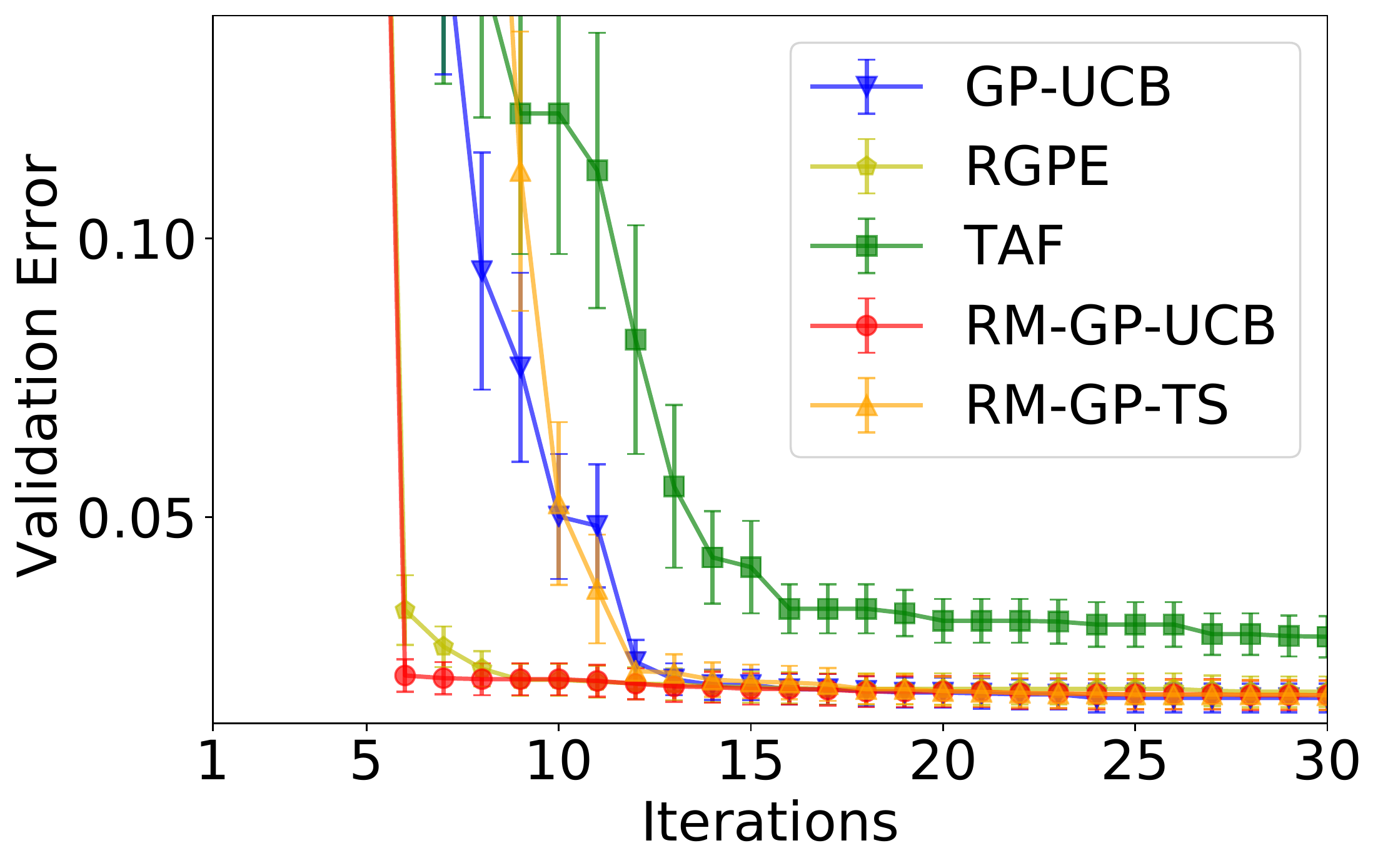}\\
		{(c)} & {(d)}\\
		\hspace{-4mm} \includegraphics[width=0.47\linewidth]{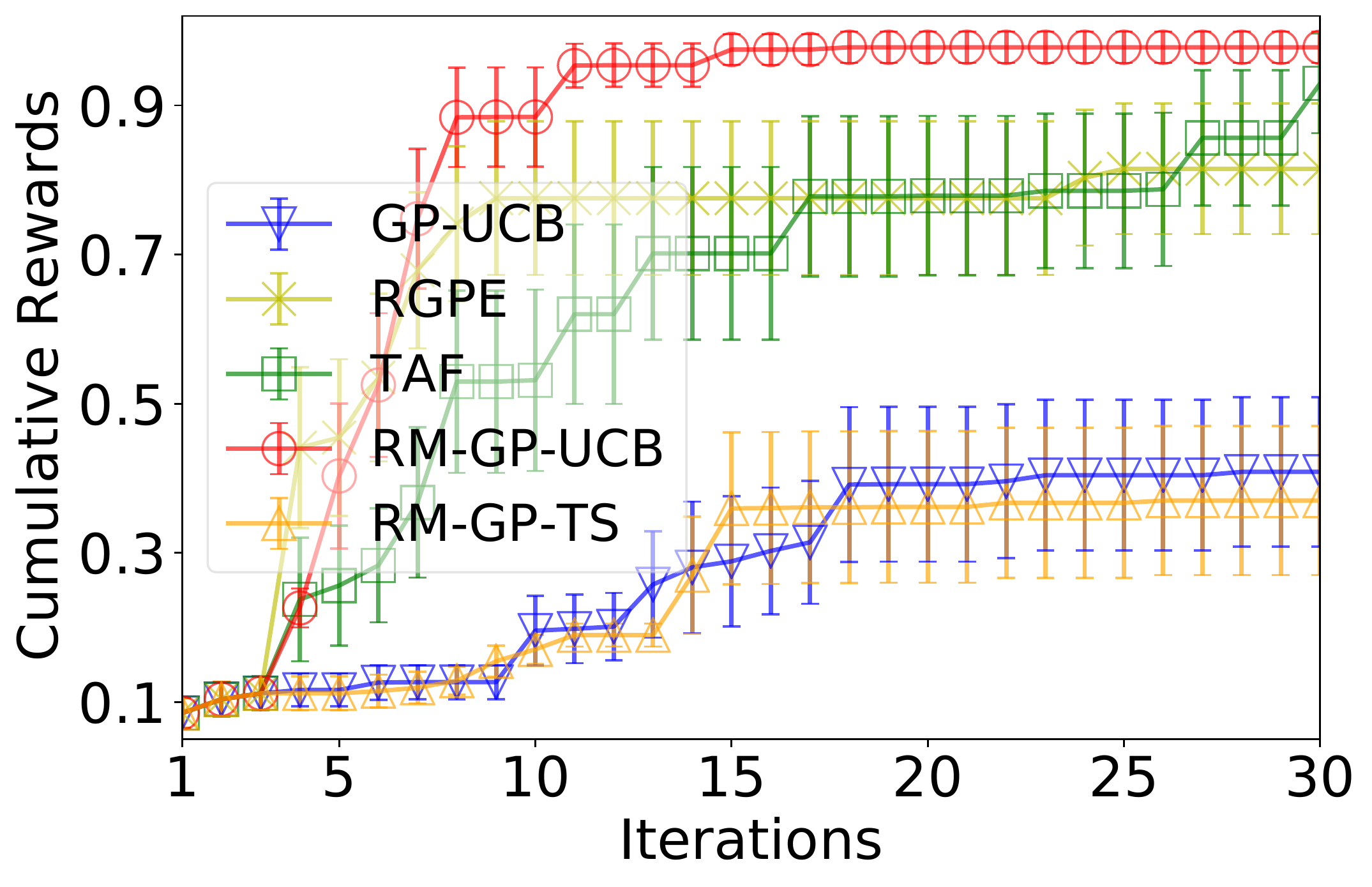} & \hspace{-4mm}
		\includegraphics[width=0.47\linewidth]{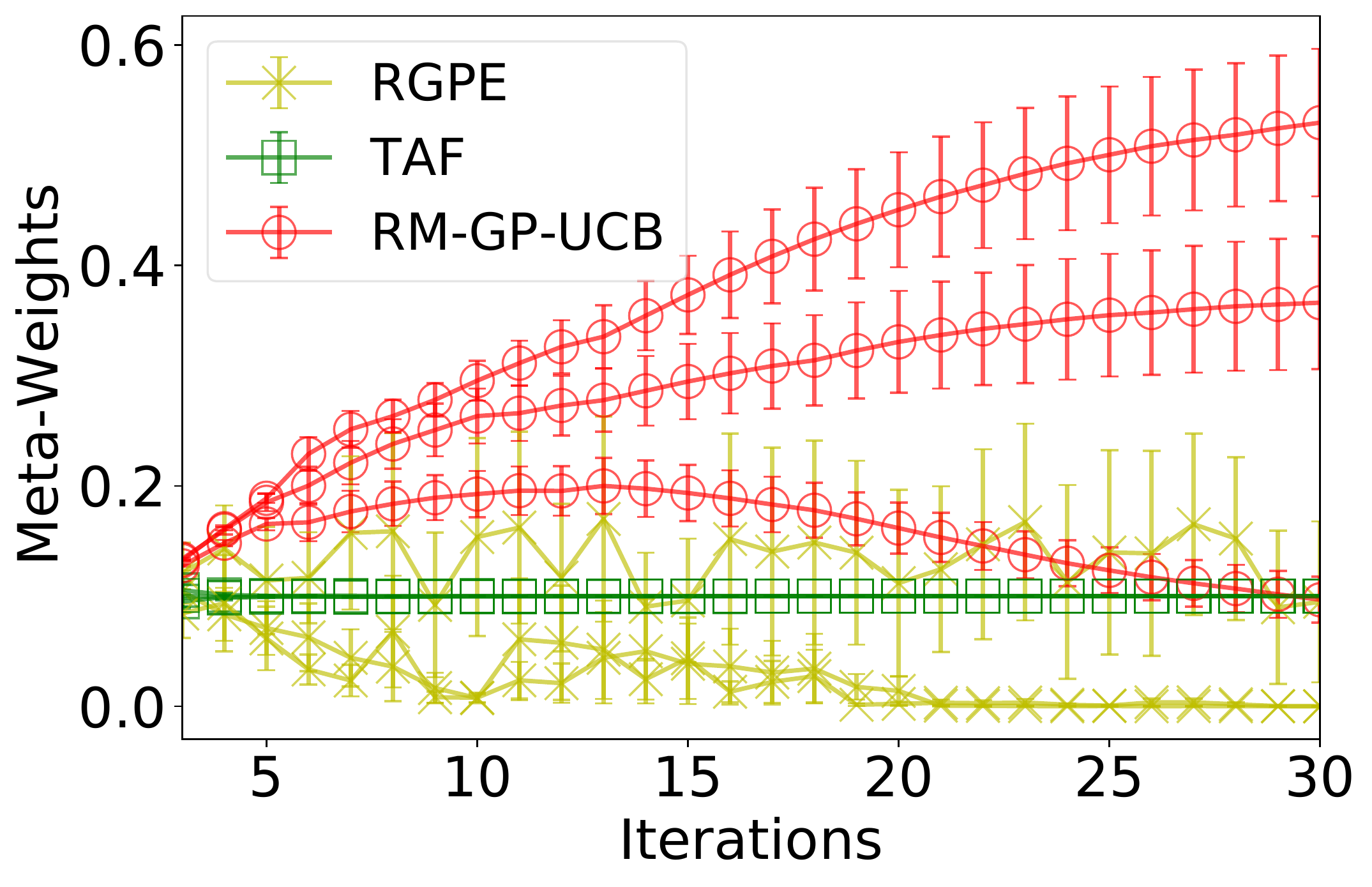}\\
		{(e)} & {(f)}
	\end{tabular} 
	\caption{
	(a) Best validation error of LR for diabetes diagnosis.
	(b) Runtime 
	in non-stationary BO experiment.
	(c) Simple regret on SVM benchmark.
	(d) Best validation errors for HAR.
	(e) Best cumulative rewards and (f) learned meta-weights for the $3$ similar meta-tasks for the RL experiment.}
	\label{fig:cnn}
\end{figure}

\textbf{Human Activity Recognition (HAR).}  
HAR using mobile devices has promising applications in various domains such as healthcare~\citep{reyes2013human}.
When optimizing the configurations (hyperparameters) of the activity prediction model (ML model) for a subject, the previous optimization tasks for other subjects might be helpful.
However, cross-subject transfer in HAR is challenging due to high \emph{individual variability}~\citep{soleimani2019cross},
which makes HAR suitable for evaluating the robustness of a meta-BO algorithm against dissimilar meta-tasks.
We use the data collected through mobile phones from $30$ subjects performing $6$ activities and 
use \emph{support vector machines} (SVM) for activity prediction. 
Every task corresponds to tuning $2$ SVM hyperparameters for a subject.
We run a separate BO ($30$ iterations) for each of the $21$ subjects to generate the meta-observations ($M=21$, $N_i=30$ for $i=1,\ldots,21$)  
and use the other $9$ subjects for validation.
The results are shown in Fig.~\ref{fig:cnn}d (averaged over the $9$ subjects, each further averaged over $5$ random initializations),
in which RM-GP-UCB delivers the best performance, followed by RGPE; RM-GP-TS again fails to perform effectively, suggesting that it is less robust against the individual variability in HAR.
\textbf{Policy Search for Reinforcement Learning (RL).}
When optimizing the RL policy of an agent in an environment, the agent's experience in other related environments may help to make learning more efficient~\citep{duan2016rl, wang1611learning}.
We apply meta-BO to policy search in RL to maximize the cumulative rewards in an episode, using the Cart-Pole environment from OpenAI Gym~\citep{brockman2016openai} with $8$ policy parameters.
We simulate different environments by setting the agent to different initial states. 
In particular, we choose $M=10$ different initial states, among which the majority (i.e., $7$) are randomly generated (i.e., dissimilar meta-tasks) and 
the other $3$ are designed to be close to the initial state of the target task so that they are similar to the target task.
An independent BO task with $50$ iterations is run for every initial state, i.e., $N_i=50$ for $i=1,\ldots,10$.
Figs.~\ref{fig:cnn}e and~\ref{fig:cnn}f plot the (normalized) cumulative rewards of different algorithms and their learned meta-weights for the $3$ similar meta-tasks.
The results show that RM-GP-UCB achieves the best performance (Fig.~\ref{fig:cnn}e), 
and it is more effective than RGPE and TAF at identifying the $3$ similar meta-tasks (Fig.~\ref{fig:cnn}f).
RGPE and TAF fail to correctly identify similar meta-tasks because they learn the meta-weights based on how accurately each GP surrogate predicts the \emph{pairwise ranking of the target observations} (more details in Sec.~\ref{sec:related_works}). However, in the Cart-Pole environment, many target observations have equal values, which confuses the pairwise ranking and makes the learned meta-weights unreliable.
RM-GP-TS again only performs comparably with standard GP-UCB (Fig.~\ref{fig:cnn}e).



\subsection{Experimental Discussion}
\label{sec:experiment_discussion}
In most experimental results (Figs.~\ref{fig:synth_func_results} and~\ref{fig:cnn}), 
the performance advantage of RM-GP-UCB is most evident at the initial stage.
This is likely to corroborate our theoretical insights that 
the meta-tasks can help improve the convergence of RM-GP-UCB at the initial stage by reducing the degree of exploration (Sec.~\ref{subsec:theory:rm_gp_ucb}).
A potential limitation of our online meta-weight optimization (Sec.~\ref{sec:online_regret_minimization}) is that it does not account for the scenario where the meta-functions are shifted or scaled versions of the target function.
However, note that in some scenarios, the scale of the meta-functions is informative about task similarity and thus should not be removed. For example, in our clinical diagnosis (i.e., non-stationary BO) experiment, the more recently completed meta-tasks (with larger training set, smaller validation errors, and thus smaller function gaps) are expected to be more similar to the target task.
Furthermore, as demonstrated by the green curve in Fig.~\ref{fig:synth_func_results}a, in some cases, even though the meta-weights are not optimized, RM-GP-UCB still performs favorably. This implies its robustness against mis-specification of the meta-weights.

RM-GP-UCB is the only method that consistently outperforms standard GP-UCB in \emph{all} experiments (Figs.~\ref{fig:synth_func_results} and~\ref{fig:cnn}), whereas other methods perform either comparably with or worse than GP-UCB in some experiments (e.g., RGPE in Figs.~\ref{fig:synth_func_results}e and~\ref{fig:cnn}a, TAF in Figs.~\ref{fig:synth_func_results}e,~\ref{fig:synth_func_results}f and~\ref{fig:cnn}d).
This might be attributed to RM-GP-UCB's theoretically guaranteed robustness against dissimilar meta-tasks (Sec.~\ref{sec:theoretical_analysis}) and its ability to diminish their impact in a principled way (Sec.~\ref{sec:online_regret_minimization}).
In particular, RM-GP-UCB performs significantly better than RM-GP-TS in those experiments with a large number of dissimilar meta-tasks (Figs.~\ref{fig:cnn}c-e), which may be explained by RM-GP-UCB's better theoretically guaranteed robustness against dissimilar meta-tasks than RM-GP-TS (Sec.~\ref{subsec:theory:ts}).
However, Figs.~\ref{fig:synth_func_results}e-f and Fig.~\ref{fig:cnn}a show that RM-GP-TS performs competitively in some experiments with more favorable settings (i.e., less dissimilar meta-tasks), which might result from the repeatedly observed empirical effectiveness of TS-based algorithms~\citep{chapelle2011empirical,russo2017tutorial}.
Moreover, the computational efficiency of RM-GP-TS is markedly superior to other methods (Fig.~\ref{fig:cnn}b).
These theoretical and empirical comparisons between RM-GP-UCB and RM-GP-TS may provide useful insights for other meta-BO algorithms and potentially for a broader range of problems (e.g., meta-learning for multi-armed bandits and RL) in terms of the relative strengths and weaknesses of UCB- and TS-based algorithms.


\section{Related Works}
\label{sec:related_works}
Some previous works on meta-BO build a joint GP surrogate using all previous and current observations, 
and represent task similarity through meta-features~\citep{bardenet2013collaborative,schilling2016scalable,yogatama2014efficient}.
However, these algorithms suffer from the requirement of handcrafted meta-features, which is avoided in other works that learn task similarity from the observations~\citep{swersky2013multi,shilton2017regret}. For example, multitask BO~\citep{swersky2013multi} uses a multitask GP as a surrogate and models each task as an output of the GP. 
These works include all previous and current observations in a single GP surrogate and are thus limited by the scalability of GPs.
There have also been other empirical works which replace GP by Bayesian linear regression for scalability~\citep{perrone2018scalable},
tackle sequentially arriving tasks~\citep{golovin2017google,poloczek2016warm}, learn a set of good initializations~\citep{feurer2015initializing,wistuba2015sequential}, learn a reduced search space for BO from previous tasks~\citep{perrone2019learning}, handle the issue of different function scales using Gaussian Copulas~\citep{salinas2020quantile}, 
learn the task similarities through the distance between the distributions of the optima from different tasks~\citep{ramachandran2018information},
or use the meta-observations to learn the entire acquisition function through RL~\citep{volpp2020meta}.
\citet{wang2018regret} have learned the GP prior from previous tasks and given theoretical guarantees. 
However, they have shown in both theory and practice that a large training set of meta-observations ($\geq 5000$) is required for their method to work well, while we focus on the more practical setting of meta-BO where the number of available meta-observations may be small.
We have also verified that our algorithm outperforms the method from~\citet{wang2018regret} in the experiment that is most favorable for their method among all our experiments (more details in the third paragraph of Sec.~\ref{subsec:automl}).
Meta-BO is also related to the works on multi-fidelity BO~\citep{dai2019bayesian,kandasamy2016gaussian,poloczek2017multi,wu2020practical,zhang2020bayesian,zhang2017information}, since the previous tasks can be viewed as low-fidelity functions which can approximate the target function and are cheap to query. However, multi-fidelity BO allows querying the low-fidelity functions during the BO process, whereas meta-BO algorithms can only query the target function, i.e., the highest-fidelity function.
Moreover, meta-BO is also related to the previous works on BO which involve multiple agents (i.e., analogous to multiple tasks in meta-BO), such as federated BO~\citep{dai2020federated,dai2021differentially,sim2021collaborative} or BO methods based on game-theoretical approaches~\citep{dai2020r2,sessa2019no}.

Some works have aimed to improve the scalability of GP-based meta-BO algorithms by building a separate GP surrogate for each task~\citep{feurer2018scalable,wistuba2016two,wistuba2018scalable}.
\citet{wistuba2016two} use a weighted combination of the posterior mean of each individual GP surrogate as the joint posterior mean 
while the posterior variance is derived using only the target observations.
RGPE~\citep{feurer2018scalable} has extended the work of~\citet{wistuba2016two} by estimating the joint objective function as a weighted combination of individual objective functions,
such that the resulting joint surrogate remains a GP (unlike~\citet{wistuba2016two}) and can thus be plugged into standard BO algorithms.
Note that RGPE differs from our RM-GP-UCB algorithm in that RGPE uses a weighted combination of individual GP surrogates to derive a joint GP surrogate, whereas our RM-GP-UCB leverage a weighted combination of individual acquisition functions.
\citet{wistuba2018scalable} have proposed TAF, which also uses a weighted combination of the acquisition functions (i.e., expected improvement) from the individual tasks for query selection.
In these works, the weight of a previous task is heuristically chosen to be proportional to the accuracy of the \emph{pairwise ranking of the target observations} produced by 
either (a) the posterior mean of the GP surrogate of the previous task (TAF)~\citep{wistuba2018scalable} or 
(b) functions sampled from the posterior GP surrogate (RGPE)~\citep{feurer2018scalable}.

\section{Conclusion}
\label{sec:conclusion}
We have introduced
RM-GP-UCB and RM-GP-TS, both of which are asymptotically no-regret even if all meta-tasks are dissimilar to the target task.
We leverage the theoretical results to learn the task similarities in a principled way via online learning.
Theoretical and empirical comparisons show that RM-GP-UCB is more robust against dissimilar tasks, whereas RM-GP-TS performs effectively in more favorable cases and is more computationally efficient.

\begin{acknowledgements} 
This research/project is supported by A*STAR under its RIE$2020$ Advanced Manufacturing and Engineering (AME) Industry Alignment Fund – Pre Positioning (IAF-PP) (Award A$19$E$4$a$0101$) and by the Singapore Ministry of Education Academic Research Fund Tier $1$. This research is part of the programme DesCartes and is supported by the National Research Foundation, Prime Minister’s Office, Singapore under its Campus for Research Excellence and Technological Enterprise (CREATE) programme.

\end{acknowledgements}

\newpage


\bibliography{dai_226}
\appendix
\renewcommand\thesection{\Alph{section}}
\renewcommand\thesubsection{\thesection.\arabic{subsection}}

\renewcommand\thefigure{\arabic{figure}}
\setcounter{figure}{2}

\onecolumn

\title{On Provably Robust Meta-Bayesian Optimization (Supplementary material)}
%
%

\maketitle

\setcounter{equation}{7}
\setcounter{lemma}{1}

\section{Proof of Theorem~\ref{regret_bound}}
\label{app:first_section}
To begin with, we need the following lemma to give a high-probability confidence bound on the target function, which will be used in the theoretical analysis of both Theorems~\ref{regret_bound} and~\ref{regret_bound_ts}.
\begin{lemma}
\label{gaussian_bound}
Let $\delta \in (0,1)$ and $\beta_t=B + \sigma \sqrt{2(\gamma_{t-1} + 1 + \log(4/\delta))}$, then
\begin{equation*}
|f(\mathbf{x})-\mu_{t-1}(\mathbf{x})| \leq \beta_t\sigma_{t-1}(\mathbf{x}) \qquad \forall \mathbf{x}\in \mathcal{D},\, t\geq1
\end{equation*}
which holds with probability of $\geq1-\delta/4$.
\end{lemma}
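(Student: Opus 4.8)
The plan is to follow the by-now standard route for confidence bounds in kernelized bandits (Srinivas et al.\ 2010; Abbasi-Yadkori et al.\ 2011; Chowdhury \& Gorur 2017) and simply track constants so that the failure probability comes out as $\delta/4$. First I would recall the regression setup: $f$ lies in the RKHS of the kernel $k$ with $\|f\|_k \le B$, and the queries produce observations $y_\tau = f(\mathbf{x}_\tau) + \epsilon_\tau$, where $\{\epsilon_\tau\}$ is a zero-mean, conditionally $\sigma$-sub-Gaussian noise sequence adapted to the natural filtration. The posterior mean $\mu_{t-1}$ and standard deviation $\sigma_{t-1}$ are the usual GP-regression quantities formed from the first $t-1$ observations with noise variance $\sigma^2$ (equivalently, a kernel ridge regressor with regularization parameter $\sigma^2$).

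Next I would decompose the prediction error pointwise as
\begin{equation*}
f(\mathbf{x}) - \mu_{t-1}(\mathbf{x}) = \underbrace{\bigl(f(\mathbf{x}) - \mathbb{E}[\mu_{t-1}(\mathbf{x})\mid \mathbf{x}_{1:t-1}]\bigr)}_{\text{approximation}} \;-\; \underbrace{\bigl(\mu_{t-1}(\mathbf{x}) - \mathbb{E}[\mu_{t-1}(\mathbf{x})\mid \mathbf{x}_{1:t-1}]\bigr)}_{\text{noise}},
\end{equation*}
where the expectation is over the noise only. The approximation term is deterministic and, by Cauchy--Schwarz in the RKHS together with $\|f\|_k \le B$, is bounded in absolute value by $B\,\sigma_{t-1}(\mathbf{x})$. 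The noise term is linear in $(\epsilon_1,\dots,\epsilon_{t-1})$; expressing it in the RKHS and applying the self-normalized (method-of-mixtures) martingale tail inequality of Abbasi-Yadkori et al., as adapted to the RKHS setting by Chowdhury \& Gorur, shows it is bounded uniformly over $\mathbf{x}\in\mathcal{D}$ and $t\ge 1$ by $\sigma\sqrt{2\bigl(\gamma_{t-1} + 1 + \log(1/\delta')\bigr)}\,\sigma_{t-1}(\mathbf{x})$ on an event of probability at least $1-\delta'$; here $\gamma_{t-1}$ enters because the log-determinant of the regularized kernel Gram matrix of the queried points is at most $\gamma_{t-1}$, which controls the normalization in the mixture bound.

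Finally I would set $\delta' = \delta/4$, combine the two bounds via the triangle inequality, and collect the coefficient of $\sigma_{t-1}(\mathbf{x})$ into $\beta_t = B + \sigma\sqrt{2(\gamma_{t-1} + 1 + \log(4/\delta))}$, which is exactly the claimed statement. The only genuinely delicate ingredient is the self-normalized concentration step for the noise term --- verifying the relevant exponential supermartingale and integrating out the mixing distribution in the infinite-dimensional RKHS --- but since this is a known result I would invoke it directly rather than reprove it; everything else is bookkeeping of constants and the choice of $\delta'$.
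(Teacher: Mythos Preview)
Your proposal is correct and matches the paper's approach: the paper simply invokes Theorem~2 of Chowdhury \& Gopalan (2017) with failure probability $\delta/4$, and what you have written is precisely a sketch of that theorem's proof with the constants tracked accordingly. (Minor typo: the reference is Chowdhury \& Gopalan, not Gorur.)
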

Lemma~\ref{app:first_section} follows directly from Theorem 2 of~\citep{chowdhury2017kernelized}.

To facilitate the theoretical analysis of RM-GP-UCB, we introduce the following auxiliary term:
\begin{equation}
\widetilde{\zeta}_t(\mathbf{x})=\nu_t\left[\sum^M_{i=1}\omega_i\left[\widetilde{\mu}_{i}(\mathbf{x}) + \tau\widetilde{\sigma}_{i}(\mathbf{x})\right]\right] 
+ (1-\nu_t)\left[\mu_{t-1}(\mathbf{x})+\beta_t\sigma_{t-1}(\mathbf{x})\right]
\label{acq_fake}
\end{equation}
in which $\widetilde{\mu}_{i}(\mathbf{x})$ and $\widetilde{\sigma}_{i}(\mathbf{x})$ are obtained by replacing each noisy output of the meta-observations $y_{i,j}$ in the calculation of $\overline{\mu}_{i}(\mathbf{x})$ and $\overline{\sigma}_{i}(\mathbf{x})$~\eqref{acq_func}
by the (hypothetically available) noisy target function output observation at the corresponding input $\mathbf{x}_{i,j}$.
Eq.~\eqref{acq_fake} will serve as the bridge to connect the acquisition function of RM-GP-UCB~\eqref{acq_func} with the target function $f$ in the subsequent theoretical analysis, which will be demonstrated in Appendix~\ref{app:proof_theorem_1}.
To simplify exposition, we omit the superscript in our notation to represent the acquisition function~\eqref{acq_func}, i.e., we use $\overline{\zeta}_t$ to denote the acquisition function of RM-GP-UCB instead of $\overline{\zeta}^{\text{UCB}}_t$.
The next lemma shows that the difference between $\overline{\zeta}_t(\mathbf{x})$~\eqref{acq_func} and $\widetilde{\zeta}_t(\mathbf{x})$~\eqref{acq_fake} is bounded $\forall \mathbf{x} \in \mathcal{D}$, whose proof is given in Appendix~\ref{app:proof_lemma_1}.
\begin{lemma}
\label{ucb_diff}
Let $\delta \in (0, 1)$. Suppose the RM-GP-UCB algorithm is run with parameters $\nu_t\in [0,1]$ $\forall t\geq 1$, and $\omega_i\geq 0$ for $i=1,\ldots,M$ and $\sum_{i=1,\ldots,M}\omega_i=1$. 
Then with probability of $\geq 1 - \delta / 4$,
\[    \left|\overline{\zeta}_t(\mathbf{x})-\widetilde{\zeta}_t(\mathbf{x})\right| \leq \nu_t\alpha \qquad \forall \mathbf{x} \in \mathcal{D}
\]
in which 
\[
\alpha \triangleq \sum^M_{i=1}\omega_i \frac{N_i}{\sigma^2}(2\sqrt{2\sigma^2\log\frac{8N_i}{\delta}}+d_i).
\]
\end{lemma}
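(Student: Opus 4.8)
The plan is to observe that $\overline{\zeta}_t$ and $\widetilde{\zeta}_t$ in \eqref{acq_func} and \eqref{acq_fake} differ only through the meta-component, and even there only through the posterior means. First I would note that the $(1-\nu_t)[\mu_{t-1}(\mathbf{x})+\beta_t\sigma_{t-1}(\mathbf{x})]$ summands coincide, and that $\widetilde{\sigma}_i(\mathbf{x})=\overline{\sigma}_i(\mathbf{x})$ for every $i$ and $\mathbf{x}$ because a GP posterior standard deviation depends on the meta-inputs $\mathbf{x}_{i,j}$ only and not on the observed outputs (which are precisely what is being swapped). Hence
\[
\left|\overline{\zeta}_t(\mathbf{x})-\widetilde{\zeta}_t(\mathbf{x})\right|
= \nu_t\left|\sum_{i=1}^M\omega_i\bigl(\overline{\mu}_i(\mathbf{x})-\widetilde{\mu}_i(\mathbf{x})\bigr)\right|
\le \nu_t\sum_{i=1}^M\omega_i\left|\overline{\mu}_i(\mathbf{x})-\widetilde{\mu}_i(\mathbf{x})\right|,
\]
using $\omega_i\ge 0$ and $\sum_i\omega_i=1$, so it suffices to show $\sup_{\mathbf{x}\in\mathcal{D}}\left|\overline{\mu}_i(\mathbf{x})-\widetilde{\mu}_i(\mathbf{x})\right|\le \frac{N_i}{\sigma^2}\bigl(2\sqrt{2\sigma^2\log(8N_i/\delta)}+d_i\bigr)$ for each $i$ on an event of probability $\ge 1-\delta/4$.

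For the per-task bound I would use that the GP posterior mean is a linear functional of the output vector: writing $\mathbf{y}_i$ for the vector of meta-observations of task $i$ and $\widetilde{\mathbf{y}}_i$ for the vector of hypothetical noisy target observations at the same inputs, $\overline{\mu}_i(\mathbf{x})-\widetilde{\mu}_i(\mathbf{x})=\mathbf{k}_i(\mathbf{x})^\top(\mathbf{K}_i+\sigma^2\mathbf{I})^{-1}(\mathbf{y}_i-\widetilde{\mathbf{y}}_i)$, where $\mathbf{k}_i(\mathbf{x})$ collects the kernel values between $\mathbf{x}$ and the $N_i$ meta-inputs and $\mathbf{K}_i$ is the corresponding Gram matrix. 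Cauchy--Schwarz together with $\|\mathbf{k}_i(\mathbf{x})\|_2\le\sqrt{N_i}$ (since $k(\cdot,\cdot)\le 1$), $\|(\mathbf{K}_i+\sigma^2\mathbf{I})^{-1}\|_{\mathrm{op}}\le 1/\sigma^2$ (since $\mathbf{K}_i\succeq 0$), and $\|\mathbf{v}\|_2\le\sqrt{N_i}\,\|\mathbf{v}\|_\infty$ then gives $\left|\overline{\mu}_i(\mathbf{x})-\widetilde{\mu}_i(\mathbf{x})\right|\le\frac{N_i}{\sigma^2}\|\mathbf{y}_i-\widetilde{\mathbf{y}}_i\|_\infty$, uniformly in $\mathbf{x}$.

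It then remains to bound $\|\mathbf{y}_i-\widetilde{\mathbf{y}}_i\|_\infty=\max_{j\le N_i}|y_{i,j}-\widetilde{y}_{i,j}|$. I would split each entry into the deterministic gap between the $i$-th meta-task function and $f$ evaluated at $\mathbf{x}_{i,j}$, which is at most $d_i$, plus the two $\sigma$-sub-Gaussian noise terms carried respectively by $y_{i,j}$ and $\widetilde{y}_{i,j}$; a standard sub-Gaussian tail bound controls each such noise term by $\sqrt{2\sigma^2\log(8N_i/\delta)}$, and a union bound over the $N_i$ meta-inputs and the two noise sources (with the $\delta/4$ failure budget allocated accordingly) makes this hold simultaneously, yielding $\|\mathbf{y}_i-\widetilde{\mathbf{y}}_i\|_\infty\le 2\sqrt{2\sigma^2\log(8N_i/\delta)}+d_i$ on the good event. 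Chaining the three estimates and summing over $i$ with weights $\omega_i$ produces exactly $\nu_t\alpha$. I expect the only delicate point to be the probabilistic bookkeeping --- fixing the precise meaning of the hypothetical observations $\widetilde{y}_{i,j}$ (which noise realization they carry) and distributing the $\delta/4$ failure probability across tasks, inputs and noise sources so that the logarithmic factor comes out as stated; the cancellation of the variance terms and the operator-norm estimates are routine.
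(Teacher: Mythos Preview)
Your proposal is correct and follows essentially the same route as the paper: cancel the $(1-\nu_t)$ term and the posterior standard deviations, express the mean difference via the linear GP posterior formula, apply Cauchy--Schwarz together with $\|\mathbf{k}_i(\mathbf{x})\|_2\le\sqrt{N_i}$ and $\|(\mathbf{K}_i+\sigma^2\mathbf{I})^{-1}\|_{\mathrm{op}}\le 1/\sigma^2$, and then split $\mathbf{y}_i-\widetilde{\mathbf{y}}_i$ into the two noise contributions and the function-gap term. The only cosmetic difference is that the paper splits $\|\overline{\mathbf{y}}_i-\widetilde{\mathbf{y}}_i\|_2$ directly via the triangle inequality into three $\ell_2$ pieces (each bounded by $\sqrt{N_i}$ times an entrywise bound), whereas you first pass to $\sqrt{N_i}\|\cdot\|_\infty$ and then bound the sup-norm entrywise; the resulting constants are identical.
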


Next, because $\widetilde{\mu}_{i}(\mathbf{x})$ and $\widetilde{\sigma}_{i}(\mathbf{x})$ are calculated using the (hypothetically available) noisy observations of the target function (i.e., same as $\mu_{t-1}(\mathbf{x})$ and $\sigma_{t-1}(\mathbf{x})$), we can also get the following lemma on the concentration of the target function $f$
which, similar to Lemma~\ref{gaussian_bound} above, also follows directly from Theorem 2 of~\citep{chowdhury2017kernelized}.
\begin{lemma}
\label{lemma:confidence:bound:tau}
Let 
$\tau=B + \sigma \sqrt{2(\gamma_{N} + 1 + \log(4M/\delta))}$,
we have that
\begin{equation*}
|f(\mathbf{x})-\widetilde{\mu}_{i}(\mathbf{x})| \leq \tau\widetilde{\sigma}_{i}(\mathbf{x}) \qquad \forall \mathbf{x}\in \mathcal{D},\, i=1,...,M,
\end{equation*}
which also holds with probability $\geq1-\delta/4$.
\end{lemma}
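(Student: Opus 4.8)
The plan is to apply the single‑GP confidence bound of \citep{chowdhury2017kernelized} separately to each of the $M$ meta‑tasks and then close with a union bound. The key observation, which the paragraph preceding the statement already hints at, is that by construction $\widetilde{\mu}_{i}(\mathbf{x})$ and $\widetilde{\sigma}_{i}(\mathbf{x})$ are \emph{exactly} the standard GP posterior mean and posterior standard deviation obtained by conditioning on the $N_i$ hypothetical observations $\{(\mathbf{x}_{i,j},\, f(\mathbf{x}_{i,j}) + \epsilon_{i,j})\}_{j=1}^{N_i}$ of the target function $f$ at the $i$‑th meta‑task's input locations. Since $f$ lies in the RKHS with $\|f\|_k \le B$ and the observation noise is conditionally $\sigma$‑sub‑Gaussian, the hypotheses of Theorem~2 of \citep{chowdhury2017kernelized} are met for each such ``hypothetical'' GP — in fact the meta‑inputs $\mathbf{x}_{i,j}$ are fixed rather than adaptively chosen, so there is not even any adaptivity to worry about.

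First I would fix an index $i \in \{1,\ldots,M\}$ and invoke Theorem~2 of \citep{chowdhury2017kernelized} with confidence parameter $\delta/(4M)$ in place of $\delta$: with probability at least $1 - \delta/(4M)$,
\[
|f(\mathbf{x}) - \widetilde{\mu}_{i}(\mathbf{x})| \le \left(B + \sigma\sqrt{2\left(\gamma_{N_i} + 1 + \log(4M/\delta)\right)}\right)\widetilde{\sigma}_{i}(\mathbf{x}) \qquad \forall \mathbf{x} \in \mathcal{D}.
\]
Next I would invoke monotonicity of the maximum information gain in the number of observations: since $N_i \le N$ we have $\gamma_{N_i} \le \gamma_{N}$, so the prefactor above is upper bounded by $\tau = B + \sigma\sqrt{2(\gamma_{N} + 1 + \log(4M/\delta))}$. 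As $\widetilde{\sigma}_{i}(\mathbf{x}) \ge 0$, this yields $|f(\mathbf{x}) - \widetilde{\mu}_{i}(\mathbf{x})| \le \tau\,\widetilde{\sigma}_{i}(\mathbf{x})$ for all $\mathbf{x}\in\mathcal{D}$ on the same event of probability $\ge 1 - \delta/(4M)$. Finally, a union bound over $i = 1,\ldots,M$ gives total failure probability at most $M \cdot \delta/(4M) = \delta/4$, so the displayed bound holds simultaneously for all $\mathbf{x}\in\mathcal{D}$ and all $i = 1,\ldots,M$ with probability $\ge 1 - \delta/4$, as claimed.

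There is no substantive obstacle here; the lemma really is a direct corollary of \citep{chowdhury2017kernelized}. The only points that need care are: (i) checking that the hypothetical target‑function observations genuinely satisfy the RKHS‑norm and sub‑Gaussian‑noise assumptions of that theorem — which they do, precisely because they are defined to use the \emph{same} noise model and the \emph{same} $f$ that is already assumed to have $\|f\|_k\le B$; and (ii) allocating the confidence budget as $\delta/(4M)$ per task rather than $\delta/4$, so that the subsequent union bound over the $M$ meta‑tasks closes exactly — this allocation is exactly what produces the $\log(4M/\delta)$ term (as opposed to the $\log(4/\delta)$ term of Lemma~\ref{gaussian_bound}) in the definition of $\tau$.
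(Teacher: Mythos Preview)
Your proposal is correct and matches the paper's approach: the paper simply states that the lemma ``follows directly from Theorem~2 of \citep{chowdhury2017kernelized}'' without spelling out the details, and what you have written is precisely the intended unpacking --- apply the single-GP bound at confidence level $\delta/(4M)$ to each hypothetical dataset, upper-bound $\gamma_{N_i}$ by $\gamma_N$, and union-bound over the $M$ meta-tasks to recover the $\log(4M/\delta)$ term in $\tau$.
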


\subsection{Proof of Lemma~\ref{ucb_diff}}
\label{app:proof_lemma_1}
Let $\mathbf{K}_i=[k(\mathbf{x}_{i,j}, \mathbf{x}_{i,j'})]_{j,j'=1,\ldots,N_i}$ represent the Gram matrix corresponding to the inputs of the meta-observations from meta-task $i$, and $\mathbf{k}_i=[k(\mathbf{x}_{i,j}, \mathbf{x})]^{\top}_{j=1,\ldots,N_i}$.
Denote by $\lambda_j[\mathbf{A}]$ the $j$-th eigenvalue of matrix $\mathbf{A}$.
Firstly, we need the following lemma proving an upper bound on matrix $L_2$ norm:
\begin{lemma}
\label{frob_norm}
For all $i=1,\ldots,M$, we have that
\[
\norm{\left(\mathbf{K}_{i}+\sigma^2I\right)^{-1}}_2 \leq \frac{1}{\sigma^2}.
\]
\end{lemma}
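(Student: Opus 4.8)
The plan is to reduce the claim to a statement about eigenvalues, exploiting that $\mathbf{K}_i + \sigma^2 I$ is symmetric and positive definite, so its spectral ($L_2$) norm and that of its inverse are governed entirely by its eigenvalues. Concretely, I would first recall that $k$ is a valid (positive semi-definite) kernel, hence the Gram matrix $\mathbf{K}_i = [k(\mathbf{x}_{i,j},\mathbf{x}_{i,j'})]_{j,j'}$ is symmetric positive semi-definite, so every eigenvalue satisfies $\lambda_j[\mathbf{K}_i]\geq 0$.

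Next I would observe that adding $\sigma^2 I$ shifts the spectrum: the eigenvalues of $\mathbf{K}_i + \sigma^2 I$ are exactly $\lambda_j[\mathbf{K}_i] + \sigma^2$, and since $\lambda_j[\mathbf{K}_i]\geq 0$ these are all $\geq \sigma^2 > 0$. In particular $\mathbf{K}_i + \sigma^2 I$ is invertible and positive definite, so its inverse is also symmetric positive definite with eigenvalues $\lambda_j\big[(\mathbf{K}_i + \sigma^2 I)^{-1}\big] = \big(\lambda_j[\mathbf{K}_i] + \sigma^2\big)^{-1}$.

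Finally, for a symmetric positive definite matrix the $L_2$ operator norm equals its largest eigenvalue, so
\[
\norm{\left(\mathbf{K}_{i}+\sigma^2I\right)^{-1}}_2 = \max_{j}\frac{1}{\lambda_j[\mathbf{K}_i] + \sigma^2} = \frac{1}{\sigma^2 + \min_j \lambda_j[\mathbf{K}_i]} \leq \frac{1}{\sigma^2},
\]
using once more that $\min_j \lambda_j[\mathbf{K}_i]\geq 0$. This establishes the bound for every $i=1,\ldots,M$.

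There is essentially no hard step here: the only thing to be careful about is invoking the positive semi-definiteness of the Gram matrix (which follows from $k$ being a valid kernel, an assumption already in force) and the identification of the spectral norm of a symmetric PSD matrix with its maximal eigenvalue; everything else is the elementary spectral mapping $\lambda \mapsto 1/(\lambda+\sigma^2)$. If one prefers to avoid naming the norm--eigenvalue identity, an equivalent route is to bound $\norm{(\mathbf{K}_i+\sigma^2 I)^{-1}\mathbf{v}}_2 \leq \tfrac{1}{\sigma^2}\norm{\mathbf{v}}_2$ directly by diagonalizing $\mathbf{K}_i$ in an orthonormal eigenbasis, which yields the same conclusion.
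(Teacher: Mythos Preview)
Your proposal is correct and follows essentially the same approach as the paper: both arguments reduce the spectral norm to the largest eigenvalue of the symmetric positive definite matrix $(\mathbf{K}_i+\sigma^2 I)^{-1}$ and then bound that eigenvalue by $1/\sigma^2$ using $\lambda_j[\mathbf{K}_i]\geq 0$. The paper writes the norm explicitly as $\sqrt{\max_j \lambda_j[A^{\top}A]}$ before simplifying, whereas you invoke the norm--eigenvalue identity for symmetric PSD matrices directly, but the substance is identical.
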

\begin{proof}
\begin{equation*}
\begin{split}
    \norm{\left(\mathbf{K}_{i}+\sigma^2I\right)^{-1}}_2 &= \sqrt{\max_{j=1,\ldots,N_i} \lambda_j\left[\left(\left(\mathbf{K}_{i}+\sigma^2I\right)^{-1}\right)^{\top} \left(\mathbf{K}_{i}+\sigma^2I\right)^{-1}\right]}\\
    &=\sqrt{\max_{j=1,\ldots,N_i} \lambda_j\left[\left(\mathbf{K}_{i}+\sigma^2I\right)^{-1}\right]^2}\\
    &\leq \frac{1}{\sigma^2}
\end{split}
\end{equation*}
\end{proof}

Next, define $\overline{\mathbf{f}}_{i}=[f_i(\mathbf{x}_{i,j})]_{j=1,\ldots,N_i}$ (in which $f_i(\mathbf{x}_{i,j})$ represents the value of meta-function $i$ at input $\mathbf{x}_{i,j}$), 
and $\widetilde{\mathbf{f}}_{i}=[f(\mathbf{x}_{i,j})]_{j=1,\ldots,N_i}$ (in which $f(\mathbf{x}_{i,j})$ represents the value of target function at input $\mathbf{x}_{i,j}$).
Similarly, define $\overline{\mathbf{y}}_{i}=[y_{i,j}]_{j=1,\ldots,N_i}$ (in which $y_{i,j}$ represents the noisy output observation of meta-task $i$ at input $\mathbf{x}_{i,j}$), 
and $\widetilde{\mathbf{y}}_{i}=[y(\mathbf{x}_{i,j})]_{j=1,\ldots,N_i}$ (in which $y(\mathbf{x}_{i,j})$ represents the hypothetically observed noisy output observation of the target function at input $\mathbf{x}_{i,j}$).
With these definitions, the next lemma shows upper bounds on the distance between $\overline{\mathbf{y}}_{i}$ and $\overline{\mathbf{f}}_{i}$, as well as that distance between $\widetilde{\mathbf{y}}_{i}$ and $\widetilde{\mathbf{f}}_{i}$.
\begin{lemma}
\label{bound_with_obs_noise}
With probability $\geq 1 - \delta/4$,
\begin{equation*}
\begin{split}
\norm{\overline{\mathbf{y}}_{i} - \overline{\mathbf{f}}_{i}}_2 \leq \sqrt{N_i} \sqrt{2\sigma^2\log\frac{8N_i}{\delta}},\\
\norm{\widetilde{\mathbf{y}}_{i} - \widetilde{\mathbf{f}}_{i}}_2 \leq \sqrt{N_i} \sqrt{2\sigma^2\log\frac{8N_i}{\delta}}.
\end{split}
\end{equation*}
\end{lemma}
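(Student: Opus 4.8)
The plan is to use the observation model directly. By definition the $j$-th entry of $\overline{\mathbf{y}}_{i}-\overline{\mathbf{f}}_{i}$ is the zero-mean $\sigma$-sub-Gaussian observation noise $\epsilon_{i,j}=y_{i,j}-f_i(\mathbf{x}_{i,j})$, and likewise the $j$-th entry of $\widetilde{\mathbf{y}}_{i}-\widetilde{\mathbf{f}}_{i}$ is the hypothetical target noise $y(\mathbf{x}_{i,j})-f(\mathbf{x}_{i,j})$, which by assumption has the same distribution. The two stated inequalities are therefore proved by an identical argument, so I would carry it out for $\overline{\mathbf{y}}_{i}-\overline{\mathbf{f}}_{i}$ and then transfer it verbatim to the target case.

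First I would control a single coordinate via the sub-Gaussian tail bound $\Pr(|\epsilon_{i,j}|>a)\le 2\exp(-a^2/(2\sigma^2))$, which is valid for every $a>0$ (and also holds for Gaussian noise). Choosing $a=\sqrt{2\sigma^2\log(8N_i/\delta)}$ makes the right-hand side exactly $\delta/(4N_i)$. Taking a union bound over $j=1,\ldots,N_i$ then shows that, with probability at least $1-N_i\cdot\delta/(4N_i)=1-\delta/4$, the inequality $|\epsilon_{i,j}|\le\sqrt{2\sigma^2\log(8N_i/\delta)}$ holds simultaneously for all $j=1,\ldots,N_i$.

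On that event,
\[
\norm{\overline{\mathbf{y}}_{i}-\overline{\mathbf{f}}_{i}}_2^2=\sum_{j=1}^{N_i}\epsilon_{i,j}^2\le N_i\cdot 2\sigma^2\log\frac{8N_i}{\delta},
\]
and taking the square root yields the first bound; repeating the same three steps with the target noise gives the second. If one wishes both inequalities, or all $M$ meta-tasks, to hold on a single common event, one simply splits the failure probability further among the $2$ (resp.\ $2M$) sub-events, which only inflates the logarithmic term and does not change the structure of the argument.

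There is no genuine analytic obstacle here: the derivation is a routine per-coordinate concentration inequality followed by a union bound. The only point that requires care is the probability bookkeeping --- ensuring that the $\delta/4$ budget spent in this lemma is compatible with the $\delta/4$ allocations of Lemmas~\ref{gaussian_bound} and~\ref{lemma:confidence:bound:tau}, so that the overall union bound in the proof of Theorem~\ref{regret_bound} closes at confidence level $1-\delta$.
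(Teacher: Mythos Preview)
Your approach matches the paper's: per-coordinate tail bound, union bound over $j=1,\ldots,N_i$, then sum the squares to control the $\ell_2$ norm. The only discrepancy is in the constant. You invoke the generic sub-Gaussian tail $\Pr(|\epsilon|>a)\le 2e^{-a^2/(2\sigma^2)}$, which yields failure probability $\delta/(4N_i)$ per coordinate and hence $\delta/4$ for \emph{each} of the two inequalities separately; getting both on a common event would then require $16N_i$ inside the logarithm, as you yourself note. The paper instead exploits that the noise is exactly Gaussian and uses the sharper bound $\Pr(|Z|>c)\le e^{-c^2/2}$ (valid because $\Pr(Z>c)\le\tfrac12 e^{-c^2/2}$ for standard normal $Z$), giving $\delta/(8N_i)$ per coordinate, $\delta/8$ per inequality, and $\delta/4$ for both after a final union bound---which is precisely what makes the stated constant $8N_i$ work. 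This is a purely cosmetic difference and your closing remark already anticipates it.
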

\begin{proof}
Following the same analysis as Lemma 5.1 of \citep{srinivas2009gaussian}, we have that for the standard Gaussian random variable $z \sim \mathcal{N}(0, 1)$,
\begin{equation}
\mathbb{P}(\left|z\right| > c) \leq e^{-\frac{c^2}{2}}.
\label{standard_gaussian}
\end{equation}
Since for each $j=1,\ldots,N_i$, we have that $y_{i,j} - f_i(\mathbf{x}_{i,j}) \sim \mathcal{N}(0,\sigma^2)$ and that $y(\mathbf{x}_{i,j}) - f(\mathbf{x}_{i,j}) \sim \mathcal{N}(0,\sigma^2)$, which leads to the following,
\begin{equation*}
\begin{split}
\mathbb{P}\left(\left|\frac{y_{i,j} - f_i(\mathbf{x}_{i,j})}{\sigma}\right| > \sqrt{2\log\frac{8N_i}{\delta}}\right) = \mathbb{P}\left(\left|y_{i,j} - f_i(\mathbf{x}_{i,j}) \right| > \sqrt{2\sigma^2\log\frac{8N_i}{\delta}}\right) \leq \frac{\delta}{8N_i},\\
\mathbb{P}\left(\left|\frac{y(\mathbf{x}_{i,j}) - f(\mathbf{x}_{i,j})}{\sigma}\right| > \sqrt{2\log\frac{8N_i}{\delta}}\right) = \mathbb{P}\left(\left|y(\mathbf{x}_{i,j}) - f(\mathbf{x}_{i,j}) \right| > \sqrt{2\sigma^2\log\frac{8N_i}{\delta}}\right) \leq \frac{\delta}{8N_i}.
\end{split}
\end{equation*}
Taking a union bound over $j=1,\ldots,N_i$ for each of the two equations above, we have that for \emph{all} $j=1,\ldots,N_i$,
\begin{equation*}
\begin{split}
\left|y_{i,j} - f_i(\mathbf{x}_{i,j}) \right| \leq \sqrt{2\sigma^2\log\frac{8N_i}{\delta}}, \\
\left|y(\mathbf{x}_{i,j}) - f(\mathbf{x}_{i,j}) \right| \leq \sqrt{2\sigma^2\log\frac{8N_i}{\delta}},
\end{split}
\end{equation*}
both of which hold with probability $\geq 1 - \delta/8$.
Therefore, with probability $\geq 1 - \delta/8$,
\begin{equation}
\norm{\overline{\mathbf{y}}_{i} - \overline{\mathbf{f}}_{i}}_2 = \sqrt{\sum^{N_i}_{j=1} \left|y_{i,j} - f_i(\mathbf{x}_{i,j})\right|^2 } \leq \sqrt{\sum^{N_i}_{j=1} 2\sigma^2\log\frac{8N_i}{\delta}} \leq \sqrt{N_i}\sqrt{2\sigma^2\log\frac{8N_i}{\delta}}.
\label{eq:tmp_1}
\end{equation}
Repeating the procedure above leads to
\begin{equation}
\norm{\widetilde{\mathbf{y}}_{i} - \widetilde{\mathbf{f}}_{i}}_2 \leq \sqrt{N_i}\sqrt{2\sigma^2\log\frac{8N_i}{\delta}}
\label{eq:tmp_2}
\end{equation}
which also holds with probability $\geq 1 - \delta/8$.
Taking a union bound over equations~\eqref{eq:tmp_1} and~\eqref{eq:tmp_2} completes the proof.
\end{proof}

With these supporting lemmas, Lemma~\ref{ucb_diff} can be proved as follows:
\begin{align}
    \left|\overline{\zeta}_t(\mathbf{x})-\widetilde{\zeta}_t(\mathbf{x})\right| &= \left|\nu_t\left[\sum^M_{i=1}\omega_i[\overline{\mu}_{i}(\mathbf{x}) + \sqrt{\tau}\overline{\sigma}_{i}(\mathbf{x})]\right] - \nu_t\left[\sum^M_{i=1}\omega_i[\widetilde{\mu}_{i}(\mathbf{x}) + \sqrt{\tau}\widetilde{\sigma}_{i}(\mathbf{x})]\right]\right| \nonumber\\
    &\stackrel{\text{(a)}}{=}\left|\nu_t\sum^M_{i=1}\omega_i[\overline{\mu}_{i}(\mathbf{x})-\widetilde{\mu}_{i}(\mathbf{x})]\right| \nonumber\\
    &\leq \nu_t\sum^M_{i=1}\omega_i\left|\overline{\mu}_{i}(\mathbf{x})-\widetilde{\mu}_{i}(\mathbf{x})\right|\nonumber\\
    &\leq \nu_t\sum^M_{i=1}\omega_i\left|\mathbf{k}_{i}(\mathbf{x})^{\top} (\mathbf{K}_{i}+\sigma^2I)^{-1}(\overline{\mathbf{y}}_{i}-\widetilde{\mathbf{y}}_{i})\right| \nonumber\\
    &\stackrel{\text{(b)}}{\leq} \nu_t\sum^M_{i=1}\omega_i \norm{\mathbf{k}_{i}(\mathbf{x})}_2 \norm{(\mathbf{K}_{i}+\sigma^2I)^{-1}}_2 \norm{\overline{\mathbf{y}}_{i}-\widetilde{\mathbf{y}}_{i}}_2 \nonumber\\
    &\stackrel{\text{(c)}}{\leq} \nu_t\sum^M_{i=1}\omega_i \norm{\mathbf{k}_{i}(\mathbf{x})}_2 \frac{1}{\sigma^2} \norm{\overline{\mathbf{y}}_{i}-\widetilde{\mathbf{y}}_{i}}_2\nonumber\\
    &\stackrel{\text{(d)}}{\leq} \nu_t\sum^M_{i=1}\omega_i \sqrt{N_i} \frac{1}{\sigma^2} \norm{\overline{\mathbf{y}}_{i}-\widetilde{\mathbf{y}}_{i}}_2\nonumber\\
    &\leq \nu_t\sum^M_{i=1}\omega_i \frac{\sqrt{N_i}}{\sigma^2}\norm{\overline{\mathbf{y}}_{i} - \overline{\mathbf{f}}_{i} + \overline{\mathbf{f}}_{i} - \widetilde{\mathbf{f}}_{i} + \widetilde{\mathbf{f}}_{i} - \widetilde{\mathbf{y}}_{i}}_2\nonumber\\
    &\leq \nu_t\sum^M_{i=1}\omega_i \frac{\sqrt{N_i}}{\sigma^2}\left[\norm{\overline{\mathbf{y}}_{i} - \overline{\mathbf{f}}_{i}}_2+\norm{\overline{\mathbf{f}}_{i} - \widetilde{\mathbf{f}}_{i}}_2+\norm{\widetilde{\mathbf{f}}_{i} - \widetilde{\mathbf{y}}_{i}}_2\right]\nonumber\\
    &\stackrel{\text{(e)}}{\leq} \nu_t\sum^M_{i=1}\omega_i \frac{\sqrt{N_i}}{\sigma^2}\left(2\sqrt{N_i}\sqrt{2\sigma^2\log\frac{8N_i}{\delta}}+\norm{\overline{\mathbf{f}}_{i} - \widetilde{\mathbf{f}}_{i}}_2\right) \nonumber\\
    &= \nu_t\sum^M_{i=1}\omega_i \frac{\sqrt{N_i}}{\sigma^2}\left(2\sqrt{N_i}\sqrt{2\sigma^2\log\frac{8N_i}{\delta}}+\sqrt{\sum^{N_i}_{j=1}\left(f_i(\mathbf{x}_{i,j}) - f(\mathbf{x}_{i,j})\right)^2}\right) \nonumber\\
    &\stackrel{\text{(f)}}{\leq} \nu_t\sum^M_{i=1}\omega_i \frac{\sqrt{N_i}}{\sigma^2}\left(2\sqrt{N_i}\sqrt{2\sigma^2\log\frac{8N_i}{\delta}}+d_i\sqrt{N_i}\right) \nonumber\\
    &= \nu_t\sum^M_{i=1}\omega_i \frac{N_i}{\sigma^2}\left(2\sqrt{2\sigma^2\log\frac{8N_i}{\delta}}+d_i\right) \nonumber\\
    &\triangleq \nu_t \alpha
\label{ucb_diff_proof}
\end{align}
which holds with probability $\geq 1-\delta/4$. (a) holds because $\overline{\sigma}_i(\mathbf{x})=\widetilde{\sigma}_i(\mathbf{x})$ for all $\mathbf{x} \in \mathcal{D}$, 
because 
the posterior standard deviation only depends on the input locations and is independent of the corresponding output responses; 
(b) follows from Cauchy-Schwarz inequality,
(c) follows from Lemma~\ref{frob_norm}, 
(d) results from the assumption w.l.o.g.~that $k\left(\mathbf{x}, \mathbf{x}'\right) \leq 1$ for all $\mathbf{x}, \mathbf{x}' \in \mathcal{D}$,
(e) follows from Lemma~\ref{bound_with_obs_noise}, 
(f) is obtained from the definition of the function gap: $d_i\triangleq \max_{j=1,\ldots,N_i}\left|f(\mathbf{x}_{i,j})-f_i(\mathbf{x}_{i,j})\right|$ for $i=1,\ldots,M$.
This completes the proof of Lemma~\ref{ucb_diff}.

\subsection{Proof of Theorem~\ref{regret_bound}} 
\label{app:proof_theorem_1}
To begin with, we need the following lemma showing a high-probability upper bound on the global maximum of the target function.
\begin{lemma}
\label{bound_opt_func_val}
Given $\delta \in (0,1)$.
Let $\mathbf{x}^*$ denote a global maximizer of the target function $f$, and $\alpha$ be as defined in Lemma~\ref{ucb_diff}. 
Suppose the RM-GP-UCB algorithm is run with the parameter $\nu_t\in [0, 1]$ for all $t\geq 1$.
Then, with probability $\geq 1-3\delta/4$,
\[
f(\mathbf{x}^*) \leq \overline{\zeta}_t(\mathbf{x}_t)+\nu_t\alpha \qquad \forall t\geq 1.
\]
\end{lemma}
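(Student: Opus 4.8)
The plan is to use the auxiliary quantity $\widetilde{\zeta}_t$ from~\eqref{acq_fake} as a bridge: first show that $\widetilde{\zeta}_t(\mathbf{x}^*)$ upper-bounds $f(\mathbf{x}^*)$ by plugging the two confidence inequalities (Lemmas~\ref{gaussian_bound} and~\ref{lemma:confidence:bound:tau}) into the convex combination defining $\widetilde{\zeta}_t$, then transfer this to $\overline{\zeta}_t(\mathbf{x}^*)$ via Lemma~\ref{ucb_diff}, and finally replace $\mathbf{x}^*$ by $\mathbf{x}_t$ using the fact that the algorithm picks $\mathbf{x}_t$ as a maximizer of the acquisition function $\overline{\zeta}_t$.

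Concretely, I would first condition on the intersection of the three events in Lemmas~\ref{gaussian_bound}, \ref{ucb_diff}, and~\ref{lemma:confidence:bound:tau}; by a union bound this intersection holds with probability $\geq 1-3\delta/4$, and it already carries the quantifier ``$\forall t\geq 1$'' because each of those statements is uniform in $t$ (or independent of $t$). On this event I would evaluate $\widetilde{\zeta}_t$ at the global maximizer $\mathbf{x}^*$: for each meta-task term, Lemma~\ref{lemma:confidence:bound:tau} gives $f(\mathbf{x}^*)\leq \widetilde{\mu}_{i}(\mathbf{x}^*)+\tau\widetilde{\sigma}_{i}(\mathbf{x}^*)$ for all $i=1,\ldots,M$; for the target-GP term, Lemma~\ref{gaussian_bound} gives $f(\mathbf{x}^*)\leq \mu_{t-1}(\mathbf{x}^*)+\beta_t\sigma_{t-1}(\mathbf{x}^*)$. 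Substituting these lower bounds into~\eqref{acq_fake} and using $\omega_i\geq 0$, $\sum_{i}\omega_i=1$, and $\nu_t\in[0,1]$ yields $\widetilde{\zeta}_t(\mathbf{x}^*)\geq \nu_t f(\mathbf{x}^*)+(1-\nu_t)f(\mathbf{x}^*)=f(\mathbf{x}^*)$.

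It then remains to transfer this from $\widetilde{\zeta}_t$ to $\overline{\zeta}_t$ and to $\mathbf{x}_t$. Lemma~\ref{ucb_diff} gives $\widetilde{\zeta}_t(\mathbf{x}^*)\leq \overline{\zeta}_t(\mathbf{x}^*)+\nu_t\alpha$, so $f(\mathbf{x}^*)\leq \overline{\zeta}_t(\mathbf{x}^*)+\nu_t\alpha$; and since RM-GP-UCB selects $\mathbf{x}_t$ as a maximizer of $\overline{\zeta}_t$ over $\mathcal{D}$, we have $\overline{\zeta}_t(\mathbf{x}^*)\leq \overline{\zeta}_t(\mathbf{x}_t)$, which gives $f(\mathbf{x}^*)\leq \overline{\zeta}_t(\mathbf{x}_t)+\nu_t\alpha$ for all $t\geq 1$ on the conditioned event.

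I do not expect a genuine obstacle here; the argument is essentially a single substitution into the convex combination. The two points to be careful about are the probability bookkeeping (combining exactly the three events once, up front, to get the $3\delta/4$ budget, and noting the ``$\forall t$'' is inherited from the lemmas rather than requiring a further union bound over $t$) and checking that the coefficient multiplying $\widetilde{\sigma}_{i}$ inside $\widetilde{\zeta}_t$ is precisely the confidence width $\tau$ of Lemma~\ref{lemma:confidence:bound:tau}, so that the meta-term really does dominate $f(\mathbf{x}^*)$.
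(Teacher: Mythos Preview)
Your proposal is correct and follows essentially the same route as the paper: show $f(\mathbf{x}^*)\leq \widetilde{\zeta}_t(\mathbf{x}^*)$ via the two confidence lemmas plugged into the convex combination, then pass to $\overline{\zeta}_t(\mathbf{x}^*)$ via Lemma~\ref{ucb_diff}, and finally to $\overline{\zeta}_t(\mathbf{x}_t)$ by the maximizer property of $\mathbf{x}_t$. Your probability bookkeeping (one union bound over the three $\delta/4$ events, with the ``$\forall t$'' inherited) is exactly what the paper does.
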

\begin{proof} 
Firstly, as a result of Lemma~\ref{gaussian_bound} and Lemma~\ref{lemma:confidence:bound:tau} (both hold with probability of $\geq 1 - \delta/4$), at any iteration $t\geq 1$ and for all $\mathbf{x} \in \mathcal{D}$, we have that with probability $\geq 1 - \delta/4 - \delta/4$, $\widetilde{\zeta}_t(\mathbf{x})$ is an upper bound on $f(\mathbf{x})$:
\begin{equation}
\begin{split}
    \widetilde{\zeta}_t&(\mathbf{x})-f(\mathbf{x})=\widetilde{\zeta}_t(\mathbf{x})-\left[\nu_t\sum^M_{i=1}\omega_i f(\mathbf{x}) + (1-\eta_t)f(\mathbf{x})\right]\\
    &=\nu_t\sum^M_{i=1}\omega_i\left[\widetilde{\mu}_{i}(\mathbf{x}) + \sqrt{\tau}\widetilde{\sigma}_{i}(\mathbf{x})-f(\mathbf{x})\right]+ (1-\nu_t)\left[\mu_{t-1}(\mathbf{x})+\sqrt{\beta_t}\sigma_{t-1}(\mathbf{x})-f(\mathbf{x})\right] \geq 0.
\end{split}
\label{upper_bound}
\end{equation}

Therefore, with probability $\geq 1-\delta/4 - \delta/4 - \delta/4$,
\begin{equation}
\begin{split}
    f(\mathbf{x}^*) \stackrel{\text{(a)}}{\leq} \widetilde{\zeta}_t(\mathbf{x}^*) \stackrel{\text{(b)}}{\leq} \overline{\zeta}_t(\mathbf{x}^*)+\nu_t\alpha \stackrel{\text{(c)}}{\leq} \overline{\zeta}_t(\mathbf{x}_t)+\nu_t\alpha
\end{split}
\label{eq:tmp}
\end{equation}
in which (a) results from~\eqref{upper_bound}, (b) is obtained via Lemma~\ref{ucb_diff} which holds with probability of $\geq 1-\delta/4$, and (c) follows from the policy for selecting $\mathbf{x}_t$, i.e., by maximizing~\eqref{acq_func}.
This completes the proof.
\end{proof}
Subsequently, we can show a high-probability upper bound on the instantaneous regret with the following lemma .
\begin{lemma}
\label{inst_regret_analysis}
Given $\delta \in (0,1)$. Let $\alpha$ be as defined in Lemma~\ref{ucb_diff}. Suppose the RM-GP-UCB algorithm is run with the parameters $\beta_t$, $\tau$ and $\nu_t$.
Then, with probability $\geq 1-3\delta/4$, $\forall t\geq 1$,
\[
r_t \leq 2\nu_t(\alpha+\tau)+2(1-\nu_t)\beta_t\sigma_{t-1}(\mathbf{x}_t).
\]
\end{lemma}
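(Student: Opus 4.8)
The plan is to obtain $r_t$ by chaining the four confidence/bridging lemmas already established, peeling the instantaneous regret $r_t = f(\mathbf{x}^*)-f(\mathbf{x}_t)$ into a ``meta'' contribution (controlled by $\alpha$ and $\tau$) and a ``target'' contribution (controlled by $\beta_t\sigma_{t-1}$). Concretely, I would first apply Lemma~\ref{bound_opt_func_val} to replace the optimum value, yielding $r_t \leq \overline{\zeta}_t(\mathbf{x}_t)+\nu_t\alpha - f(\mathbf{x}_t)$ on the event of probability $\geq 1-3\delta/4$ asserted there. Then I would use Lemma~\ref{ucb_diff} to pass from the true acquisition function to the auxiliary one, $\overline{\zeta}_t(\mathbf{x}_t) \leq \widetilde{\zeta}_t(\mathbf{x}_t)+\nu_t\alpha$, so that $r_t \leq \widetilde{\zeta}_t(\mathbf{x}_t)-f(\mathbf{x}_t)+2\nu_t\alpha$.

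Next I would expand $\widetilde{\zeta}_t(\mathbf{x}_t)-f(\mathbf{x}_t)$ using its definition~\eqref{acq_fake} together with $\sum_{i=1}^M\omega_i=1$, writing it as $\nu_t\sum_{i=1}^M\omega_i[\widetilde{\mu}_i(\mathbf{x}_t)+\tau\widetilde{\sigma}_i(\mathbf{x}_t)-f(\mathbf{x}_t)] + (1-\nu_t)[\mu_{t-1}(\mathbf{x}_t)+\beta_t\sigma_{t-1}(\mathbf{x}_t)-f(\mathbf{x}_t)]$. For each summand of the first bracket I would invoke the lower confidence bound $f(\mathbf{x}_t) \geq \widetilde{\mu}_i(\mathbf{x}_t)-\tau\widetilde{\sigma}_i(\mathbf{x}_t)$ from Lemma~\ref{lemma:confidence:bound:tau}, bounding it by $2\tau\widetilde{\sigma}_i(\mathbf{x}_t) \leq 2\tau$, where the last step uses the w.l.o.g.\ assumption $k(\mathbf{x},\mathbf{x})\leq 1$ (so that the posterior standard deviation $\widetilde{\sigma}_i$ is at most $1$). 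For the second bracket I would invoke the lower bound from Lemma~\ref{gaussian_bound}, bounding it by $2\beta_t\sigma_{t-1}(\mathbf{x}_t)$. Collecting these and using $\sum_i\omega_i=1$ gives $\widetilde{\zeta}_t(\mathbf{x}_t)-f(\mathbf{x}_t) \leq 2\nu_t\tau + 2(1-\nu_t)\beta_t\sigma_{t-1}(\mathbf{x}_t)$, and substituting back produces $r_t \leq 2\nu_t\alpha + 2\nu_t\tau + 2(1-\nu_t)\beta_t\sigma_{t-1}(\mathbf{x}_t) = 2\nu_t(\alpha+\tau)+2(1-\nu_t)\beta_t\sigma_{t-1}(\mathbf{x}_t)$, as required.

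For the probability accounting, the distinct events invoked are those of Lemma~\ref{gaussian_bound}, Lemma~\ref{ucb_diff}, and Lemma~\ref{lemma:confidence:bound:tau}, each holding with probability $\geq 1-\delta/4$; since these are exactly the events already combined inside Lemma~\ref{bound_opt_func_val}, a single union bound over the three still yields overall probability $\geq 1-3\delta/4$. I do not expect a genuine obstacle here: the statement is essentially a bookkeeping consequence of the earlier results, and the only points requiring care are (i) not double-counting the failure probabilities when the bound from Lemma~\ref{bound_opt_func_val} is combined with the direct uses of Lemmas~\ref{gaussian_bound}, \ref{ucb_diff}, and \ref{lemma:confidence:bound:tau}, and (ii) remembering to use the bounded-kernel assumption to collapse the meta-term to the constant $2\tau$ instead of leaving a residual $\widetilde{\sigma}_i$-dependent factor.
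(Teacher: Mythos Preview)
Your proposal is correct and follows essentially the same approach as the paper: both start from Lemma~\ref{bound_opt_func_val}, pass to $\widetilde{\zeta}_t$ via Lemma~\ref{ucb_diff}, split $\widetilde{\zeta}_t(\mathbf{x}_t)-f(\mathbf{x}_t)$ into its meta and target pieces, and bound these using Lemmas~\ref{lemma:confidence:bound:tau} and~\ref{gaussian_bound} together with $\widetilde{\sigma}_i\leq 1$. Your handling of the probability accounting (a single union bound over the three $\delta/4$ events, not re-counted when invoking Lemma~\ref{bound_opt_func_val}) also matches the paper's.
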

\begin{proof}
The instantaneous regret can be upper-bounded by
\begin{equation}
\begin{split}
    r_t &= f(\mathbf{x}^*)-f(\mathbf{x}_t) \stackrel{\text{(a)}}{\leq} \overline{\zeta}_t(\mathbf{x}_t)+\nu_t\alpha- f(\mathbf{x}_t)\\ 
    &\leq \underline{\overline{\zeta}_t(\mathbf{x}_t) - \widetilde{\zeta_t}(\mathbf{x}_t)} + \widetilde{\zeta_t}(\mathbf{x}_t)-f(\mathbf{x}_t)+\nu_t\alpha \\
    &\stackrel{\text{(b)}}{\leq} \nu_t\alpha + \nu_t\sum^M_{i=1}\omega_i \left[\widetilde{u}_{i}(\mathbf{x}_t) +\tau\widetilde{\sigma}_{i}(\mathbf{x}_t)\right] +(1-\nu_t)\left[u_{t-1}(\mathbf{x}_t)+\beta_t\sigma_{t-1}(\mathbf{x}_t)\right] \\
    &\qquad - f(\mathbf{x}_t) + \nu_t\alpha\\
    &= \nu_t\alpha + \nu_t\sum^M_{i=1}\omega_i \left[\widetilde{u}_{i}(\mathbf{x}_t) +\tau\widetilde{\sigma}_{i}(\mathbf{x}_t)\right] +(1-\nu_t)\left[u_{t-1}(\mathbf{x}_t)+\beta_t\sigma_{t-1}(\mathbf{x}_t)\right] \\
    &\qquad - \left[\nu_t\sum^M_{i=1}\omega_i f(\mathbf{x}_t) + (1-\nu_t)f(\mathbf{x}_t) \right] + \nu_t\alpha\\
    &\leq \nu_t\alpha + \nu_t\sum^M_{i=1}\omega_i \left[\widetilde{u}_{i}(\mathbf{x}_t) -f(\mathbf{x}_t)\right]+
    \nu_t\sum^M_{i=1}\omega_i\tau\widetilde{\sigma}_{i}(\mathbf{x}_t) \\
    &\qquad +(1-\nu_t)\left[u_{t-1}(\mathbf{x}_t)-f(\mathbf{x}_t)\right] + (1-\nu_t)\beta_t\sigma_{t-1}(\mathbf{x}_t) + \nu_t\alpha\\
    &\stackrel{\text{(c)}}{\leq} 2\nu_t\alpha + 2\nu_t\sum^M_{i=1}\omega_i\tau\widetilde{\sigma}_{i}(\mathbf{x}_t)+2(1-\nu_t)\beta_t\sigma_{t-1}(\mathbf{x}_t)\\
    &\stackrel{\text{(d)}}{\leq} 2\nu_t\alpha + 2\nu_t\tau+2(1-\nu_t)\beta_t\sigma_{t-1}(\mathbf{x}_t)\\
    &\leq 2\nu_t(\alpha+\tau)+2(1-\nu_t)\beta_t\sigma_{t-1}(\mathbf{x}_t)
\end{split}
\label{eq:analyze_insta_regret}
\end{equation}
which holds with probability $\geq 1-3\delta/4$. (a) follows from Lemma~\ref{bound_opt_func_val} which holds with probability of $\geq 1-3\delta/4$, 
(b) results from Lemma~\ref{ucb_diff} as well as the definition of $\widetilde{\zeta}_t(\mathbf{x}_t)$~\eqref{acq_fake}, 
(c) is a result of Lemma~\ref{gaussian_bound} and Lemma~\ref{lemma:confidence:bound:tau}, 
and (d) follows because $\widetilde{\sigma}_{i}(\mathbf{x}_t) \leq 1$ for all $\mathbf{x}_t \in \mathcal{D}$, which can be easily verified using the formula of the GP posterior variance~\eqref{gp_posterior} and the assumption that $k(\mathbf{x},\mathbf{x}')\leq1$ for all $\mathbf{x},\mathbf{x}'\in \mathcal{D}$.
The error probabilities $3\delta/4=\delta/4+\delta/4+\delta/4$ result from Lemmas~\ref{gaussian_bound},~\ref{ucb_diff} and~\ref{lemma:confidence:bound:tau}.
\end{proof}

Next, we need to connect the second term from Lemma~\ref{inst_regret_analysis} with the information gain. The following lemma, which is Lemma 5.3 of~\citep{srinivas2009gaussian}, defines the information gain on the target function from any set of observations.
\begin{lemma}
\label{info_gain}
Let $\mathbf{f}_T$ and $\mathbf{y}_T$ denote the set of function values and noisy observations of the target function respectively after $T$ iterations. Then, the information gain about $f$ from the first $T$ observations can be expressed as
\[
I(\mathbf{y}_T;\mathbf{f}_T)=\frac{1}{2}\sum^T_{t=1}\log \left[1+\sigma^{-2}\sigma^2_{t-1}(\mathbf{x}_t)\right].
\]
\end{lemma}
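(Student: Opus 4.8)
The plan is to reproduce the standard information-theoretic computation underlying this identity (the argument for Lemma~5.3 in~\citep{srinivas2009gaussian}). Writing the mutual information as a difference of differential entropies, $I(\mathbf{y}_T;\mathbf{f}_T)=H(\mathbf{y}_T)-H(\mathbf{y}_T\mid\mathbf{f}_T)$, the conditional term is immediate: given $\mathbf{f}_T=[f(\mathbf{x}_1),\ldots,f(\mathbf{x}_T)]^\top$, the observation model $y_t=f(\mathbf{x}_t)+\epsilon_t$ with i.i.d.\ $\epsilon_t\sim\mathcal{N}(0,\sigma^2)$ gives $\mathbf{y}_T\mid\mathbf{f}_T\sim\mathcal{N}(\mathbf{f}_T,\sigma^2 I)$, so $H(\mathbf{y}_T\mid\mathbf{f}_T)=\frac{1}{2}\log\big((2\pi e\sigma^2)^T\big)=\frac{T}{2}\log(2\pi e\sigma^2)$, which is independent of the query locations.

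For the first term I would apply the chain rule for differential entropy, $H(\mathbf{y}_T)=\sum_{t=1}^T H(y_t\mid y_1,\ldots,y_{t-1})$. The key fact is that under the GP prior, conditioned on the first $t-1$ noisy observations the predictive distribution of $y_t$ at the query $\mathbf{x}_t$ is Gaussian with mean $\mu_{t-1}(\mathbf{x}_t)$ and variance $\sigma^2_{t-1}(\mathbf{x}_t)+\sigma^2$, i.e.\ the GP posterior variance at $\mathbf{x}_t$ (read off from~\eqref{gp_posterior}) inflated by the observation-noise variance. Hence $H(y_t\mid y_{1:t-1})=\frac{1}{2}\log\big(2\pi e(\sigma^2_{t-1}(\mathbf{x}_t)+\sigma^2)\big)$, and subtracting the two entropies term by term gives
\[
I(\mathbf{y}_T;\mathbf{f}_T)=\sum_{t=1}^T\frac{1}{2}\log\frac{\sigma^2_{t-1}(\mathbf{x}_t)+\sigma^2}{\sigma^2}=\frac{1}{2}\sum_{t=1}^T\log\!\big(1+\sigma^{-2}\sigma^2_{t-1}(\mathbf{x}_t)\big),
\]
which is the claimed identity.

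The one place that requires care is that the inputs $\mathbf{x}_1,\ldots,\mathbf{x}_T$ are selected adaptively by RM-GP-UCB, so $\mathbf{x}_t$, and with it $\sigma^2_{t-1}(\mathbf{x}_t)$, is a function of $y_{1:t-1}$; the chain-rule step and the identification of the conditional law of $y_t$ must therefore be performed after conditioning on the realized selections. This is precisely the subtlety handled in~\citep{srinivas2009gaussian}: because the GP posterior covariance $\sigma_{t-1}^2(\cdot)$ depends only on the input locations $\mathbf{x}_1,\ldots,\mathbf{x}_{t-1}$ and not on the observed values, conditioning on $y_{1:t-1}$ pins down both the function $\sigma^2_{t-1}(\cdot)$ and the next query $\mathbf{x}_t$, so the Gaussian predictive law above continues to hold verbatim and the per-step entropy has the stated closed form for every realization. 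I expect this bookkeeping to be the only non-routine part of the argument; everything else is elementary Gaussian-entropy algebra.
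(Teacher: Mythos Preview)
Your proposal is correct and reproduces exactly the standard argument of Lemma~5.3 in~\citep{srinivas2009gaussian}, which the paper simply cites without proof; the entropy decomposition, the chain rule, and the handling of adaptive inputs via the observation that $\sigma_{t-1}^2(\cdot)$ depends only on input locations are all as in the original. There is nothing to add.
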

Subsequently, we can upper bound the second term from Lemma~\ref{inst_regret_analysis} (summed from iterations 1 to $T$) by the maximum information gain via the following lemma.
\begin{lemma}
\label{upper_bound_by_info_gain}
Suppose the RM-GP-UCB algorithm is run with the parameters $\beta_t$ $\forall t\geq 1$ and a non-increasing sequence $\nu_t \in [0,1]$ $\forall t \geq 1$. 
Define the maximum information gain as $\gamma_T=\max_{A\in \mathcal{D}, |A|=T}I(\mathbf{y}_A;\mathbf{f}_A)$ 
in which $\mathbf{f}_A$ and $\mathbf{y}_A$ represent the function values and noisy observations from a set $A$ of inputs of size $T$. Then,
\[
\sum^T_{t=1}\left[2(1-\nu_t)\beta_t\sigma_{t-1}(\mathbf{x}_t)\right]^2 \leq (1-\nu_T)^2 C_1 \beta_T^2 \gamma_T
\]
in which $C_1\triangleq \frac{8}{\log(1+\sigma^{-2})}$.
\end{lemma}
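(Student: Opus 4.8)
The plan is to follow the standard GP-UCB telescoping argument of \citet{srinivas2009gaussian}, adapted so that the extra $(1-\nu_t)$ factors can be absorbed into $(1-\nu_T)$ by monotonicity. First I would observe that both sequences $\{\beta_t\}$ and $\{1-\nu_t\}$ are non-decreasing in $t$: the sequence $\beta_t = B + \sigma\sqrt{2(\gamma_{t-1}+1+\log(4/\delta))}$ is non-decreasing because $\gamma_{t-1}$ grows with the number of observations, and $1-\nu_t$ is non-decreasing because $\nu_t$ is assumed non-increasing. Hence for every $t\leq T$ we have $(1-\nu_t)\beta_t \leq (1-\nu_T)\beta_T$, so that
\[
\sum_{t=1}^T \big[2(1-\nu_t)\beta_t\sigma_{t-1}(\mathbf{x}_t)\big]^2 \leq 4(1-\nu_T)^2\beta_T^2 \sum_{t=1}^T \sigma_{t-1}^2(\mathbf{x}_t).
\]

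Next I would bound $\sum_{t=1}^T \sigma_{t-1}^2(\mathbf{x}_t)$ by the maximum information gain. The key elementary fact is that $s \leq \frac{\sigma^{-2}}{\log(1+\sigma^{-2})}\log(1+s)$ for all $s \in [0,\sigma^{-2}]$, which follows from the concavity of $\log(1+\cdot)$ on this interval. Applying it with $s=\sigma^{-2}\sigma_{t-1}^2(\mathbf{x}_t)$ — which is legitimate since $\sigma_{t-1}^2(\mathbf{x}_t)\leq k(\mathbf{x}_t,\mathbf{x}_t)\leq 1$ and thus $s\leq \sigma^{-2}$ — gives $\sigma_{t-1}^2(\mathbf{x}_t) \leq \frac{1}{\log(1+\sigma^{-2})}\log\!\big(1+\sigma^{-2}\sigma_{t-1}^2(\mathbf{x}_t)\big)$. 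Summing over $t$ and invoking Lemma~\ref{info_gain} together with the definition of $\gamma_T$ (the information gain of any adaptively chosen set of size $T$ is at most $\gamma_T$) yields
\[
\sum_{t=1}^T \sigma_{t-1}^2(\mathbf{x}_t) \leq \frac{1}{\log(1+\sigma^{-2})}\sum_{t=1}^T \log\!\big(1+\sigma^{-2}\sigma_{t-1}^2(\mathbf{x}_t)\big) = \frac{2}{\log(1+\sigma^{-2})}I(\mathbf{y}_T;\mathbf{f}_T) \leq \frac{2\gamma_T}{\log(1+\sigma^{-2})}.
\]

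Combining the two displays gives $\sum_{t=1}^T [2(1-\nu_t)\beta_t\sigma_{t-1}(\mathbf{x}_t)]^2 \leq \frac{8}{\log(1+\sigma^{-2})}(1-\nu_T)^2\beta_T^2\gamma_T = (1-\nu_T)^2 C_1\beta_T^2\gamma_T$, which is the claim. There is no genuinely hard step here; the only points requiring a little care are verifying the elementary logarithmic inequality along with the range check $\sigma_{t-1}^2(\mathbf{x}_t)\leq 1$ that makes it applicable, and justifying the monotonicity of $\beta_t$ directly from its definition. The rest is substitution and the application of Lemma~\ref{info_gain}.
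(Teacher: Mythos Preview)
Your proof is correct and follows essentially the same approach as the paper: both use the monotonicity of $\beta_t$ and $1-\nu_t$ to replace these by $\beta_T$ and $1-\nu_T$, then apply the elementary inequality $s\leq \frac{\sigma^{-2}}{\log(1+\sigma^{-2})}\log(1+s)$ for $s\in[0,\sigma^{-2}]$ and invoke Lemma~\ref{info_gain}. The only cosmetic difference is that the paper carries out these two steps term-by-term before summing, whereas you first factor out $(1-\nu_T)^2\beta_T^2$ and then bound $\sum_t \sigma_{t-1}^2(\mathbf{x}_t)$ in one shot.
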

\begin{proof}
Each term inside the summation can be upper-bounded by
\begin{equation}
\label{bound_each_regret_term}
\begin{split}
4(1-\nu_t)^2\beta_t^2\sigma^2_{t-1}(\mathbf{x}_t) &\stackrel{\text{(a)}}{\leq} 4(1-\nu_T)^2\beta_T^2 \sigma^2 \left(\sigma^{-2} \sigma^2_{t-1}(\mathbf{x}_t) \right) \\
&\stackrel{\text{(b)}}{\leq} 4(1-\nu_T)^2\beta_T^2 \sigma^2 \left(\frac{\sigma^{-2}}{\log(1+\sigma^{-2})} \log\left(1+\sigma^{-2}\sigma^2_{t-1}(\mathbf{x}_t)\right)\right)\\
&= (1-\nu_T)^2\beta_T^2 \frac{8}{\log(1+\sigma^{-2})} \left[\frac{1}{2} \log\left(1+\sigma^{-2}\sigma^2_{t-1}(\mathbf{x}_t)\right)\right]
\end{split}
\end{equation}
in which (a) follows since $\beta_t$ is non-decreasing in $t$ and $\nu_t$ is non-increasing in $t$, 
(b) follows since $\sigma^{-2} x \leq \frac{\sigma^{-2}}{\log(1+\sigma^{-2})} \log(1+\sigma^{-2}x)$ for all $x\in (0,1]$ and $\sigma^2_{t-1}(\mathbf{x}_t)\in (0, 1]$.

As a result, the summation can be decomposed as
\begin{equation*}
\begin{split}
\sum^T_{t=1}\left[2(1-\nu_t)\beta_t\sigma_{t-1}(\mathbf{x}_t)\right]^2 &\stackrel{\text{(a)}}{\leq} (1-\nu_T)^2\beta_T^2 \frac{8}{\log(1+\sigma^{-2})} \sum^T_{t=1} \left[\frac{1}{2} \log\left(1+\sigma^{-2}\sigma^2_{t-1}(\mathbf{x}_t)\right)\right]\\
&\stackrel{\text{(b)}}{=}(1-\nu_T)^2\beta_T^2 \frac{8}{\log(1+\sigma^{-2})} I(\mathbf{y}_T;\mathbf{f}_T)\\
&\stackrel{\text{(c)}}{\leq} (1-\nu_T)^2 C_1 \beta_T^2 \gamma_T
\end{split}
\end{equation*}
in which (a) results from~\eqref{bound_each_regret_term}, (b) follows from Lemma~\ref{info_gain}, 
and (c) is obtained by making use of the definition of $C_1$ and $\gamma_T$.
\end{proof}
Finally, an upper bound on the cumulative regret follows from combining these supporting lemmas:
\begin{equation}
\begin{split}
    R_T&=\sum^T_{t=1}r_t \stackrel{\text{(a)}}{\leq} \sum^T_{t=1}\left[ 2\nu_t(\alpha+\tau)+2\left(1-\nu_t\right)\beta_t\sigma_{t-1}(\mathbf{x}_t)\right]\\
    &= 2(\alpha+\tau) \sum^T_{t=1} \nu_t + \sum^T_{t=1}2 (1-\nu_t)\beta_t\sigma_{t-1}(\mathbf{x}_t)\\
    &\stackrel{\text{(b)}}{\leq} 2(\alpha+\tau) \sum^T_{t=1} \nu_t + \sqrt{T} \sqrt{\sum^T_{t=1}\left[2 (1-\nu_t)\beta_t\sigma_{t-1}(\mathbf{x}_t)\right]^2}\\
    &\stackrel{\text{(c)}}{\leq} 2(\alpha+\tau) \sum^T_{t=1} \nu_t + \sqrt{C_1 T (1-\nu_T)^2\beta_T^2\gamma_T}\\
    &\stackrel{\text{(d)}}{\leq} 2(\alpha+\tau) \sum^T_{t=1} \nu_t + \beta_T\sqrt{C_1 T \gamma_T}
\end{split}
\label{cum_reg}
\end{equation}
which holds with probability $\geq 1 - 3\delta/4$. (a) is a result of Lemma~\ref{inst_regret_analysis}, (b) follows from Cauchy-Schwarz inequality, (c) is obtained using Lemma~\ref{upper_bound_by_info_gain}, and (d) follows since $1-\nu_T \leq 1$. This completes the proof.

If the meta-weights $\omega_i$'s are allowed to change with $t$ (i.e., when our online meta-weight optimization is used), then the proof here only needs to be modified to let $\alpha$ depend on $t$: 
$R_T \leq 2\tau \sum^T_{t=1} \nu_t + 2\sum^T_{t=1} \nu_t \alpha_t + \beta_T\sqrt{C_1 T \gamma_T}$.
In this case, the no-regret convergence guarantee of RM-GP-UCB (Sec.~\ref{subsec:theory:rm_gp_ucb}) is still preserved since in this case, we can simply upper-bound every $\omega_{i,t}$ by $1$. That is
$R_T \leq 2(\alpha'+\tau) \sum^T_{t=1} \nu_t + \beta_T\sqrt{C_1 T \gamma_T}$,
with $\alpha' \triangleq \sum^M_{i=1} \frac{N_i}{\sigma^2}(2\sqrt{2\sigma^2\log\frac{8N_i}{\delta}}+d_i)$.

\subsection{Meta-tasks Can Improve the Convergence by Accelerating Exploration}
\label{app:improved_bound}
Here, we utilize the analysis in Appendix~\ref{app:proof_theorem_1} to illustrate how the meta-tasks (if similar to the target task) can help RM-GP-UCB obtain a better regret bound than standard GP-UCB in the early stage of the algorithm. For simplicity, we focus on the most favorable scenario where all meta-functions have equal values to the target function at their corresponding input locations, i.e., all function gaps are $0$: 
$d_i = \max_{j=1,\ldots,N_i}\left|f(\mathbf{x}_{i,j})-f_i(\mathbf{x}_{i,j})\right|=0, \forall i=1,\ldots,M$.
Although not realistic, this scenario is useful for illustrating how the meta-tasks help our RM-GP-UCB algorithm achieve a better convergence at the initial stage.

In this case, according to the definition of $\widetilde{\zeta}_t$~\eqref{acq_fake} and $\overline{\zeta}_t$~\eqref{acq_func}, we have that $\widetilde{\zeta}_t(\mathbf{x})=\overline{\zeta}_t(\mathbf{x}), \forall \mathbf{x}\in\mathcal{D}, t\geq1$. As a result, the analysis of~\eqref{eq:tmp} in the proof of Lemma~\ref{bound_opt_func_val} can be similarly applied, yielding:
\begin{equation}
f(\mathbf{x}^*) \leq \widetilde{\zeta}_t(\mathbf{x}^*)=\overline{\zeta}_t(\mathbf{x}^*) \leq \overline{\zeta}_t(\mathbf{x}_t).
\end{equation}
Next, we can re-analyze the instantaneous regret following similar steps to~\eqref{eq:analyze_insta_regret}:
\begin{equation}
\begin{split}
r_t &= f(\mathbf{x}^*) - f(\mathbf{x}_t) \leq \overline{\zeta}_t(\mathbf{x}_t) - f(\mathbf{x}_t)\\
&\leq 2\nu_t\sum^M_{i=1}\omega_i\tau\overline{\sigma}_{i}(\mathbf{x}_t)+2(1-\nu_t)\beta_t\sigma_{t-1}(\mathbf{x}_t)\\
&=\underbrace{2\nu_t\left( \sum^M_{i=1}\omega_i\tau\overline{\sigma}_{i}(\mathbf{x}_t) - \beta_t\sigma_{t-1}(\mathbf{x}_t) \right)}_{A1} + \underbrace{2\beta_t\sigma_{t-1}(\mathbf{x}_t)}_{A2},
\end{split}
\label{eq:better_bound}
\end{equation}
in which some intermediate steps that are identical to those used in~\eqref{eq:analyze_insta_regret} have been omitted for simplicity.
Note that term $A_2$ in~\eqref{eq:better_bound} is identical to the upper bound on the instantaneous regret for the standard GP-UCB algorithm~\citep{srinivas2009gaussian}. Therefore, the meta-tasks affect the upper bound on the instantaneous regret through the term $A_1$.

Recall Theorem~\ref{regret_bound} has told us that we should choose $\nu_t \rightarrow 0$ as $t \rightarrow \infty$. In the initial stage of the algorithm when $\nu_t$ is large, the impact of $A_1$ on the regret of the algorithm is large. In this case, the meta-tasks improve the upper bound on the instantaneous regret (compared with standard GP-UCB) if $A_1 < 0$, that is:
\begin{equation}
\sum^M_{i=1}\omega_i \overline{\sigma}_{i}(\mathbf{x}_t) < \frac{\beta_t}{\tau}\sigma_{t-1}(\mathbf{x}_t).
\label{eq:better_bound_2}
\end{equation}
In other words, RM-GP-UCB converges faster than standard GP-UCB in the initial stage if the (weighted combination of) meta-tasks have smaller uncertainty (i.e., posterior standard deviation) at $\mathbf{x}_t$ compared with the target task (scaled by $\beta_t/\tau$). 
Fortunately, in the early stage of the algorithm, this condition is highly likely to be satisfied: 
When the number of observations of the target task is small, the posterior standard deviation of the target GP posterior (i.e., RHS of Equation~\eqref{eq:better_bound_2}) is usually large; therefore, Equation~\eqref{eq:better_bound_2} is highly likely to be satisfied.
This insight turns out to have an intuitive and elegant interpretation as well.
In the initial stage of the standard GP-UCB algorithm, due to the lack of observations, the algorithm \emph{has large uncertainty} regarding the objective function and hence tends to \emph{explore}; however, the meta-tasks (assuming that they are similar to the target task) provides additional information for the algorithm, which \emph{reduces the uncertainty} about the objective function and hence \emph{decreases the requirement for initial exploration}. To summarize, in the initial stage, the meta-tasks, if similar to the target task, help RM-GP-UCB achieve smaller regret upper bound (hence converge faster) than GP-UCB by reducing the degree of exploration.
In less favorable scenarios where the function gaps are nonzero (i.e., the meta-functions are not exactly equal to the target function), some amount of errors will be introduced to the upper bound on the instantaneous regret~\eqref{eq:better_bound}. As a results, a positive error term will be added to the LHS of~\eqref{eq:better_bound_2}, making the theoretical condition for a faster convergence~\eqref{eq:better_bound_2} harder to satisfy.
At later stages where $\nu_t$ is already small and close to $0$, the impact of the term $A_1$ is significantly diminished, thus allowing our RM-GP-UCB algorithm to converge to no regret at a similar rate to standard GP-UCB.

\section{Proof of Theorem~\ref{regret_bound_ts}}
\label{app:proof:theorem:ts}
Our theoretical analysis of RM-GP-TS shares similarity with the works of~\citep{dai2020federated,dai2021differentially} but has important differences, e.g., unlike the works of~\citep{dai2020federated,dai2021differentially}, RM-GP-TS does not suffer from the error introduced by random Fourier features approximation since we do not need to consider the issues of communication efficiency and retaining (hence not transmitting) the raw data.

Based on the acquisition function $\overline{\zeta}_t$ for RM-GP-TS~\eqref{eq:acq_func_ts} (we have again removed the superscript for simplicity), define $\mathcal{E}^1_t$ as the event that $\overline{\zeta}_t(\mathbf{x}) = f^t(\mathbf{x})$ which happens with probability $1-\nu_t$, and define $\mathcal{E}^2_t$ as the event that $\overline{\zeta}_t(\mathbf{x}) = {\sum}^M_{i=1}\omega_i \left[\overline{f}^t_{i}(\mathbf{x})\right]$ which happens with probability $\nu_t$.
Define $\mathcal{F}_{t-1}$ as the filtration containing the history of input-output pairs of the target task up to and including iteration $t-1$.

\begin{lemma}
\label{confidence_bound:ts}
With $\tau$ defined in Lemma~\ref{lemma:confidence:bound:tau}, we have that
\begin{equation*}
|f_i(\mathbf{x})-\overline{\mu}_{i}(\mathbf{x})| \leq \tau\overline{\sigma}_{i}(\mathbf{x}) \qquad \forall \mathbf{x}\in \mathcal{D},\, i=1,...,M
\end{equation*}
which holds with probability $\geq1-\delta/4$.
\end{lemma}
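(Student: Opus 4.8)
The plan is to apply the self-normalized kernelized concentration bound of Theorem 2 of~\citep{chowdhury2017kernelized} separately to each meta-function $f_i$, exactly as Lemmas~\ref{gaussian_bound} and~\ref{lemma:confidence:bound:tau} do for the target function, and then to tie the $M$ resulting bounds together with a union bound. Concretely, for a fixed $i$, the GP posterior mean $\overline{\mu}_i$ and standard deviation $\overline{\sigma}_i$ are computed from the $N_i$ noisy observations $\{(\mathbf{x}_{i,j},y_{i,j})\}_{j=1}^{N_i}$ of meta-task $i$, whose observation noise is $\sigma$-sub-Gaussian, and $f_i$ lies in the RKHS with $\norm{f_i}_k \le B$ by assumption. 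Theorem 2 of~\citep{chowdhury2017kernelized} then gives, with probability $\ge 1-\delta'$,
\[
|f_i(\mathbf{x}) - \overline{\mu}_i(\mathbf{x})| \le \left(B + \sigma\sqrt{2(\gamma_{N_i} + 1 + \log(1/\delta'))}\right)\overline{\sigma}_i(\mathbf{x}) \qquad \forall \mathbf{x}\in\mathcal{D}.
\]

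Next I would set $\delta' = \delta/(4M)$ and take a union bound over $i=1,\ldots,M$, so that the above holds simultaneously for all $i$ with probability $\ge 1-\delta/4$. Finally, since the maximum information gain $\gamma_n$ is non-decreasing in $n$ and $N_i \le N$ for all $i$ (with $N$ as in Lemma~\ref{lemma:confidence:bound:tau}), we have $\gamma_{N_i} \le \gamma_N$, hence
\[
B + \sigma\sqrt{2(\gamma_{N_i} + 1 + \log(4M/\delta))} \le B + \sigma\sqrt{2(\gamma_N + 1 + \log(4M/\delta))} = \tau,
\]
and because $\overline{\sigma}_i(\mathbf{x})\ge 0$ this upgrades each per-task bound to $|f_i(\mathbf{x})-\overline{\mu}_i(\mathbf{x})| \le \tau\,\overline{\sigma}_i(\mathbf{x})$ for all $\mathbf{x}\in\mathcal{D}$ and all $i$, which is the claim.

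I do not expect a genuine obstacle here: this lemma is the ``real-observation'' analogue of Lemma~\ref{lemma:confidence:bound:tau} (with the actual meta-observations $y_{i,j}$ and meta-functions $f_i$ in place of the hypothetical target observations), so the only care needed is bookkeeping --- allocating $\delta/(4M)$ of the failure probability to each of the $M$ tasks so that the union bound produces the $\log(4M/\delta)$ term inside $\tau$, and invoking monotonicity of $\gamma_n$ to uniformize the confidence widths across tasks with differing $N_i$. One should also verify that the regularity assumptions on the meta-functions stated in the main paper (bounded RKHS norm $\norm{f_i}_k \le B$, $\sigma$-sub-Gaussian meta-observation noise) are exactly the hypotheses of Theorem 2 of~\citep{chowdhury2017kernelized}, so that the theorem applies verbatim to each $f_i$.
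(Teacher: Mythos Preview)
Your proposal is correct and is exactly the paper's approach: the paper simply states that the lemma ``also follows from Theorem 2 of~\citep{chowdhury2017kernelized}'' in the same way as Lemmas~\ref{gaussian_bound} and~\ref{lemma:confidence:bound:tau}, and your write-up just unpacks that one-liner --- the per-task application of Theorem 2, the union bound over the $M$ meta-tasks (which produces the $\log(4M/\delta)$ term already present in $\tau$), and the monotonicity of $\gamma_n$ to absorb the $\gamma_{N_i}$ into $\gamma_N$.
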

Similar to Lemma~\ref{gaussian_bound} and Lemma~\ref{lemma:confidence:bound:tau}, Lemma~\ref{confidence_bound:ts} also follows from Theorem 2 of~\citep{chowdhury2017kernelized}.
Next, we also need the following lemma showing the concentration of functions sampled from the GP posterior around the posterior mean, for both the target function and the meta-functions.
\begin{lemma}
\label{lemma:concentration:sampled:function:around:mean}
With $\beta_t$ defined in Lemma~\ref{gaussian_bound} and $\tau$ defined in Lemma~\ref{lemma:confidence:bound:tau}, we have that
\[
|f^t(\mathbf{x}) - \mu_{t-1}(\mathbf{x})| \leq \beta_t \sqrt{2\log(\frac{|\mathcal{D}|t^2 2\pi^2}{\delta})} \sigma_{t-1}(\mathbf{x}), \qquad \forall \mathbf{x}\in \mathcal{D},\, t\geq1,
\]
which holds with probability $\geq 1 - \delta/12$, and that
\[
|f_i^t(\mathbf{x}) - \overline{\mu}_{i}(\mathbf{x})| \leq \tau \sqrt{2\log( \frac{M |\mathcal{D}|t^2 2\pi^2}{\delta})} \overline{\sigma}_{i}(\mathbf{x}), \qquad \forall \mathbf{x}\in \mathcal{D},\, t\geq 1,\, i=1,...,M
\]
which holds with probability $\geq 1 - \delta/12$.
\end{lemma}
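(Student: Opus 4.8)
The plan is to obtain both inequalities from the fact that, conditioned on the history, each sampled function is marginally Gaussian at every input in the \emph{finite} domain $\mathcal{D}$, and then apply the standard Gaussian tail bound~\eqref{standard_gaussian} followed by a union bound over $\mathbf{x}$, over $t$, and (for the meta-functions) over $i$.

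First I would handle the target function. Recall that in RM-GP-TS the sampled function $f^t$ is drawn from the GP posterior with (inflated) variance $\beta_t^2\sigma_{t-1}^2$, so that conditioned on $\mathcal{F}_{t-1}$, for each fixed $\mathbf{x}\in\mathcal{D}$ we have $f^t(\mathbf{x})\sim\mathcal{N}\!\left(\mu_{t-1}(\mathbf{x}),\beta_t^2\sigma_{t-1}^2(\mathbf{x})\right)$, i.e. $(f^t(\mathbf{x})-\mu_{t-1}(\mathbf{x}))/(\beta_t\sigma_{t-1}(\mathbf{x}))$ is a standard normal. Applying~\eqref{standard_gaussian} with $c=c_t\triangleq\sqrt{2\log(2\pi^2|\mathcal{D}|t^2/\delta)}$ gives $\mathbb{P}\!\left(|f^t(\mathbf{x})-\mu_{t-1}(\mathbf{x})|>c_t\beta_t\sigma_{t-1}(\mathbf{x})\mid\mathcal{F}_{t-1}\right)\leq e^{-c_t^2/2}=\delta/(2\pi^2|\mathcal{D}|t^2)$. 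Since this bound does not depend on $\mathcal{F}_{t-1}$, it also holds unconditionally. A union bound over all $\mathbf{x}\in\mathcal{D}$ and all $t\geq1$ then yields a total failure probability of at most $\sum_{t\geq1}|\mathcal{D}|\cdot\delta/(2\pi^2|\mathcal{D}|t^2)=(\delta/(2\pi^2))\sum_{t\geq1}t^{-2}=\delta/12$, using $\sum_{t\geq1}t^{-2}=\pi^2/6$. Noting $c_t=\sqrt{2\log(|\mathcal{D}|t^2 2\pi^2/\delta)}$ gives the first inequality.

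The meta-function case is entirely analogous: $f_i^t$ is sampled from the meta-GP posterior with variance $\tau^2\overline{\sigma}_i^2$, so $(f_i^t(\mathbf{x})-\overline{\mu}_i(\mathbf{x}))/(\tau\overline{\sigma}_i(\mathbf{x}))$ is standard normal, and~\eqref{standard_gaussian} with $c=\sqrt{2\log(2\pi^2 M|\mathcal{D}|t^2/\delta)}$ gives a per-$(\mathbf{x},i,t)$ failure probability of $\delta/(2\pi^2 M|\mathcal{D}|t^2)$; a union bound over the $|\mathcal{D}|$ inputs, the $M$ meta-tasks, and all $t\geq1$ again sums to $\delta/12$, which establishes the second inequality.

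This argument is essentially bookkeeping, so there is no serious obstacle. The one point requiring care is the conditioning step: the Gaussian tail bound must be applied conditionally on $\mathcal{F}_{t-1}$ (so that $\mu_{t-1},\sigma_{t-1},\overline{\mu}_i,\overline{\sigma}_i$ are deterministic), after which one observes that the resulting bound is a fixed number and hence valid unconditionally before summing over $t$. The other mild subtlety is matching the constants inside the logarithm to the $\sum_{t\geq1}t^{-2}=\pi^2/6$ factor so that the union bound collapses exactly to $\delta/12$; the choices of $c_t$ above are tuned precisely for this.
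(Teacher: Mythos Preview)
Your proposal is correct and takes essentially the same approach as the paper: the paper simply cites Lemma~5 of \citep{chowdhury2017kernelized} (which is precisely the pointwise Gaussian tail bound you invoke) together with union bounds over $\mathbf{x}\in\mathcal{D}$, over $t\geq 1$, and for the second inequality over $i=1,\ldots,M$. Your write-up just makes these steps explicit and verifies that the constants inside the logarithm collapse the union bound to $\delta/12$ via $\sum_{t\geq 1}t^{-2}=\pi^2/6$.
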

The proof of Lemma~\ref{lemma:concentration:sampled:function:around:mean} follows straightforwardly from Lemma 5 of~\citep{chowdhury2017kernelized}, together with a union bound over all $\mathbf{x}\in\mathcal{D}$ and over all $t\geq 1$, as well as an additional union bound over all $M$ meta-tasks for the second inequality.

\begin{lemma}
\label{lemma:bound:ft:weighted:fti}
Define $d_i' \triangleq \max_{\mathbf{x}\in\mathcal{D}}| f(\mathbf{x}) - f_i(\mathbf{x}) |$.
Define $c_t \triangleq \beta_t \left(1 + \sqrt{2\log(\frac{|\mathcal{D}|t^2 2\pi^2}{\delta})}\right)$, and $c_t' \triangleq \tau \left(1 + \sqrt{2\log( \frac{M |\mathcal{D}|t^2 2\pi^2}{\delta})}\right)$.
With probability $\geq 1 - \delta/4 - \delta/4 - \delta/12 - \delta/12 = 1 - 2\delta/3$, we have that
\[
|f^t(\mathbf{x}) - \sum^M_{i=1} \omega_i f_i^t(\mathbf{x})| \leq c_t  + c_t' + \sum^M_{i=1}\omega_i d_i', \qquad\forall \mathbf{x}\in\mathcal{D}, t\geq 1.
\]
\end{lemma}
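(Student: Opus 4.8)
The plan is to bound $|f^t(\mathbf{x}) - \sum_{i=1}^M \omega_i f_i^t(\mathbf{x})|$ by a telescoping triangle inequality that inserts the GP posterior means of the target and meta-functions as intermediate reference points, then invoke the four concentration lemmas already established and finish with a union bound.

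First, since $\sum_{i=1}^M \omega_i = 1$ and $\omega_i \geq 0$, I would write $f^t(\mathbf{x}) - \sum_{i=1}^M \omega_i f_i^t(\mathbf{x}) = \sum_{i=1}^M \omega_i\left(f^t(\mathbf{x}) - f_i^t(\mathbf{x})\right)$, so that it suffices to bound $|f^t(\mathbf{x}) - f_i^t(\mathbf{x})|$ uniformly over $\mathbf{x}\in\mathcal{D}$ and $i=1,\ldots,M$ and then take the convex combination. For a fixed $i$, I would insert the chain $\mu_{t-1}(\mathbf{x}) \to f(\mathbf{x}) \to f_i(\mathbf{x}) \to \overline{\mu}_i(\mathbf{x})$ between $f^t(\mathbf{x})$ and $f_i^t(\mathbf{x})$ and apply the triangle inequality to split $|f^t(\mathbf{x}) - f_i^t(\mathbf{x})|$ into five pieces.

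Second, I would bound the pieces: $|f^t(\mathbf{x}) - \mu_{t-1}(\mathbf{x})|$ via the first inequality of Lemma~\ref{lemma:concentration:sampled:function:around:mean} and $|\mu_{t-1}(\mathbf{x}) - f(\mathbf{x})|$ via Lemma~\ref{gaussian_bound}; adding these and using $\sigma_{t-1}(\mathbf{x}) \leq 1$ (which follows from the normalization $k(\mathbf{x},\mathbf{x}')\leq1$, exactly as in step~(d) of Lemma~\ref{inst_regret_analysis}) gives precisely $c_t$. The middle piece satisfies $|f(\mathbf{x}) - f_i(\mathbf{x})| \leq d_i'$ by definition of $d_i'$. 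The remaining pieces $|f_i(\mathbf{x}) - \overline{\mu}_i(\mathbf{x})|$ and $|\overline{\mu}_i(\mathbf{x}) - f_i^t(\mathbf{x})|$ are bounded via Lemma~\ref{confidence_bound:ts} and the second inequality of Lemma~\ref{lemma:concentration:sampled:function:around:mean} respectively; adding them and using $\overline{\sigma}_i(\mathbf{x}) \leq 1$ gives $c_t'$. Hence $|f^t(\mathbf{x}) - f_i^t(\mathbf{x})| \leq c_t + c_t' + d_i'$ for all $\mathbf{x}$ and $i$, and summing against $\omega_i$ with $\sum_i\omega_i=1$ yields $c_t + c_t' + \sum_{i=1}^M \omega_i d_i'$.

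Third, for the probability I would union-bound over the four invoked events, whose failure probabilities are $\delta/4$ (Lemma~\ref{gaussian_bound}), $\delta/4$ (Lemma~\ref{confidence_bound:ts}), $\delta/12$ (first part of Lemma~\ref{lemma:concentration:sampled:function:around:mean}) and $\delta/12$ (second part), summing to $2\delta/3$. I do not anticipate a serious obstacle — this is a telescoping argument — but the details requiring care are: using the correct meta-function confidence bound, namely Lemma~\ref{confidence_bound:ts} (phrased via $\overline{\mu}_i,\overline{\sigma}_i$) rather than Lemma~\ref{lemma:confidence:bound:tau} (phrased via the hypothetical $\widetilde{\mu}_i,\widetilde{\sigma}_i$); discharging the posterior standard deviations via the kernel normalization so the bound becomes uniform in $\mathbf{x}$; and keeping the union-bound bookkeeping consistent with the claimed $1-2\delta/3$.
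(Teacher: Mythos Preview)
Your proposal is correct and follows essentially the same approach as the paper: the paper likewise reduces to bounding $|f^t(\mathbf{x})-f_i^t(\mathbf{x})|$ via the same telescoping chain through $\mu_{t-1}$, $f$, $f_i$, $\overline{\mu}_i$, invokes exactly Lemmas~\ref{gaussian_bound}, \ref{confidence_bound:ts}, and the two parts of Lemma~\ref{lemma:concentration:sampled:function:around:mean}, discharges the posterior standard deviations via $\sigma_{t-1}(\mathbf{x}),\overline{\sigma}_i(\mathbf{x})\leq 1$, and tallies the same $\delta/4+\delta/4+\delta/12+\delta/12$ failure probability. The only cosmetic difference is that the paper first packages the two pairs of terms as $|f^t(\mathbf{x})-f(\mathbf{x})|\leq c_t\sigma_{t-1}(\mathbf{x})$ and $|f_i^t(\mathbf{x})-f_i(\mathbf{x})|\leq c_t'\overline{\sigma}_i(\mathbf{x})$ before applying a three-term triangle inequality, whereas you describe a single five-term split.
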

\begin{proof}
Firstly, we can bound the difference between the target function and a sampled function from its GP posterior.
\begin{equation}
\begin{split}
|f^t(\mathbf{x}) - f(\mathbf{x})| &\leq |f^t(\mathbf{x}) - \mu_{t-1}(\mathbf{x})| + |\mu_{t-1}(\mathbf{x}) - f(\mathbf{x})|\\
&\stackrel{(a)}{\leq} \beta_t \sqrt{2\log(\frac{|\mathcal{D}|t^2 2\pi^2}{\delta})} \sigma_{t-1}(\mathbf{x}) + \beta_t \sigma_{t-1}(\mathbf{x})\\
&=c_t \sigma_{t-1}(\mathbf{x}),
\end{split}
\label{eq:bound:f:ft}
\end{equation}
where $(a)$ results from Lemma~\ref{lemma:concentration:sampled:function:around:mean} and Lemma~\ref{gaussian_bound}, and hence holds with probability of $\geq 1 - \delta/12 - \delta/4$. Next, we do the same for all meta-functions $i=1,\ldots,M$.
\begin{equation}
\begin{split}
|f^t_i(\mathbf{x}) - f_i(\mathbf{x})| &\leq |f^t_i(\mathbf{x}) - \overline{\mu}_{i}(\mathbf{x})| + |\overline{\mu}_{i}(\mathbf{x}) - f_i(\mathbf{x})|\\
&\stackrel{(a)}{\leq} \tau\overline{\sigma}_{i}(\mathbf{x}) + \tau \sqrt{2\log( \frac{M |\mathcal{D}|t^2 2\pi^2}{\delta})} \overline{\sigma}_{i}(\mathbf{x})\\
&= c_t' \overline{\sigma}_{i}(\mathbf{x}),
\end{split}
\end{equation}
where $(a)$ results from Lemma~\ref{lemma:concentration:sampled:function:around:mean} and Lemma~\ref{confidence_bound:ts}, and hence also holds with probability of $\geq 1 - \delta/12 - \delta/4$.
Therefore, combining the above two inequalities gives us:
\begin{equation}
\begin{split}
|f^t(\mathbf{x}) - f_i^t(\mathbf{x})| &\leq |f^t(\mathbf{x}) - f(\mathbf{x})| + |f(\mathbf{x}) - f_i(\mathbf{x})| + |f_i(\mathbf{x}) - f^t_i(\mathbf{x})|\\
&\leq c_t \sigma_{t-1}(\mathbf{x}) + c_t' \overline{\sigma}_{i}(\mathbf{x}) + d_i'\\
&\leq c_t \sigma_{t-1}(\mathbf{x}) + c_t' + d_i',
\end{split}
\end{equation}
in which the last inequality follows since $\overline{\sigma}_{i}(\mathbf{x})\leq 1$.
Finally, the lemma can be proved as:
\begin{equation}
\begin{split}
|f^t(\mathbf{x}) - \sum^M_{i=1} \omega_i f_i^t(\mathbf{x})| &\leq \sum^M_{i=1} \omega_i |f^t(\mathbf{x}) - f_i^t(\mathbf{x})|\\
&\leq \sum^M_{i=1}\omega_i \left( c_t \sigma_{t-1}(\mathbf{x}) + c_t' + d_i' \right)\\
&\leq c_t + c_t' + \sum^M_{i=1}\omega_i d_i'.
\end{split}
\end{equation}
\end{proof}

Next, we define the set of "saturated points" in an iteration $t$, which are those inputs which incur large regrets in iteration $t$.
\begin{definition}
\label{def:saturated:point}
At iteration $t$, define the set of saturated points as
\[
S_t \triangleq \{ \mathbf{x}\in\mathcal{D} | \Delta(\mathbf{x}) > c_t \sigma_{t-1}(\mathbf{x}) \},
\]
where $\Delta(\mathbf{x}) \triangleq f(\mathbf{x}^*) - f(\mathbf{x})$.
\end{definition}

The next lemma will be useful in proving that the input we query in iteration $t$ is unsaturated (i.e., in proving Lemma~\ref{lemma:x_t:unsaturated}), and its proof makes use of Gaussian anti-concentration inequality.
\begin{lemma}
\label{lemma:use:gaussian:anti}
With probability of $\geq 1 - \delta/4$,
\[
\mathbb{P}\left( f^t(\mathbf{x}) > f(\mathbf{x}) | \mathcal{F}_{t-1}, \mathcal{E}^1_t\right) \geq p,\qquad \forall t\geq 1.
\]
where $p\triangleq \frac{e^{-1}}{4\sqrt{\pi}}$.
\end{lemma}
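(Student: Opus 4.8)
\textbf{Proof proposal for Lemma~\ref{lemma:use:gaussian:anti}.}
The plan is the standard Thompson-sampling anti-concentration argument (as in~\citep{chowdhury2017kernelized,dai2020federated}). The key point is that on the event $\mathcal{E}^1_t$ the acquisition function $\overline{\zeta}_t$ equals a sample $f^t$ drawn from the GP posterior of the \emph{target} function, so that for any fixed $\mathbf{x}\in\mathcal{D}$, conditioned on $\mathcal{F}_{t-1}$ and $\mathcal{E}^1_t$, the scalar $f^t(\mathbf{x})$ is Gaussian with mean $\mu_{t-1}(\mathbf{x})$ and standard deviation $\beta_t\sigma_{t-1}(\mathbf{x})$ (this is exactly the sampling distribution underlying Lemma~\ref{lemma:concentration:sampled:function:around:mean}, where the posterior covariance is inflated by $\beta_t^2$).

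First I would condition on the event of Lemma~\ref{gaussian_bound}, which holds with probability $\geq 1-\delta/4$ and, crucially, holds uniformly over all $\mathbf{x}\in\mathcal{D}$ and all $t\geq 1$ simultaneously; on this event $f(\mathbf{x})\leq \mu_{t-1}(\mathbf{x})+\beta_t\sigma_{t-1}(\mathbf{x})$. Hence
\[
\mathbb{P}\left(f^t(\mathbf{x}) > f(\mathbf{x}) \mid \mathcal{F}_{t-1},\mathcal{E}^1_t\right) \;\geq\; \mathbb{P}\left(f^t(\mathbf{x}) > \mu_{t-1}(\mathbf{x})+\beta_t\sigma_{t-1}(\mathbf{x}) \mid \mathcal{F}_{t-1},\mathcal{E}^1_t\right).
\]
Since the target observations are noisy we have $\sigma_{t-1}(\mathbf{x})>0$, so standardising via $Z \triangleq (f^t(\mathbf{x})-\mu_{t-1}(\mathbf{x}))/(\beta_t\sigma_{t-1}(\mathbf{x})) \sim \mathcal{N}(0,1)$ turns the right-hand side into $\mathbb{P}(Z>1)$. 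Finally I would invoke the Gaussian anti-concentration inequality $\mathbb{P}(Z>c)\geq e^{-c^2}/(4\sqrt{\pi}\,c)$ at $c=1$, giving $\mathbb{P}(Z>1)\geq e^{-1}/(4\sqrt{\pi})=p$, which proves the claim.

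The step needing the most care is the bookkeeping of the two nested layers of randomness: the $(1-\delta/4)$-probability event is over the observation noise / target function and is precisely the event of Lemma~\ref{gaussian_bound} — already uniform in $t$, so no extra union bound over iterations is incurred here — whereas $p$ is a lower bound on the \emph{inner} Thompson-sampling randomness, conditional on $\mathcal{F}_{t-1}$ and $\mathcal{E}^1_t$. One must also check that the anti-concentration step depends only on $\beta_t\sigma_{t-1}(\mathbf{x})>0$ and is otherwise insensitive to the (growing) magnitude of $\beta_t$, which is exactly why the sampled variance is scaled by $\beta_t^2$; were it not, the reduction would land on $\mathbb{P}(Z>\beta_t)$ and no iteration-independent constant $p$ could be obtained.
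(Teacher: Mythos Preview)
Your proposal is correct and follows essentially the same route as the paper's proof: both condition on the event of Lemma~\ref{gaussian_bound} (giving the $1-\delta/4$), standardise $f^t(\mathbf{x})$ using the sampling distribution $\mathcal{N}(\mu_{t-1}(\mathbf{x}),\beta_t^2\sigma_{t-1}^2(\mathbf{x}))$, and then apply the Gaussian anti-concentration bound $\mathbb{P}(Z>c)\geq e^{-c^2}/(4\sqrt{\pi}\,c)$ at $c=1$. The only cosmetic difference is that the paper introduces $\theta_t\triangleq|f(\mathbf{x})-\mu_{t-1}(\mathbf{x})|/(\beta_t\sigma_{t-1}(\mathbf{x}))$, bounds $\mathbb{P}(Z>\theta_t)\geq e^{-\theta_t^2}/(4\sqrt{\pi}\,\theta_t)$, and then uses $\theta_t\leq 1$ together with the monotonicity of $c\mapsto e^{-c^2}/c$ on $(0,1]$, whereas you directly replace $f(\mathbf{x})$ by its upper confidence bound before standardising; the two reductions are equivalent.
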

\begin{proof}
Define $\theta_t \triangleq \frac{|f(\mathbf{x}) - \mu_{t-1}(\mathbf{x}) |}{\beta_t \sigma_{t-1}(\mathbf{x})}$. 
\begin{equation}
\begin{split}
\mathbb{P}\left( f^t(\mathbf{x}) > f(\mathbf{x}) | \mathcal{F}_{t-1}, \mathcal{E}^1_t\right) &= \mathbb{P}\left( \frac{f^t(\mathbf{x}) - \mu_{t-1}(\mathbf{x})}{\beta_t \sigma_{t-1}(\mathbf{x})} > \frac{f(\mathbf{x}) - \mu_{t-1}(\mathbf{x})}{\beta_t \sigma_{t-1}(\mathbf{x})} | \mathcal{F}_{t-1}, \mathcal{E}^1_t\right)\\
&\geq \mathbb{P}\left( \frac{f^t(\mathbf{x}) - \mu_{t-1}(\mathbf{x})}{\beta_t \sigma_{t-1}(\mathbf{x})} > \frac{|f(\mathbf{x}) - \mu_{t-1}(\mathbf{x}) |}{\beta_t \sigma_{t-1}(\mathbf{x})} | \mathcal{F}_{t-1}, \mathcal{E}^1_t\right)\\
&= \mathbb{P}\left( \frac{f^t(\mathbf{x}) - \mu_{t-1}(\mathbf{x})}{\beta_t \sigma_{t-1}(\mathbf{x})} > \theta_t | \mathcal{F}_{t-1}, \mathcal{E}^1_t\right)\\
&\stackrel{(a)}{\geq} \frac{e^{-\theta_t^2}}{4\sqrt{\pi} \theta_t} \stackrel{(b)}{\geq} \frac{e^{-1}}{4\sqrt{\pi}}.
\end{split}
\end{equation}
Note that due to the way in which the function $f^t$ is sampled from the GP posterior, i.e., $f^t \sim \mathcal{GP}\left(\mu_{t-1}(\cdot), \beta_t^2 \sigma_{t-1}^2(\cdot)\right)$ (Sec.~\ref{sec:om_gp_ucb}), we have that $\frac{f^t(\mathbf{x}) - \mu_{t-1}(\mathbf{x})}{\beta_t \sigma_{t-1}(\mathbf{x})}$ follows a standard Gaussian distribution.
Therefore, step $(a)$ above results from the Gaussian anti-concentration inequality: denote by $Z$ the standard Gaussian distribution $\mathcal{N}(0, 1)$, then $\mathbb{P}(Z > \theta_t) \geq \frac{e^{-\theta_t^2}}{4\sqrt{\pi}\theta_t}$.
Step $(b)$ follows from Lemma~\ref{gaussian_bound} (i.e., $\theta_t \leq 1$) and hence holds with probability of $\geq 1 - \delta/4$.
\end{proof}

The next lemma shows that in every iteration, the probability that we choose an unsaturated input is lower-bounded.
\begin{lemma}
\label{lemma:x_t:unsaturated}
With probability of $\geq 1 - \delta/4 - \delta/12=1-\delta/3$,
\[
\mathbb{P}\left( \mathbf{x}_t \in \mathcal{D}\setminus S_t | \mathcal{F}_{t-1} \right) \geq (1-\nu_t)p, \qquad \forall t\geq 1.
\]
\end{lemma}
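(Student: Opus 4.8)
The plan is to follow the standard Thompson-sampling argument: decompose on the event $\mathcal{E}^1_t$ under which $\overline{\zeta}_t = f^t$, and show that on $\mathcal{E}^1_t$ — together with the high-probability ``good'' events — the query point $\mathbf{x}_t$ (which under $\mathcal{E}^1_t$ maximizes $f^t$) is forced to be unsaturated whenever the sampled target function over-estimates $f$ at the optimum, i.e.\ whenever $f^t(\mathbf{x}^*) > f(\mathbf{x}^*)$. Since Lemma~\ref{lemma:use:gaussian:anti} lower-bounds the conditional probability of this last event by $p$, and since $\mathbb{P}(\mathcal{E}^1_t \mid \mathcal{F}_{t-1}) = 1-\nu_t$ independently of everything else, the claimed bound $(1-\nu_t)p$ follows.

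First I would condition on $\mathcal{F}_{t-1}$ and on $\mathcal{E}^1_t$, so that $\mathbf{x}_t$ maximizes $f^t$ over $\mathcal{D}$, and fix any saturated point $\mathbf{x}\in S_t$. Using~\eqref{eq:bound:f:ft} from the proof of Lemma~\ref{lemma:bound:ft:weighted:fti} (i.e.\ $|f^t(\mathbf{x}) - f(\mathbf{x})| \le c_t\sigma_{t-1}(\mathbf{x})$, which combines Lemma~\ref{gaussian_bound} and Lemma~\ref{lemma:concentration:sampled:function:around:mean}) together with Definition~\ref{def:saturated:point} ($\Delta(\mathbf{x}) > c_t\sigma_{t-1}(\mathbf{x})$), one gets
\[
f^t(\mathbf{x}) \le f(\mathbf{x}) + c_t\sigma_{t-1}(\mathbf{x}) < f(\mathbf{x}) + \Delta(\mathbf{x}) = f(\mathbf{x}^*).
\]
On the other hand, on the anti-concentration event $\{f^t(\mathbf{x}^*) > f(\mathbf{x}^*)\}$ we have $f^t(\mathbf{x}_t) = \max_{\mathbf{x}'\in\mathcal{D}} f^t(\mathbf{x}') \ge f^t(\mathbf{x}^*) > f(\mathbf{x}^*) > f^t(\mathbf{x})$ for every $\mathbf{x}\in S_t$, hence $\mathbf{x}_t \notin S_t$. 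Thus, on the good events, $\{f^t(\mathbf{x}^*) > f(\mathbf{x}^*)\} \subseteq \{\mathbf{x}_t \in \mathcal{D}\setminus S_t\}$ conditional on $(\mathcal{F}_{t-1}, \mathcal{E}^1_t)$, so Lemma~\ref{lemma:use:gaussian:anti} yields $\mathbb{P}(\mathbf{x}_t \in \mathcal{D}\setminus S_t \mid \mathcal{F}_{t-1}, \mathcal{E}^1_t) \ge p$.

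To finish, I would write $\mathbb{P}(\mathbf{x}_t \in \mathcal{D}\setminus S_t \mid \mathcal{F}_{t-1}) \ge \mathbb{P}(\mathbf{x}_t \in \mathcal{D}\setminus S_t \mid \mathcal{F}_{t-1}, \mathcal{E}^1_t)\,\mathbb{P}(\mathcal{E}^1_t \mid \mathcal{F}_{t-1}) \ge (1-\nu_t)p$. The probability bookkeeping is where I would be most careful: inequality~\eqref{eq:bound:f:ft} relies on Lemma~\ref{gaussian_bound} (failure prob.\ $\le\delta/4$) and the first inequality of Lemma~\ref{lemma:concentration:sampled:function:around:mean} (failure prob.\ $\le\delta/12$), while Lemma~\ref{lemma:use:gaussian:anti} also relies on Lemma~\ref{gaussian_bound} — the \emph{same} event — so these overlap and the union bound gives total failure probability only $\delta/4 + \delta/12 = \delta/3$, matching the statement. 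Accordingly, the main (and only) subtlety is not a hard estimate but keeping the two sources of randomness cleanly separated: the ``good'' events hold deterministically once the noise and the posterior sample $f^t$ are fixed, whereas the factors $p$ and $1-\nu_t$ arise from the fresh randomness of $f^t$ and of the $\mathcal{E}^1_t$/$\mathcal{E}^2_t$ choice (independent of $\mathcal{F}_{t-1}$); applying the conditioning in this order is the step I would write out most explicitly.
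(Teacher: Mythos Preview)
Your proposal is correct and follows essentially the same approach as the paper's proof: decompose on $\mathcal{E}^1_t$, use~\eqref{eq:bound:f:ft} together with Definition~\ref{def:saturated:point} to show $f^t(\mathbf{x}) < f(\mathbf{x}^*)$ for all saturated $\mathbf{x}$, reduce to the event $\{f^t(\mathbf{x}^*) > f(\mathbf{x}^*)\}$, and invoke Lemma~\ref{lemma:use:gaussian:anti}. Your accounting of the failure probabilities --- noting that Lemma~\ref{gaussian_bound} is invoked twice but is the \emph{same} event --- is exactly what the paper does, and your remark about separating the ``good'' high-probability events from the fresh sampling randomness of $f^t$ and the $\mathcal{E}^1_t/\mathcal{E}^2_t$ coin is a correct reading of the argument.
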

\begin{proof}
Firstly, we have that
\begin{equation}
\begin{split}
\mathbb{P}\left( \mathbf{x}_t \in \mathcal{D} \setminus S_t | \mathcal{F}_{t-1} \right) \geq \mathbb{P}\left( \mathbf{x}_t \in \mathcal{D} \setminus S_t | \mathcal{F}_{t-1}, \mathcal{E}^1_t \right) \mathbb{P}\left(\mathcal{E}^1_t\right) = \mathbb{P}\left( \mathbf{x}_t \in \mathcal{D} \setminus S_t | \mathcal{F}_{t-1}, \mathcal{E}^1_t \right) (1-\nu_t).
\end{split}
\end{equation}
Next, we attempt to lower-bound the term $\mathbb{P}\left( \mathbf{x}_t \in \mathcal{D} \setminus S_t | \mathcal{F}_{t-1}, \mathcal{E}^1_t \right)$.
\begin{equation}
\begin{split}
\mathbb{P}\left( \mathbf{x}_t \in \mathcal{D} \setminus S_t | \mathcal{F}_{t-1}, \mathcal{E}^1_t \right) \geq \mathbb{P}\left( f^t(\mathbf{x}^*) > f^t(\mathbf{x}),\forall \mathbf{x}\in S_t | \mathcal{F}_{t-1}, \mathcal{E}^1_t \right).
\end{split}
\end{equation}
The above inequality follows because $\mathbf{x}^*$ is always unsaturated: $\Delta(\mathbf{x}^*) = f(\mathbf{x}^*)-f(\mathbf{x}^*)=0\leq c_t\sigma_{t-1}(\mathbf{x}^*)$.
As a result, if the event on the RHS of the above inequality holds (i.e., an unsaturated input has larger value of $f^t$ than all saturated inputs), then the event on the LHS (i.e., $\mathbf{x}_t$ is unsaturated) also holds.
Next, we also have that $\forall \mathbf{x} \in S_t$,
\begin{equation}
f^t(\mathbf{x}) \leq f(\mathbf{x}) + c_t \sigma_{t-1}(\mathbf{x}) \leq f(\mathbf{x}) + \Delta(\mathbf{x}) = f(\mathbf{x}) + f(\mathbf{x}^*) - f(\mathbf{x}) = f(\mathbf{x}^*),
\end{equation}
in which the first inequality follows from~\eqref{eq:bound:f:ft} and hence holds with probability $\geq 1-\delta/12-\delta/4$, the second inequality is a result of the definition of saturated inputs (Definition~\ref{def:saturated:point}).
The above inequality implies that
\begin{equation}
\mathbb{P}\left( f^t(\mathbf{x}^*) > f^t(\mathbf{x}),\forall \mathbf{x}\in S_t | \mathcal{F}_{t-1}, \mathcal{E}^1_t \right) \geq \mathbb{P}\left( f^t(\mathbf{x}^*) > f(\mathbf{x}^*) | \mathcal{F}_{t-1}, \mathcal{E}^1_t \right).
\end{equation}

Lastly, combining the above inequalities gives us
\begin{equation}
\begin{split}
\mathbb{P}\left( \mathbf{x}_t \in \mathcal{D} \setminus S_t | \mathcal{F}_{t-1}, \mathcal{E}^1_t \right) \geq \mathbb{P}\left( f^t(\mathbf{x}^*) > f(\mathbf{x}^*) | \mathcal{F}_{t-1}, \mathcal{E}^1_t \right) \geq p,
\end{split}
\end{equation}
where the last inequality follows from Lemma~\ref{lemma:use:gaussian:anti}.
This completes the proof.
Note that the error probabilities for this lemma come from Lemma~\ref{lemma:concentration:sampled:function:around:mean} ($\delta/12$) and Lemma~\ref{gaussian_bound} ($\delta/4$).
\end{proof}

Next, we prove an upper bound on the expected instantaneous regret $r_t=f(\mathbf{x}^*)-f(\mathbf{x}_t)$.
\begin{lemma}
\label{lemma:upper:bound:expected:inst:regret}
With probability of $\geq 1 - \delta/4 - \delta/4 - \delta/12 - \delta/12 = 1-2\delta/3$,
\[
\mathbb{E}[r_t | \mathcal{F}_{t-1}] \leq c_t \left(1 + \frac{2}{(1-\nu_1)p}\right) \mathbb{E} [\sigma_{t-1}(\mathbf{x}_t) | \mathcal{F}_{t-1}] + \psi_t,
\]
where $\psi_t \triangleq 2\nu_t \left( c_t + c_t' + \sum^M_{i=1}\omega_i d_i' \right)$.
\end{lemma}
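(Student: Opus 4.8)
The plan is to adapt the Thompson-sampling regret argument of~\citep{chowdhury2017kernelized} to the stochastic acquisition rule of RM-GP-TS, which on $\mathcal{E}^1_t$ queries a maximizer of the target-GP sample $f^t$ and on $\mathcal{E}^2_t$ a maximizer of the weighted meta-GP sample $\sum_i\omega_i f^t_i$ ($i=1,\ldots,M$). First I would introduce, for each iteration $t$, the unsaturated point of smallest posterior standard deviation,
\[
\overline{\mathbf{x}}_t \triangleq \arg\min_{\mathbf{x}\in\mathcal{D}\setminus S_t}\sigma_{t-1}(\mathbf{x}),
\]
which is well-defined since $\mathbf{x}^*\in\mathcal{D}\setminus S_t$ always ($\Delta(\mathbf{x}^*)=0$). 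The first key step is a \emph{reverse} bound on $\sigma_{t-1}(\overline{\mathbf{x}}_t)$: since $\overline{\mathbf{x}}_t$ is $\mathcal{F}_{t-1}$-measurable and $\sigma_{t-1}(\overline{\mathbf{x}}_t)\leq\sigma_{t-1}(\mathbf{x}_t)$ whenever $\mathbf{x}_t\in\mathcal{D}\setminus S_t$, Lemma~\ref{lemma:x_t:unsaturated} together with $\nu_t\leq\nu_1$ ($\nu_t$ non-increasing) gives
\[
\mathbb{E}[\sigma_{t-1}(\mathbf{x}_t)\mid\mathcal{F}_{t-1}]\geq\sigma_{t-1}(\overline{\mathbf{x}}_t)\,\mathbb{P}(\mathbf{x}_t\in\mathcal{D}\setminus S_t\mid\mathcal{F}_{t-1})\geq(1-\nu_1)p\,\sigma_{t-1}(\overline{\mathbf{x}}_t),
\]
so that $\sigma_{t-1}(\overline{\mathbf{x}}_t)\leq\frac{1}{(1-\nu_1)p}\mathbb{E}[\sigma_{t-1}(\mathbf{x}_t)\mid\mathcal{F}_{t-1}]$.

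Next I would bound $r_t=\Delta(\mathbf{x}_t)$ on $\mathcal{E}^1_t$ and $\mathcal{E}^2_t$ separately, using the split $r_t=\Delta(\overline{\mathbf{x}}_t)+\bigl(f(\overline{\mathbf{x}}_t)-f(\mathbf{x}_t)\bigr)$ with $\Delta(\overline{\mathbf{x}}_t)\leq c_t\sigma_{t-1}(\overline{\mathbf{x}}_t)$ by unsaturation (Definition~\ref{def:saturated:point}). On $\mathcal{E}^1_t$, as $\mathbf{x}_t$ maximizes $f^t$ we have $f^t(\overline{\mathbf{x}}_t)\leq f^t(\mathbf{x}_t)$, so writing $f(\overline{\mathbf{x}}_t)-f(\mathbf{x}_t)=[f(\overline{\mathbf{x}}_t)-f^t(\overline{\mathbf{x}}_t)]+[f^t(\overline{\mathbf{x}}_t)-f^t(\mathbf{x}_t)]+[f^t(\mathbf{x}_t)-f(\mathbf{x}_t)]$ and applying~\eqref{eq:bound:f:ft} to the first and third terms (the middle one being $\leq0$) gives $r_t\leq 2c_t\sigma_{t-1}(\overline{\mathbf{x}}_t)+c_t\sigma_{t-1}(\mathbf{x}_t)$. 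On $\mathcal{E}^2_t$, $\mathbf{x}_t$ maximizes $\sum_i\omega_i f^t_i$, so I would telescope through the weighted meta-sample at \emph{both} endpoints: $f(\overline{\mathbf{x}}_t)-f(\mathbf{x}_t)$ splits into $[f-f^t]$ and $[f^t-\sum_i\omega_i f^t_i]$ at $\overline{\mathbf{x}}_t$, the non-positive gap $\sum_i\omega_i f^t_i(\overline{\mathbf{x}}_t)-\sum_i\omega_i f^t_i(\mathbf{x}_t)$, and $[\sum_i\omega_i f^t_i-f^t]$, $[f^t-f]$ at $\mathbf{x}_t$, which by~\eqref{eq:bound:f:ft} and Lemma~\ref{lemma:bound:ft:weighted:fti} yields the same estimate plus an additive $2(c_t+c_t'+\sum_i\omega_i d_i')$.

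Finally I would take $\mathbb{E}[\,\cdot\mid\mathcal{F}_{t-1}]$ via the decomposition over $\mathcal{E}^1_t$ (probability $1-\nu_t$) and $\mathcal{E}^2_t$ (probability $\nu_t$): the $\sigma$-terms combine into $2c_t\sigma_{t-1}(\overline{\mathbf{x}}_t)+c_t\mathbb{E}[\sigma_{t-1}(\mathbf{x}_t)\mid\mathcal{F}_{t-1}]$ and the extra $\mathcal{E}^2_t$-only term contributes $2\nu_t(c_t+c_t'+\sum_i\omega_i d_i')=\psi_t$. Substituting the reverse bound from the first step turns $2c_t\sigma_{t-1}(\overline{\mathbf{x}}_t)$ into $\frac{2c_t}{(1-\nu_1)p}\mathbb{E}[\sigma_{t-1}(\mathbf{x}_t)\mid\mathcal{F}_{t-1}]$, and collecting the coefficient of $\mathbb{E}[\sigma_{t-1}(\mathbf{x}_t)\mid\mathcal{F}_{t-1}]$ gives $c_t\bigl(1+\frac{2}{(1-\nu_1)p}\bigr)$; the failure probability $1-2\delta/3$ is the union of the events behind~\eqref{eq:bound:f:ft}, Lemma~\ref{lemma:bound:ft:weighted:fti} and Lemma~\ref{lemma:x_t:unsaturated}, all of which reduce to Lemmas~\ref{gaussian_bound},~\ref{confidence_bound:ts} and~\ref{lemma:concentration:sampled:function:around:mean}. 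I expect the $\mathcal{E}^2_t$ branch to be the main obstacle: unlike plain TS the queried point maximizes only the weighted meta-sample rather than a proxy for $f$, so the comparison between $f(\overline{\mathbf{x}}_t)$ and $f(\mathbf{x}_t)$ must pass through $f^t$ at both points, and the two resulting ``$f^t$ versus $\sum_i\omega_i f^t_i$'' gaps --- each bounded by Lemma~\ref{lemma:bound:ft:weighted:fti} --- are exactly what produces $\psi_t$; one must also keep $\overline{\mathbf{x}}_t$ $\mathcal{F}_{t-1}$-measurable so that $\sigma_{t-1}(\overline{\mathbf{x}}_t)$ factors out of the conditional expectation in the first step.
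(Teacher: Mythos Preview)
Your proposal is correct and follows essentially the same approach as the paper. The only cosmetic difference is that the paper first derives the event-independent bound $r_t\le c_t\bigl(2\sigma_{t-1}(\overline{\mathbf{x}}_t)+\sigma_{t-1}(\mathbf{x}_t)\bigr)+\bigl(f^t(\overline{\mathbf{x}}_t)-f^t(\mathbf{x}_t)\bigr)$ via~\eqref{eq:bound:f:ft} and then splits only the residual $f^t(\overline{\mathbf{x}}_t)-f^t(\mathbf{x}_t)$ over $\mathcal{E}^1_t$ and $\mathcal{E}^2_t$, whereas you perform the event split on $r_t$ directly; the ingredients (the reverse bound via Lemma~\ref{lemma:x_t:unsaturated}, unsaturation of $\overline{\mathbf{x}}_t$, \eqref{eq:bound:f:ft}, and Lemma~\ref{lemma:bound:ft:weighted:fti}) and the resulting constants are identical.
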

\begin{proof}
To begin with, define the unsaturated input with the smallest posterior standard deviation as
\begin{equation}
\overline{\mathbf{x}}_t \triangleq {\arg\min}_{\mathbf{x}\in \mathcal{D} \setminus S_t} \sigma_{t-1}(\mathbf{x}).
\end{equation}
This allows us to obtain the following:
\begin{equation}
\begin{split}
\mathbb{E}[\sigma_{t-1}(\mathbf{x}_t) | \mathcal{F}_{t-1}] \geq \mathbb{E}[\sigma_{t-1}(\mathbf{x}_t) | \mathcal{F}_{t-1}, \mathbf{x}_t\in\mathcal{D}\setminus S_t] \mathbb{P}\left(\mathbf{x}_t\in\mathcal{D}\setminus S_t\right) \stackrel{(a)}{\geq} \sigma_{t-1}(\overline{\mathbf{x}}_t)(1-\nu_t)p,
\end{split}
\label{eq:ues:xt:bar}
\end{equation}
where $(a)$ results from Lemma~\ref{lemma:x_t:unsaturated} and hence holds with probability $\geq 1-\delta/12-\delta/4$ (the error probabilities come from Lemma~\ref{lemma:concentration:sampled:function:around:mean} and Lemma~\ref{gaussian_bound}).
Subsequently, the instataneous regret can be upper-bounded as
\begin{equation}
\begin{split}
r_t &= \Delta(\mathbf{x}_t) = f(\mathbf{x}^*) - f(\overline{\mathbf{x}}_t) + f(\overline{\mathbf{x}}_t) - f(\mathbf{x}_t)\\
&\stackrel{(a)}{\leq} \Delta(\overline{\mathbf{x}}_t) + f^t(\overline{\mathbf{x}}_t) + c_t \sigma_{t-1}(\overline{\mathbf{x}}_t) - f^t(\mathbf{x}_t) + c_t \sigma_{t-1}(\mathbf{x}_t)\\
&\stackrel{(b)}{\leq} c_t \sigma_{t-1}(\overline{\mathbf{x}}_t) + c_t \sigma_{t-1}(\overline{\mathbf{x}}_t) + c_t \sigma_{t-1}(\mathbf{x}_t) + f^t(\overline{\mathbf{x}}_t) - f^t(\mathbf{x}_t)\\
&= c_t (2\sigma_{t-1}(\overline{\mathbf{x}}_t) + \sigma_{t-1}(\mathbf{x}_t)) + f^t(\overline{\mathbf{x}}_t) - f^t(\mathbf{x}_t),
\end{split}
\end{equation}
in which $(a)$ follows from~\eqref{eq:bound:f:ft}, and $(b)$ results from the definition of saturated input (Definition~\ref{def:saturated:point}) and that $\overline{\mathbf{x}}_t$ is unsaturated.
Next, we attempt to upper-bound the expected value of the term $f^t(\overline{\mathbf{x}}_t) - f^t(\mathbf{x}_t)$ from the equation above:
\begin{equation}
\begin{split}
\mathbb{E}&\left[f^t(\overline{\mathbf{x}}_t) - f^t(\mathbf{x}_t) | \mathcal{F}_{t-1}\right] \\
&=\mathbb{P}(\mathcal{E}^1_t) \mathbb{E}\left[f^t(\overline{\mathbf{x}}_t) - f^t(\mathbf{x}_t) | \mathcal{F}_{t-1}, \mathcal{E}^1_t\right] + \mathbb{P}(\mathcal{E}^2_t) \mathbb{E}\left[f^t(\overline{\mathbf{x}}_t) - f^t(\mathbf{x}_t) | \mathcal{F}_{t-1}, \mathcal{E}^2_t\right]\\
&\stackrel{(a)}{\leq} \nu_t \mathbb{E}\left[f^t(\overline{\mathbf{x}}_t) - f^t(\mathbf{x}_t) | \mathcal{F}_{t-1}, \mathcal{E}^2_t\right]\\
&\stackrel{(b)}{\leq} \nu_t \mathbb{E}\Big[\sum^M_{i=1}\omega_i f^t_i(\overline{\mathbf{x}}_t) + c_t + c_t' + \sum^M_{i=1}\omega_i d_i' +  c_t + c_t' + \sum^M_{i=1}\omega_i d_i' - \sum^M_{i=1}\omega_i f^t_i(\mathbf{x}_t) | \mathcal{F}_{t-1}, \mathcal{E}^2_t\Big]\\
&\stackrel{(c)}{\leq} 2\nu_t \left( c_t + c_t' + \sum^M_{i=1}\omega_i d_i' \right) \triangleq \psi_t.
\end{split}
\end{equation}
Step $(a)$ follows since conditioned on the event $\mathcal{E}^1_t$ ($\overline{\zeta}_t(\mathbf{x}) = f^t(\mathbf{x})$), we have that $f^t(\mathbf{x}) \leq f^t(\mathbf{x}_t),\forall \mathbf{x}\in\mathcal{D}$; step $(b)$ results from Lemma~\ref{lemma:bound:ft:weighted:fti}; step $(c)$ follows since conditioned on the event $\mathcal{E}^2_t$ (i.e., $\overline{\zeta}_t(\mathbf{x}) = {\sum}^M_{i=1}\omega_i \left[\overline{f}^t_{i}(\mathbf{x})\right]$), we have that ${\sum}^M_{i=1}\omega_i \left[\overline{f}^t_{i}(\mathbf{x})\right] \leq {\sum}^M_{i=1}\omega_i \left[\overline{f}^t_{i}(\mathbf{x}_t)\right],\forall\mathbf{x}\in\mathcal{D}$.
Lastly,
\begin{equation}
\begin{split}
\mathbb{E}[r_t | \mathcal{F}_{t-1}] &\leq \mathbb{E}[ c_t (2\sigma_{t-1}(\overline{\mathbf{x}}_t) + \sigma_{t-1}(\mathbf{x}_t)) + \psi_t | \mathcal{F}_{t-1}]\\
&\leq \mathbb{E}\left[ c_t \left(\frac{2}{(1-\nu_t)p} \sigma_{t-1}(\mathbf{x}_t) + \sigma_{t-1}(\mathbf{x}_t) \right) + \psi_t |\mathcal{F}_{t-1} \right]\\
&\leq c_t \left(1 + \frac{2}{(1-\nu_1)p}\right) \mathbb{E} [\sigma_{t-1}(\mathbf{x}_t) | \mathcal{F}_{t-1}] + \psi_t,
\end{split}
\end{equation}
in which the second inequality results from~\eqref{eq:ues:xt:bar}.
Note that the error probabilities for this Lemma follow from Lemma~\ref{lemma:bound:ft:weighted:fti}.
\end{proof}

Subsequently, we make use of martingale concentration inequalities to bound the cumulative regret.
\begin{definition}
Define $Y_0 = 0$, and for $t\geq 1$,
\[
X_t = r_t - c_t \left(1 + \frac{2}{(1-\nu_1)p}\right) \sigma_{t-1}(\mathbf{x}_t) - \psi_t,
\]
\[
Y_t = \sum^t_{s=1} X_s.
\]
\end{definition}
The next lemma shows that $\{Y_t\}_{t\geq 1}$ is a super-martingale.
\begin{lemma}
With probability $\geq 1 - \delta/4 - \delta/4 - \delta/12 - \delta/12=1-2\delta/3$,
$\{Y_t\}_{t\geq 1}$ is a super-martingale with respect to the filtration $\mathcal{F}_{t-1}$.
\end{lemma}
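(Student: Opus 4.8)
The plan is to read off the super-martingale property directly from Lemma~\ref{lemma:upper:bound:expected:inst:regret}. First I would note that $Y_{t-1}$ is $\mathcal{F}_{t-1}$-measurable, so that proving $\mathbb{E}[Y_t \mid \mathcal{F}_{t-1}] \leq Y_{t-1}$ reduces, via $Y_t - Y_{t-1} = X_t$, to showing $\mathbb{E}[X_t \mid \mathcal{F}_{t-1}] \leq 0$ for every $t\geq 1$. Since $c_t$, $\psi_t$, $\nu_1$ and $p$ are deterministic quantities fixed by the algorithm's parameters, they factor out of the conditional expectation, giving
\[
\mathbb{E}[X_t \mid \mathcal{F}_{t-1}] = \mathbb{E}[r_t \mid \mathcal{F}_{t-1}] - c_t\left(1 + \frac{2}{(1-\nu_1)p}\right)\mathbb{E}[\sigma_{t-1}(\mathbf{x}_t) \mid \mathcal{F}_{t-1}] - \psi_t .
\]
Applying Lemma~\ref{lemma:upper:bound:expected:inst:regret} to bound the first term on the right-hand side immediately yields $\mathbb{E}[X_t \mid \mathcal{F}_{t-1}] \leq 0$, and this holds precisely on the event of probability $\geq 1 - 2\delta/3$ on which that lemma holds, with the error budget $\delta/4 + \delta/4 + \delta/12 + \delta/12$ inherited from Lemmas~\ref{gaussian_bound},~\ref{lemma:confidence:bound:tau},~\ref{confidence_bound:ts} and~\ref{lemma:concentration:sampled:function:around:mean}.

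Next I would dispose of the routine measure-theoretic bookkeeping. Each $X_s$ is a function of the algorithm's history through iteration $s$ (it depends on $r_s$ and $\sigma_{s-1}(\mathbf{x}_s)$, both determined once $\mathbf{x}_s$ has been selected), so $Y_t = \sum_{s=1}^t X_s$ is adapted. For integrability, since $f$ lies in an RKHS with norm at most $B$ and $k(\mathbf{x},\mathbf{x})\leq 1$, we have $|f(\mathbf{x})|\leq B$ for all $\mathbf{x}\in\mathcal{D}$, hence $|r_t|\leq 2B$; together with $\sigma_{t-1}(\mathbf{x}_t)\leq 1$ and the finiteness of $c_t$ and $\psi_t$, each $X_t$ is bounded, so the conditional expectations above are well-defined. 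Combining, $\mathbb{E}[Y_t \mid \mathcal{F}_{t-1}] = Y_{t-1} + \mathbb{E}[X_t \mid \mathcal{F}_{t-1}] \leq Y_{t-1}$, which is exactly the super-martingale property.

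I do not anticipate any genuine obstacle here: the claim is essentially a repackaging of Lemma~\ref{lemma:upper:bound:expected:inst:regret}. The only points meriting care are (i) pulling the deterministic factors $c_t$ and $\psi_t$ out of $\mathbb{E}[\,\cdot \mid \mathcal{F}_{t-1}]$ so that the lemma can be invoked term by term, and (ii) ensuring the high-probability event used is exactly the one from Lemma~\ref{lemma:upper:bound:expected:inst:regret}, so that the probability stays at $1 - 2\delta/3$ and is not further degraded; both are immediate.
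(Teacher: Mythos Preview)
Your proposal is correct and follows essentially the same approach as the paper: expand $\mathbb{E}[X_t \mid \mathcal{F}_{t-1}]$ by linearity, pull the deterministic factors out, and invoke Lemma~\ref{lemma:upper:bound:expected:inst:regret} to conclude it is nonpositive (you additionally supply the adaptedness and integrability checks, which the paper omits). One small point: the $\delta/4+\delta/4+\delta/12+\delta/12$ budget actually comes from Lemma~\ref{gaussian_bound}, Lemma~\ref{confidence_bound:ts}, and the two inequalities in Lemma~\ref{lemma:concentration:sampled:function:around:mean}---Lemma~\ref{lemma:confidence:bound:tau} is not used in this chain---though this does not affect your argument.
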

\begin{proof}
\begin{equation}
\begin{split}
\mathbb{E}[Y_t - Y_{t-1} | \mathcal{F}_{t-1}] &= \mathbb{E}[X_t | \mathcal{F}_{t-1}]\\
&=\mathbb{E}[ r_t - c_t \left(1 + \frac{2}{(1-\nu_1)p}\right) \sigma_{t-1}(\mathbf{x}_t) + \psi_t | \mathcal{F}_{t-1}]\\
&= \mathbb{E}[r_t | \mathcal{F}_{t-1}] - \left[ c_t \left(1 + \frac{2}{(1-\nu_1)p}\right) \mathbb{E}[\sigma_{t-1}(\mathbf{x}_t) | \mathcal{F}_{t-1}] + \psi_t \right] \leq 0,
\end{split}
\end{equation}
where the last inequality follows from Lemma~\ref{lemma:upper:bound:expected:inst:regret}.
\end{proof}

Finally, we are ready to use martingale concentration inequalities to bound the cumulative regret.
\begin{lemma}
\label{lemma:ts:final:upper:bound:RT}
With probability of $\geq 1 - \delta/4 - \delta/4 - \delta/12 - \delta/12 - \delta/12=1-3\delta/4$,
\[
R_T \leq \left(2B + c_T \left(1 + \frac{2}{(1-\nu_1)p}\right) + \psi_1\right) \sqrt{T (C_1 \gamma_T + 2\log(12/\delta))} + 2\sum^T_{t=1} \nu_t (c_t+c_t'+\sum^M_{i=1}\omega_i d_i')
\]
where $C_1=2/\log(1+\sigma^{-2})$.
\end{lemma}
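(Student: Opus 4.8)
The plan is to write $R_T=\sum_{t=1}^T r_t$ as the sum of the super-martingale value $Y_T$, a sum of posterior-standard-deviation terms, and the deterministic ``meta'' corrections $\psi_t$, then bound each piece separately. From the definitions of $X_t$ and $Y_t$,
\[
R_T=\sum_{t=1}^T r_t=Y_T+\left(1+\frac{2}{(1-\nu_1)p}\right)\sum_{t=1}^T c_t\,\sigma_{t-1}(\mathbf{x}_t)+\sum_{t=1}^T\psi_t ,
\]
and the last sum already equals $2\sum_{t=1}^T\nu_t\big(c_t+c_t'+\sum_{i=1}^M\omega_i d_i'\big)$, which is exactly the second term of the claimed bound, so it remains to control $Y_T$ and the middle sum.

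First I would bound $Y_T$ by the Azuma--Hoeffding inequality for super-martingales, which applies because the preceding lemma shows $\{Y_t\}_{t\ge1}$ is a super-martingale with respect to $\mathcal{F}_{t-1}$ on an event of probability $\ge 1-2\delta/3$. The only additional ingredient is an almost-sure bound on the increments $|Y_t-Y_{t-1}|=|X_t|$: since $0\le r_t=f(\mathbf{x}^*)-f(\mathbf{x}_t)\le 2B$ (because $\|f\|_k\le B$ and $k(\mathbf{x},\mathbf{x})\le1$), $0\le\sigma_{t-1}(\mathbf{x}_t)\le1$, $c_t$ is non-decreasing, and $\psi_t\le\psi_1$ for the non-increasing sequence $\nu_t$, every increment is at most $L\triangleq 2B+c_T\big(1+\tfrac{2}{(1-\nu_1)p}\big)+\psi_1$. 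Azuma--Hoeffding with failure probability $\delta/12$ then gives $Y_T\le L\sqrt{2T\log(12/\delta)}$.

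Next I would bound the middle sum by the usual information-gain argument: pulling out $c_T\ge c_t$ and applying Cauchy--Schwarz gives $\sum_{t=1}^T c_t\sigma_{t-1}(\mathbf{x}_t)\le c_T\sqrt{T\sum_{t=1}^T\sigma_{t-1}^2(\mathbf{x}_t)}$, and re-running the steps of Lemma~\ref{upper_bound_by_info_gain} on the plain (un-scaled) posterior variance---using $\sigma^{-2}x\le\frac{\sigma^{-2}}{\log(1+\sigma^{-2})}\log(1+\sigma^{-2}x)$ for $x\in(0,1]$ together with Lemma~\ref{info_gain}---gives $\sum_{t=1}^T\sigma_{t-1}^2(\mathbf{x}_t)\le C_1\gamma_T$ with $C_1=2/\log(1+\sigma^{-2})$. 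Combining this with the bound on $Y_T$, using $\big(1+\tfrac{2}{(1-\nu_1)p}\big)c_T\le L$ so that both the $\sqrt{T\gamma_T}$ and the $\sqrt{T\log(1/\delta)}$ contributions carry the common factor $L$, merging the two square roots (up to an absolute constant, via $\sqrt a+\sqrt b\le\sqrt2\sqrt{a+b}$), and adding back $\sum_{t=1}^T\psi_t$, yields the claimed bound. The stated failure probability $3\delta/4$ is $2\delta/3$ (inherited from Lemma~\ref{lemma:upper:bound:expected:inst:regret} and the super-martingale lemma) plus the $\delta/12$ from the Azuma--Hoeffding step.

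The main obstacle is this Azuma--Hoeffding step: it demands that the super-martingale increments be uniformly bounded almost surely, which rests on the uniform bound $r_t\le 2B$ and---more delicately---on $\psi_t\le\psi_1$, i.e.\ on the chosen schedule $\nu_t$ decaying fast enough to offset the (polylogarithmic) growth of $c_t$ and $c_t'$. A secondary point of care is re-deriving $\sum_{t=1}^T\sigma_{t-1}^2(\mathbf{x}_t)\le C_1\gamma_T$ for the \emph{un-scaled} posterior variance, which is precisely what changes the constant from the $8/\log(1+\sigma^{-2})$ of Lemma~\ref{upper_bound_by_info_gain} to the $C_1=2/\log(1+\sigma^{-2})$ appearing here.
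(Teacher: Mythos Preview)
Your proposal is correct and follows essentially the same route as the paper: decompose $R_T$ via the super-martingale $Y_T$, bound $Y_T$ by Azuma--Hoeffding with increment bound $2B+c_T\bigl(1+\tfrac{2}{(1-\nu_1)p}\bigr)+\psi_1$ at failure probability $\delta/12$, bound $\sum_t c_t\sigma_{t-1}(\mathbf{x}_t)$ by $c_T\sqrt{C_1 T\gamma_T}$ via Cauchy--Schwarz and the information-gain argument, and merge. You are in fact more careful than the paper on two points it leaves implicit: the need for $\psi_t\le\psi_1$ (which is not automatic since $c_t,c_t'$ grow while $\nu_t$ decreases), and the $\sqrt{2}$ absorbed when combining the two square roots.
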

\begin{proof}
To begin with, we have that
\begin{equation}
\begin{split}
|Y_t - Y_{t-1}| &= |X_t| \leq |r_t| + |c_t \left(1 + \frac{2}{(1-\nu_1)p}\right) \sigma_{t-1}(\mathbf{x}_t)| + |\psi_t|\\
&\leq 2B + c_t \left(1 + \frac{2}{(1-\nu_1)p}\right) + \psi_t,
\end{split}
\end{equation}
where the last inequality follows since $|r_t|=|f(\mathbf{x}^*)-f(\mathbf{x}_t)|\leq 2B$ (because $\norm{f}_k\leq B$ as we have assumed in Sec.~\ref{sec:background}, which immediately implies that $|f(\mathbf{x})|\leq B,\forall \mathbf{x}\in\mathcal{D}$), and $\sigma_{t-1}(\mathbf{x})\leq 1,\forall \mathbf{x}\in\mathcal{D}$.

Next, we apply the Azuma-Hoeffding Inequality with an error probability of $\delta/12$ (first inequality):
\begin{equation}
\begin{split}
\sum^T_{t=1} r_t &\leq \sum^T_{t=1} c_t \left(1 + \frac{2}{(1-\nu_1)p}\right) \sigma_{t-1}(\mathbf{x}_t) + \sum^T_{t=1} \psi_t + \\
&\qquad \sqrt{2 \log\frac{10}{\delta} \sum^T_{t=1} \left(2B + c_t \left(1 + \frac{2}{(1-\nu_1)p}\right) + \psi_t\right)^2 }\\
&\leq c_T \left(1 + \frac{2}{(1-\nu_1)p}\right) \sum^T_{t=1} \sigma_{t-1}(\mathbf{x}_t) + \sum^T_{t=1} \psi_t + \\
&\qquad \left(2B + c_T \left(1 + \frac{2}{(1-\nu_1)p}\right) + \psi_1\right)\sqrt{2 T \log\frac{12}{\delta}}\\
&\leq c_T \left(1 + \frac{2}{(1-\nu_1)p}\right) \sqrt{C_1' \gamma_T T} + \sum^T_{t=1} \psi_t + \\
&\qquad \left(2B + c_T \left(1 + \frac{2}{(1-\nu_1)p}\right) + \psi_1\right)\sqrt{2 T \log\frac{12}{\delta}}\\
&\leq \left(2B + c_T \left(1 + \frac{2}{(1-\nu_1)p}\right) + \psi_1\right) \sqrt{T (C_1' \gamma_T + 2\log(12/\delta))} + \\
&\qquad 2\sum^T_{t=1} \nu_t (c_t+c_t'+\sum^M_{i=1}\omega_i d_i).
\end{split}
\end{equation}
The second last inequality makes use of Lemma~\ref{upper_bound_by_info_gain} from the proof of RM-GP-UCB (excluding the factor of $(1-\nu_t)\beta_t$) with $C_1'\triangleq2/\log(1+\sigma^{-2})$.
\end{proof}

Recall that $c_t=\mathcal{O}(\sqrt{\gamma_t} \log t)$, $c_t'=\mathcal{O}(\log t)$.
Therefore, Lemma~\ref{lemma:ts:final:upper:bound:RT} can be further analyzed as:
\begin{equation}
\begin{split}
R_T &= \mathcal{O}\left(c_T \sqrt{T\gamma_T} + \sum^T_{t=1} \nu_t (c_t+c_t'+\sum^M_{i=1}\omega_i d_i)\right)\\
&= \mathcal{O}\Big(\Big(\sum^M_{i=1}\omega_i d_i' \Big) \sum^T_{t=1} \nu_t + 
\sum^T_{t=1} \nu_t \sqrt{\gamma_t}\log t + 
\gamma_T\log T \sqrt{T} \Big).
\end{split}
\end{equation}

Lastly, similar to our analysis of RM-GP-UCB for the case where the $\omega_i$'s change with $t$ (i.e., at the end of Appendix~\ref{app:proof_theorem_1}), when our online meta-weight optimization is used, we simply need to slightly modify the definition of $\psi_t$:
$\psi_t \triangleq 2\nu_t \left( c_t + c_t' + \sum^M_{i=1}\omega_{i,t} d_i' \right)$ by allowing $\omega_{i,t}$ to change with $t$, and the subsequent analysis still holds by simply replacing $\omega_i$ by $\omega_{i,t}$.
As a result, the no-regret guarantee of RM-GP-TS (Theorem~\ref{regret_bound_ts}) still holds (since we can simply upper-bound every $\omega_{i,t}$ by $1$):
\begin{equation*}
\begin{split}
R_T &= \mathcal{O}\Big( \sum^T_{t=1} \nu_t \Big(\sum^M_{i=1}\omega_{i,t} d_i' \Big) + 
\sum^T_{t=1} \nu_t \sqrt{\gamma_t}\log t + 
\gamma_T\log T \sqrt{T} \Big)\\
&=\mathcal{O}\Big(\Big(\sum^M_{i=1} d_i' \Big) \sum^T_{t=1} \nu_t + 
\sum^T_{t=1} \nu_t \sqrt{\gamma_t}\log t + 
\gamma_T\log T \sqrt{T} \Big)\\
&=\widetilde{\mathcal{O}}\Big(\Big(\sum^M_{i=1} d_i' \Big) \sum^T_{t=1} \nu_t + 
\sum^T_{t=1} \nu_t \sqrt{\gamma_t} + 
\gamma_T \sqrt{T} \Big).
\end{split}
\end{equation*}




\section{Analysis of Online Meta-Weight Optimization}
\label{app:analysis:online:weight}
\subsection{Proof of Lemma~\ref{estimate_di}} 
\label{app:upper_bound_func_gap}
From the definitions of $U_{t,i,j}$ and $L_{t,i,j}$~\eqref{UL}, and the fact that $L_{t,i,j} \leq f(\mathbf{x}_{i,j}) \leq U_{t,i,j}, \forall t, i, j\ $ with probability $\geq 1-\delta/4$ 
(Section~\ref{sec:estimate_d}), we have that
\begin{equation}
\begin{split}
    d_i&=\max_{j=1,...,N_i} \left|f_i(\mathbf{x}_{i,j}) - f(\mathbf{x}_{i,j})\right| \\
    &\leq \max_{j=1,...,N_i}\left[\max \{\left|f_i(\mathbf{x}_{i,j}) - U_{t,i,j}\right|, \left|f_i(\mathbf{x}_{i,j}) - L_{t,i,j}\right|\}\right] \qquad \forall \, i=1,\ldots,M,\forall t\geq 1
\end{split}
\label{func_gap_proof_aux_1}
\end{equation}
which holds with probability $\geq 1 - \delta/4$. Next, we derive upper bounds on $\left|f_i(\mathbf{x}_{i,j}) - U_{t,i,j}\right|$ and $\left|f_i(\mathbf{x}_{i,j}) - L_{t,i,j}\right|$ that only consist of known or computable 
terms, such that the upper bounds on $d_i$ can be efficiently calculated in practice.
\begin{lemma}
\label{func_gap_proof_aux_2}
With probability $\geq 1 - \delta/4$, $\forall \, t\geq 1$, $\forall i, j$, 
\begin{equation*}
\begin{split}
\left|f_i(\mathbf{x}_{i,j}) - U_{t,i,j}\right| \leq \sqrt{2\sigma^2\log\frac{8\sum^M_{i=1}N_i}{\delta}} + \left|y_{i,j} - U_{t,i,j}\right|,\\
\left|f_i(\mathbf{x}_{i,j}) - L_{t,i,j}\right| \leq \sqrt{2\sigma^2\log\frac{8\sum^M_{i=1}N_i}{\delta}} + \left|y_{i,j} - L_{t,i,j}\right|.
\end{split}
\end{equation*}
\end{lemma}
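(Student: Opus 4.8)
The plan is to reduce both inequalities to a sub-Gaussian tail bound on the observation noise, exactly in the spirit of Lemma~\ref{bound_with_obs_noise}. For the first inequality, I would insert the (hypothetically available) noisy meta-observation $y_{i,j}$ via the triangle inequality,
\[
\left|f_i(\mathbf{x}_{i,j}) - U_{t,i,j}\right| \leq \left|f_i(\mathbf{x}_{i,j}) - y_{i,j}\right| + \left|y_{i,j} - U_{t,i,j}\right|,
\]
and identically with $L_{t,i,j}$ in place of $U_{t,i,j}$. The second term on the right-hand side involves only the observed output $y_{i,j}$ and the computable confidence bounds $U_{t,i,j},L_{t,i,j}$ from~\eqref{UL}, so it requires no probabilistic treatment and is precisely the ``known or computable'' quantity the lemma targets. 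What remains is to show $\left|f_i(\mathbf{x}_{i,j}) - y_{i,j}\right| \leq \sqrt{2\sigma^2\log\frac{8\sum^M_{i=1}N_i}{\delta}}$ simultaneously for all $i=1,\ldots,M$ and $j=1,\ldots,N_i$, with probability $\geq 1-\delta/4$.

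For that step I would reuse the standard Gaussian tail bound~\eqref{standard_gaussian}: since $y_{i,j} - f_i(\mathbf{x}_{i,j}) \sim \mathcal{N}(0,\sigma^2)$, substituting $c=\sqrt{2\log(8\sum^M_{i=1}N_i/\delta)}$ into $\mathbb{P}(|z|>c)\leq e^{-c^2/2}$ shows that the event $\left|y_{i,j}-f_i(\mathbf{x}_{i,j})\right| > \sqrt{2\sigma^2\log\frac{8\sum^M_{i=1}N_i}{\delta}}$ has probability at most $\frac{\delta}{8\sum^M_{i=1}N_i}$. The one design choice that differs from Lemma~\ref{bound_with_obs_noise} is to take a \emph{single} union bound over the entire collection of $\sum^M_{i=1}N_i$ meta-observations at once (rather than a per-task union bound of size $N_i$); this costs a factor $\sum^M_{i=1}N_i$ and yields total failure probability $\leq \delta/8 \leq \delta/4$, while making the resulting high-probability event hold uniformly over all $i,j$. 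Combining this with the triangle inequality above gives both claimed inequalities on that event, and uniformly in $t\geq 1$, since the index $t$ enters only through the deterministic quantities $U_{t,i,j},L_{t,i,j}$.

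I do not expect a genuine obstacle here: the argument is just the triangle inequality plus the sub-Gaussian tail of the noise. The only care needed is bookkeeping — ensuring the union bound is taken over all $\sum^M_{i=1}N_i$ observations jointly so that it is the \emph{same} event already invoked when bounding $d_i$ in~\eqref{func_gap_proof_aux_1} (hence the matching $\sqrt{2\sigma^2\log\frac{8\sum^M_{i=1}N_i}{\delta}}$ constant), and checking that the $\delta/8$ from the union bound is comfortably absorbed into the stated $\delta/4$.
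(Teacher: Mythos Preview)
Your proposal is correct and follows essentially the same approach as the paper: triangle inequality inserting $y_{i,j}$, then the Gaussian tail bound~\eqref{standard_gaussian} with a union bound over all $\sum_{i=1}^M N_i$ meta-observations. The only cosmetic difference is that the paper additionally union-bounds over the $U$ and $L$ inequalities separately (yielding exactly $\delta/4$), whereas you correctly observe that both inequalities rest on the \emph{same} noise event $|f_i(\mathbf{x}_{i,j})-y_{i,j}|\leq c$, so a single union bound suffices and gives $\delta/8\leq\delta/4$.
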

\begin{proof}
To begin with, note that $f_i(\mathbf{x}_{i,j}) - y_{i,j} \sim \mathcal{N}(0, \sigma^2)$. Therefore,~\eqref{standard_gaussian} suggests that
\begin{equation}
\mathbb{P}\left(\left|f_i(\mathbf{x}_{i,j}) - y_{i,j}\right| > \sigma\sqrt{2\log\frac{8\sum^M_{i=1}N_i}{\delta}}\right) \leq \frac{\delta}{8\sum^M_{i=1}N_i}
\end{equation}
which naturally leads to a high-probability upper bound on $\left|f_i(\mathbf{x}_{i,j}) - U_{t,i,j}\right|$:
\begin{equation}
\begin{split}
    \left|f_i(\mathbf{x}_{i,j}) - U_{t,i,j}\right| &= |f_i(\mathbf{x}_{i,j}) - y_{i,j} + y_{i,j} - U_{t,i,j}| \\
    &\leq \left|f_i(\mathbf{x}_{i,j}) - y_{i,j}\right| + \left|y_{i,j} - U_{t,i,j}\right|\\
    &\leq \sqrt{2\sigma^2\log\frac{8\sum^M_{i=1}N_i}{\delta}} + \left|y_{i,j} - U_{t,i,j}\right|
\end{split}
\end{equation}
which holds with probability $\geq 1 - \frac{\delta}{8\sum^M_{i=1}N_i}$. Applying the same reasoning to $\left|f_i(\mathbf{x}_{i,j}) - L_{t,i,j}\right|$ results in a similar high-probability upper bound:
\begin{equation}
    \left|f_i(\mathbf{x}_{i,j}) - L_{t,i,j}\right| \leq \sqrt{2\sigma^2\log\frac{8\sum^M_{i=1}N_i}{\delta}} + \left|y_{i,j} - L_{t,i,j}\right|.
\end{equation}
Next, the proof is completed by taking a union bound over both $U_{t,i,j}$ and $L_{t,i,j}$, as well as all $\sum^M_{i=1}N_i$ observations of the meta-tasks.
\end{proof}

Finally, Lemma~\ref{estimate_di} follows by combining~\eqref{func_gap_proof_aux_1} and Lemma~\ref{func_gap_proof_aux_2}.

\subsection{Proof of Proposition \ref{regret_bound_2}}
\label{app:prop_1_proof}
In iteration $t$, define $\overline{\alpha}_t$ by replacing $d_i$ in $\alpha$ with $\overline{d}_{i,t}$:
\begin{equation}
\overline{\alpha}_t= \sum^M_{i=1}\omega_i \frac{N_i}{\sigma^2}(2\sqrt{2\sigma^2\log\frac{8N_i}{\delta}}+\overline{d}_{i,t}).
\label{alpha_t_bar}
\end{equation}
Since according to Lemma~\ref{estimate_di}, $d_i \leq \overline{d}_{i,t} \, \forall i=1,\ldots,M, t\geq 1$ with probability $\geq 1 - \delta/2$, 
we have that $\alpha \leq \overline{\alpha}_t \, \forall t\geq 1$, which also holds with probability $\geq 1 - \delta/2$.\\

Therefore, Theorem~\ref{regret_bound} implies that, with probability $\geq 1 - \delta$,
\begin{equation}
\begin{split}
    R_T \leq \underline{2\sum^T_{t=1} \overline{\alpha}_t\nu_t} + 2\tau \sum^T_{t=1} \nu_t + \beta_T\sqrt{C_1 T \gamma_T}.
\end{split}
\label{R_T_new}
\end{equation}

In~\eqref{R_T_new}, only the underlined term depends on the $\omega_i$'s. 
Define two column vectors $\overline{\boldsymbol{\alpha}}=[\overline{\alpha}_{t}]^{\top}_{t=1,\ldots,T}$ and $\boldsymbol{\nu}=[\nu_t]^{\top}_{t=1,\ldots,T}$.
Then, the underlined term in~\eqref{R_T_new} can be further decomposed as
\begin{equation}
\begin{split}
    2 \sum^T_{t=1} \overline{\alpha}_{t} \nu_t &\triangleq 2 \overline{\boldsymbol{\alpha}}^{\top}\boldsymbol{\nu} \stackrel{\text{(a)}}{\leq} 2\norm{\overline{\boldsymbol{\alpha}}}_2 \norm{\boldsymbol{\nu}}_2 \stackrel{\text{(b)}}{\leq} 2 \norm{\overline{\boldsymbol{\alpha}}}_1 \norm{\boldsymbol{\nu}}_1 \stackrel{\text{(c)}}{=} 2 \underline{\sum^T_{t=1} \overline{\alpha}_{t}} \sum^T_{t=1}\nu_t
\end{split}
\label{R_T_new_aux_1}
\end{equation}
in which (a) results from Cauchy-Schwarz inequality, (b) follows because the L2 norm is upper-bounded by the L1 norm, and (c) is obtained because $\overline{\alpha}_t>0,\nu_t\geq 0, \forall t\geq 1$.

In~\eqref{R_T_new_aux_1}, the dependence on the $\omega_i$'s appears in the underlined term, which can be further decomposed as
\begin{equation}
\begin{split}
    \sum^T_{t=1} \overline{\alpha}_{t} &=  \sum^T_{t=1}\left[\sum^M_{i=1}\omega_i\frac{N_i}{\sigma^2}\left(2\sqrt{2\sigma^2\log\frac{8N_i}{\delta}}+\overline{d}_{i,t} \right)\right]\\
    &\stackrel{\triangle}{=}\frac{1}{\sigma^2} \sum^T_{t=1}\left[ \sum^M_{i=1}\omega_i l_{i,t} \right] \\
    &\stackrel{\triangle}{=} \frac{1}{\sigma^2}\sum^T_{t=1}\boldsymbol{\omega}^{\top} \boldsymbol{l}_t
\end{split}
\label{R_T_new_aux_2}
\end{equation}
in which we have defined $\boldsymbol{\omega} \triangleq [\omega_i]_{i=1,\ldots,M}$, $\boldsymbol{l}_t\triangleq[l_{i,t}]_{i=1,\ldots,M}$, with
\begin{equation}
    l_{i,t} \triangleq N_i \left(2\sqrt{2\sigma^2\log\frac{8N_i}{\delta}}+\overline{d}_{i,t}\right).
\end{equation}

Plugging~\eqref{R_T_new_aux_1} and~\eqref{R_T_new_aux_2} in to~\eqref{R_T_new} completes the proof.

\subsection{Derivation of Equation~\ref{estimate_wi}}
\label{lagran}
Recall that our objective is to minimize 
\[
    \sum^{t-1}_{s=1}\boldsymbol{\omega}'^{\top} \boldsymbol{l}_s + \frac{1}{\eta}\sum^M_{i=1}\omega_i'\log \omega_i'
\]
subject to the constraint that $\boldsymbol{\omega}'$ forms a probability simplex: $\sum^M_{i=1}\omega_i' = 1.0$
and $\omega_i' \geq 0$ for all $i=1,\ldots,M$.
Define the Lagrangian as 
\begin{equation}
L(\boldsymbol{\omega}, \lambda) = \sum^{t-1}_{s=1}\boldsymbol{\omega}'^{\top} \boldsymbol{l}_s + \frac{1}{\eta}\sum^M_{i=1}\omega_i'\log \omega_i' + 
\lambda \left(1 - \sum^M_{i=1}\omega_i'\right).
\end{equation}
Taking the derivative of $L(\boldsymbol{\omega}, \lambda)$ with respect to $\omega_i'$, we get
\begin{equation}
\label{L_derive}
\frac{\partial L(\boldsymbol{\omega}, \lambda)}{\partial \omega_i'} = \sum^{t-1}_{s=1} l_{i,s} + \frac{1}{\eta} \left( \log \omega_i' + 1 \right) - \lambda.
\end{equation}
Setting~\eqref{L_derive} to 0 gives us
\begin{equation}
\omega_i' = e^{\eta \lambda - 1} e^{-\eta \sum^{t-1}_{s=1} l_{i,s} } \propto e^{-\eta \sum^{t-1}_{s=1} l_{i,s} }.
\end{equation}
Normalizing the $\omega_i'$'s for all $i=1\ldots,M$ to form a probability simplex leads to~\eqref{estimate_wi}.

\subsection{Analysis for RM-GP-TS}
\label{app:meta:weight:optimization:ts}
Here we use the function gap $d_i$ to approximate $d_i'$ (defined in Theorem~\ref{regret_bound_ts}), 
i.e., $d_i'\approx d_i,\forall i=1,\ldots,M$.
Combining Lemma~\ref{estimate_di} and Theorem~\ref{regret_bound_ts}, we have for RM-GP-TS that with probability of $\geq 1 - \delta$,
\begin{equation}
\begin{split}
R_T &= \mathcal{O}\Big( \sum^T_{t=1} \nu_t \left(\sum^M_{i=1}\omega_i \overline{d}_{i,t} \right) + 
\sum^T_{t=1} \nu_t \sqrt{\gamma_t}\log t + 
\gamma_T\log T \sqrt{T} \Big)\\
&\leq \mathcal{O}\Big(  \underline{\left(\sum^T_{t=1}\sum^M_{i=1}\omega_i \overline{d}_{i,t}\right)}  \left(\sum^T_{t=1} \nu_t\right) + 
\sum^T_{t=1} \nu_t \sqrt{\gamma_t}\log t + 
\gamma_T\log T \sqrt{T} \Big),
\end{split}
\label{eq:ts:online:regret:minimize:eq:1}
\end{equation}
in which the inequality can be proved in a similar way as equation~\eqref{R_T_new_aux_1}.
Next, define $\boldsymbol{\omega}\triangleq [\omega_i]_{i=1,\ldots,M}$, $\boldsymbol{d}_t \triangleq [\overline{d}_{i,t}]_{i=1,\ldots,M}$, then the underlined term above can be denoted as:
\begin{equation}
\sum^T_{t=1} \sum^M_{i=1}\omega_i \overline{d}_{i,t} = \sum^T_{t=1} \boldsymbol{\omega}^{\top} \boldsymbol{d}_t.
\end{equation}
Therefore, equation~\eqref{eq:ts:online:regret:minimize:eq:1} can be further upper-bounded as:
\begin{equation}
\begin{split}
R_T = \mathcal{O}\Big(  \underline{\left( \sum^T_{t=1} \boldsymbol{\omega}^{\top} \boldsymbol{d}_t \right)}  \left(\sum^T_{t=1} \nu_t\right) + 
\sum^T_{t=1} \nu_t \sqrt{\gamma_t}\log t + 
\gamma_T\log T \sqrt{T} \Big).
\end{split}
\end{equation}
Next, applying similar derivations as Appendix~\ref{lagran} (treating the underlined term above as the loss to be minimized) leads to the same update rule for the meta-weights as equation~\eqref{estimate_wi}.
Approximating $d_i'$ using $d_i$ also allows us to derive the same update rule for $\nu_t$ (Sec.~\ref{sec:online_weight_estimation}).




\section{More Experimental Details and Results}
\label{app:experiments}
In every experiment, the same set of random initializations are used for all methods
to ensure fair comparisons.
The kernel bandwidth parameter $\rho$ in TAF is set to $\rho=0.5$ in all experiments, but we have observed that other values of $\rho$ (such as $0.1$ and $0.9$)
lead to similar performances. $S=500$ posterior samples are used to compute the ensemble weights in RGPE.
All experiments are run on a server with 16 cores of Intel Xeon processor, 256G of RAM and 5 NVIDIA GTX1080 Ti GPUs.

\subsection{Optimization of Synthetic Functions}
\label{app:synth}
\subsubsection{Synthetic Functions Sampled from GPs}
The objective functions are drawn from GP's with the Squared Exponential kernel (with a length scale of $0.05$) from the domain $\mathcal{D}=[0,1]$.
Fig.~\ref{fig:synth_func} shows an example of such synthetic functions.
\begin{figure}[tb]
\centering
\includegraphics[width=0.45\columnwidth]{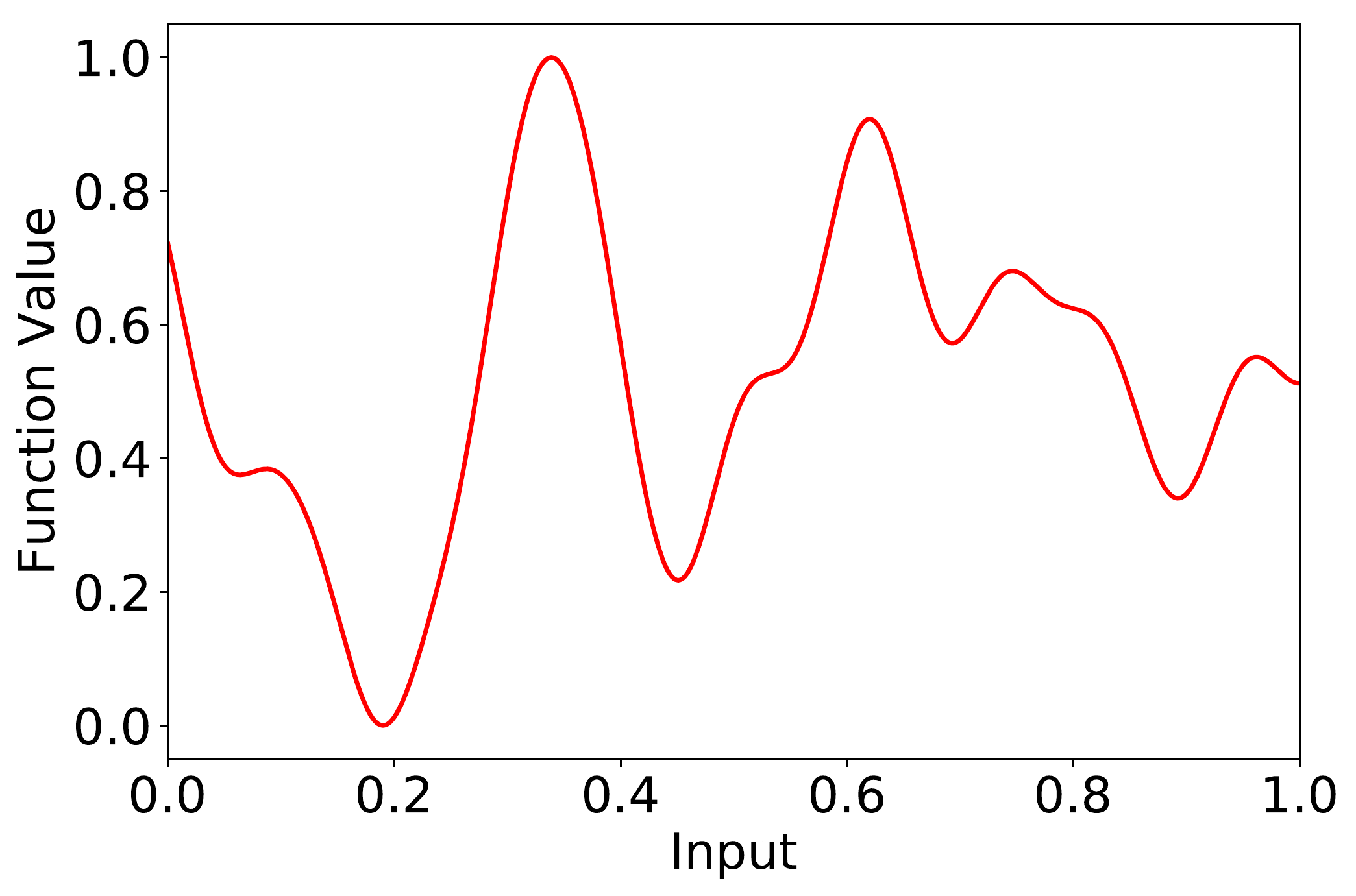}
\caption{An example synthetic function sampled from a GP.}
\label{fig:synth_func}
\end{figure}
The meta-functions and meta-tasks are generated in the following way. 
To begin with, we fix the number of meta-tasks $M=4$, the number of observations (input-output pairs) for each meta-task $N=N_i=20$ for $i=1\ldots M$, and the function gaps: $d_1=d_2=0.05$, $d_3=d_4=4.0$. 
For the $i$-th meta-task, firstly, $N_i$ inputs are randomly drawn from the entire domain $\mathcal{D}=[0,1]$. 
Then for each of the $N_i$ inputs $\mathbf{x}_{i,j}$, a number is randomly drawn from $[-d_i,d_i]$, 
which is added to the value of the target function $f(\mathbf{x}_{i,j})$ to produce the corresponding function value of the meta-function $f_i(\mathbf{x}_{i,j})$. 
Subsequently, a zero-mean Gaussian noise (with a noise variance of $0.01$) is added to $f_i(\mathbf{x}_{i,j})$, resulting in the corresponding output of the meta-observation $y_i(\mathbf{x}_{i,j})$. 
The above-mentioned procedure is repeated for each of the $M=4$ meta-tasks.
Note that according to the specified function gaps, meta-tasks 1 and 2 are relatively more similar to the target task, 
whereas meta-tasks 3 and 4 are dissimilar to the target task due to the larger function gaps. 

Fig.~\ref{fig:meta_weights_curves} plots the evolution of the meta-weights for each of the $4$ meta-tasks in the experiments exploring the impact of $\eta$,
i.e., corresponding to Fig.~\ref{fig:synth_func_results}c in Section~\ref{exp:synth}. These figures are used to demonstrate the observations that overly large and excessively small
values of $\eta$ can both degrade the performance of RM-GP-UCB.
\begin{figure}[tb]
	\centering
	\begin{subfigure}[b]{0.325\linewidth}
		\includegraphics[width=\linewidth]{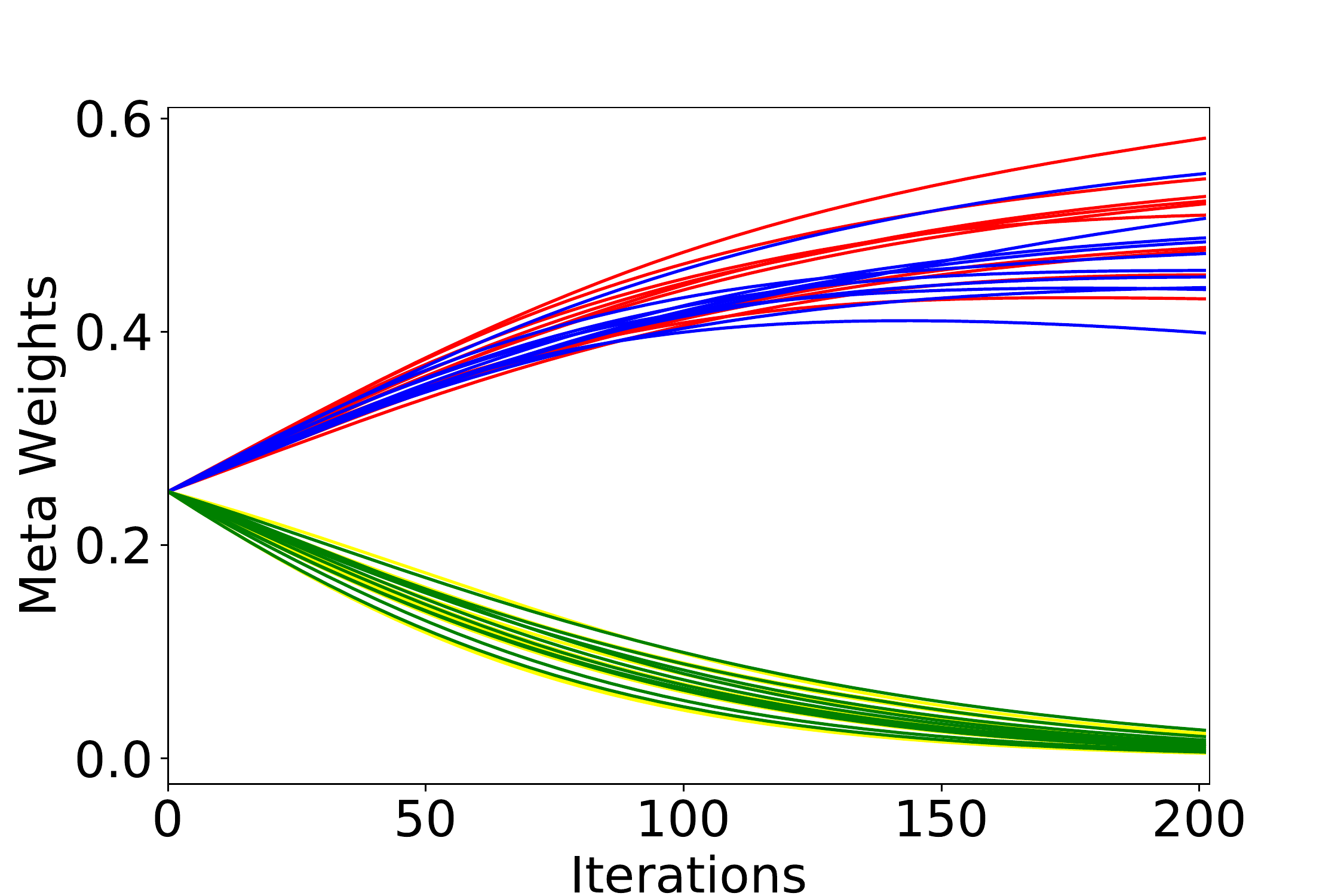}
		\caption{$\eta N=0.01$.}
	\end{subfigure}
	\hfill
	\begin{subfigure}[b]{0.325\linewidth}
		\includegraphics[width=\linewidth]{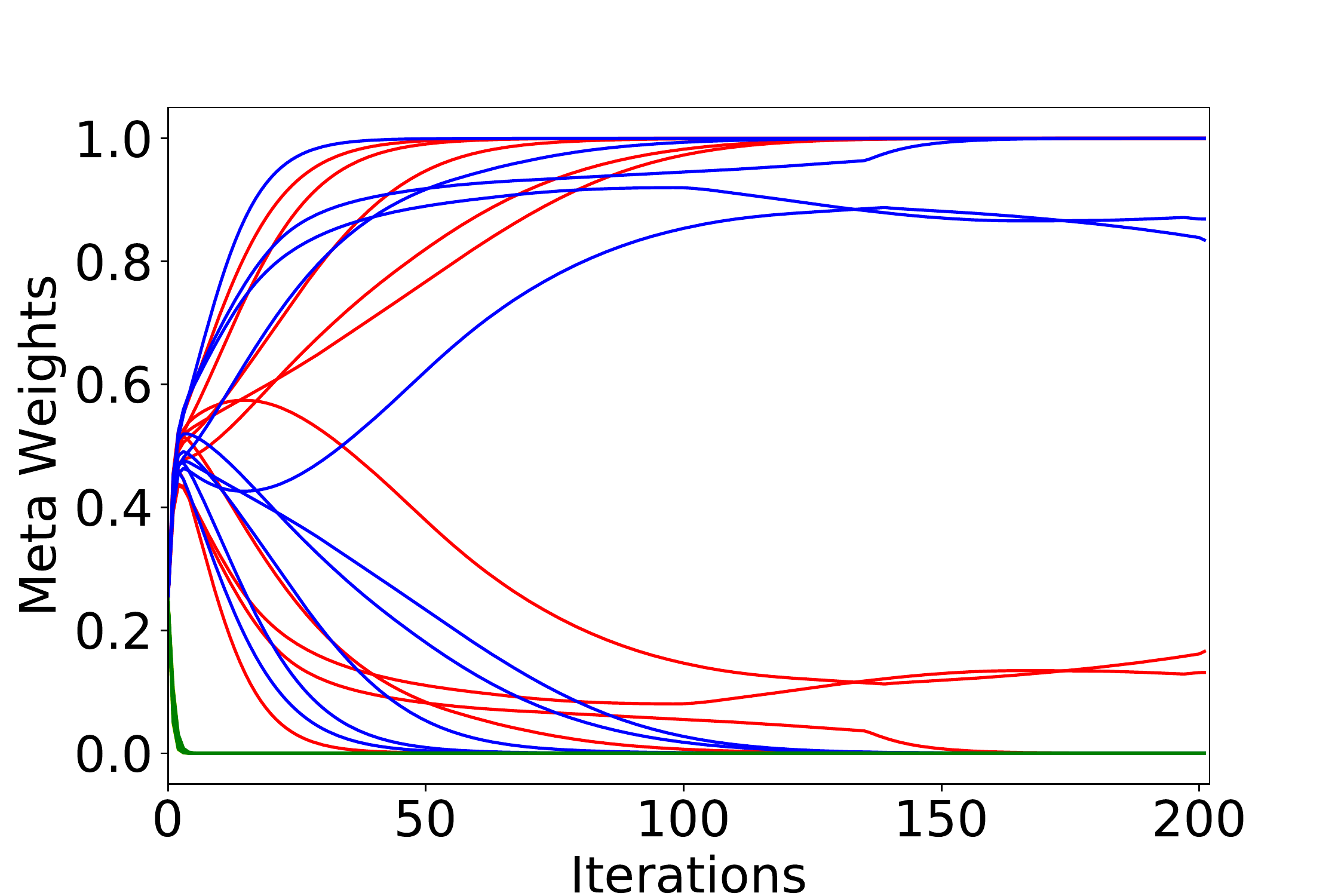}
		\caption{$\eta N=1.0$.}
	\end{subfigure}
	\hfill
	\begin{subfigure}[b]{0.325\linewidth}
		\includegraphics[width=\linewidth]{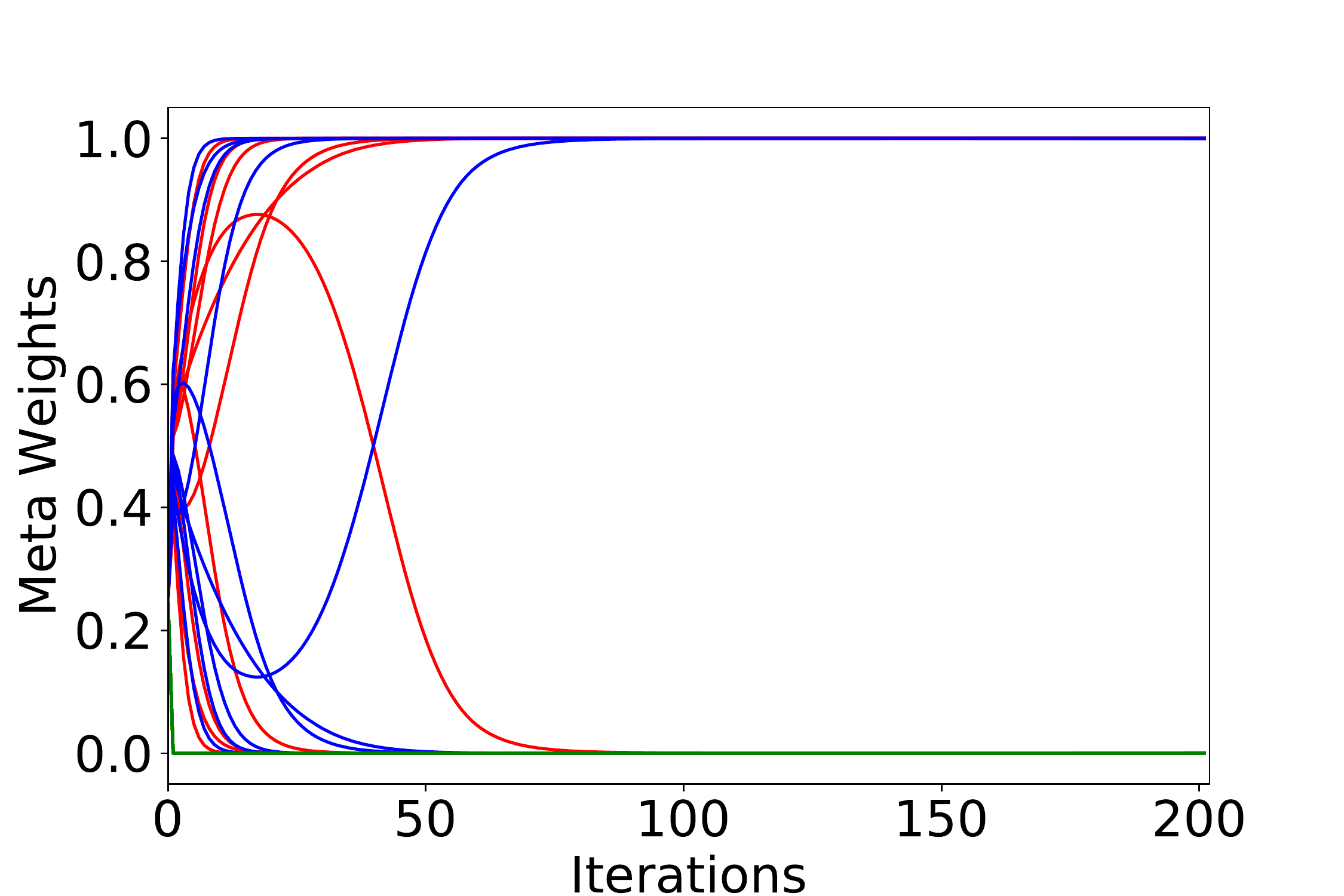}
		\caption{$\eta N=5.0$.}
	\end{subfigure}
	\caption{Evolution of the meta-weights with different learning rate, $\eta$, for online meta-weight optimization in the synthetic experiments.
	In each figure, the red and blue curves represent the meta-weights of the two meta-tasks that are more similar to the target task (i.e., the first two meta-tasks),
	whereas the green and yellow curves correspond to the meta-weights of the other two dissimilar meta-tasks.
	Every color has $10$ curves in each figure, which correspond to $10$ independent runs of the algorithm with different random initializations.}
	\label{fig:meta_weights_curves}
\end{figure}

Moreover, we have added another experiment where the $N_i$'s (i.e., the number of observations from the meta-tasks) are different.
Specifically, we use the same experimental setting involving $M=4$ meta-tasks as described above, and let $N_1=15,N_2=25,N_3=10,N_4=30$, where $d_1=d_2=0.05$, $d_3=d_4=4.0$.
The results (Fig.~\ref{fig:synth_func:with:diff:Ni}) show that when the $N_i$'s are different, our RM-GP-UCB algorithm, despite performing worse than the setting where all $N_i$'s are equal, is still able to significantly outperform standard GP-UCB.
\begin{figure}
\centering
\includegraphics[width=0.45\columnwidth]{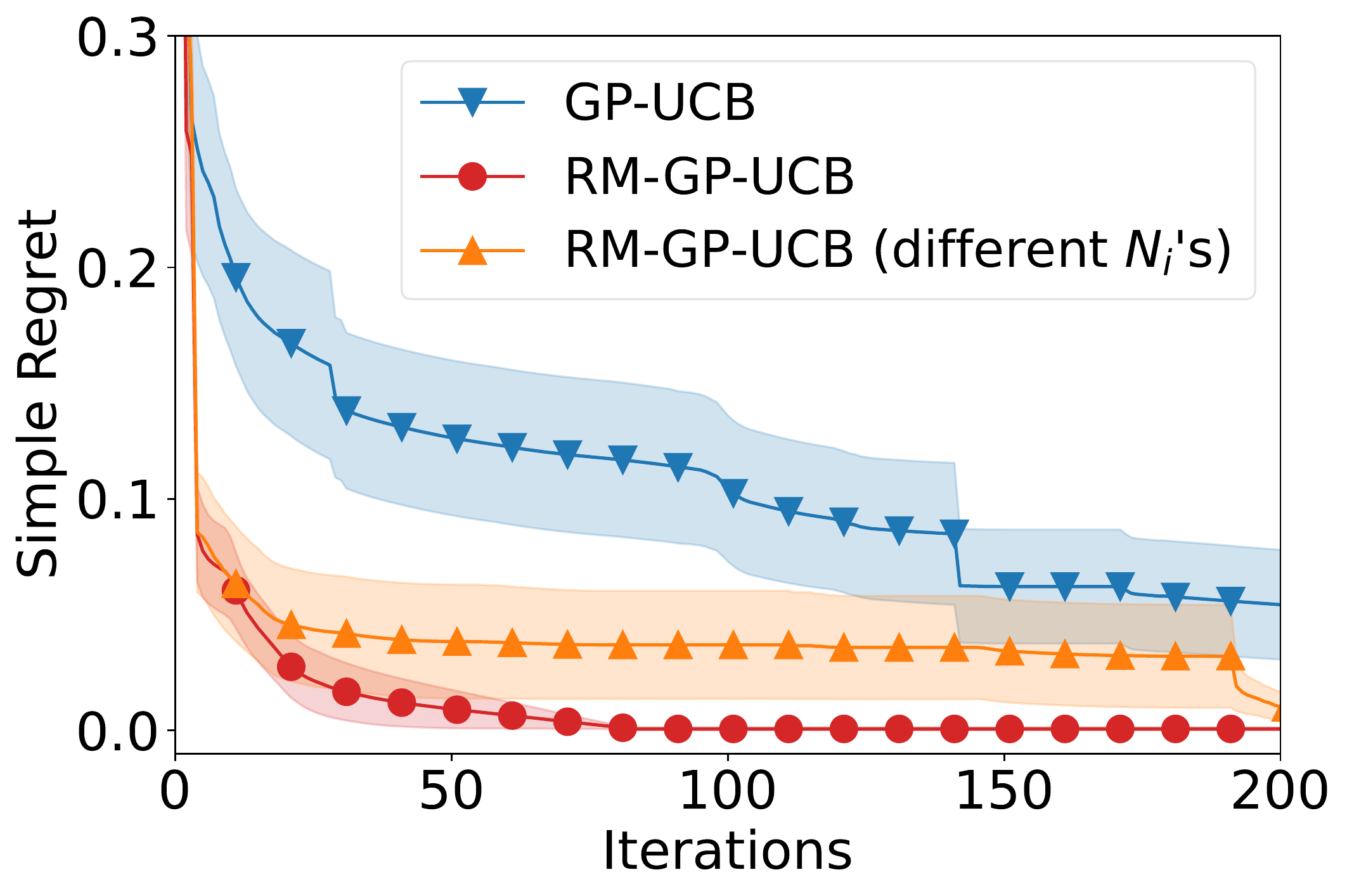}
\caption{The performance of RM-GP-UCB when the $N_i$'s are different.}
\label{fig:synth_func:with:diff:Ni}
\end{figure}


\subsection{Real-world Experiments}
\label{app:auto_ml}

\textbf{Hyperparameter Tuning for Convolutional Neural Networks (CNNs).}
The MNIST, CIFAR-10 and CIFAR-100 datasets can all be directly downloaded using the Keras Python package\footnote{\url{https://keras.io/}},
and the SVHN dataset can be downloaded from \url{http://ufldl.stanford.edu/housenumbers/}.
The MNIST dataset is under the GNU General Public License, CIFAR-10 adn CIFAR-100 are under the MIT License, and SVHN is under the Custom (non-commercial) License.
The image pixel values are all normalized into the range $[0, 1]$.
The CNN hyperparameters being optimized in this set of experiments are the learning rate, learning rate decay, and the L2 regularization parameter, 
all of which have the search space from $10^{-7}$ to $10^{-2}$.
Other than these hyperparameters, a common CNN architecture is used for all datasets, i.e., a CNN containing two convolutional layers (both with 32 filters and each filter has a size of $3\times 3$)
each of which is followed by a Max pooling layer (with a pooling size of $3\times 3$), 
followed by two fully connected layers (both with $64$ hidden units); all non-linear activations are ReLU.
The size of the training set and validation set for the four datasets are: 60,000/10,000 for MNIST, 73,257/26,032 for SVHN, 50,000/10,000 for both CIFAR-10 and CIFAR-100.
For the evaluation of a set of selected hyperparameters, the CNN model is trained using the RMSprop algorithm for $20$ epochs, and the 
final validation error is used as the corresponding output observation.
Fig.~\ref{fig:cnn_2} presents the results when the SVHN and CIFAR-100 datasets are used to produce the target functions.

Comparing Figs.~\ref{fig:synth_func_results}e,~\ref{fig:synth_func_results}f and Fig.~\ref{fig:cnn_2} shows that our RM-GP-UCB performs similarly to RGPE for the CIFAR-10, CIFAR-100 and SVHN datasets, and outperforms RGPE for MNIST.
After inspection, we found that this is because for the first three datasets (Fig.~\ref{fig:synth_func_results}f and Fig.~\ref{fig:cnn_2}), both RM-GP-UCB and RGPE assign most meta-weights to the same meta-task. 
On the other hand, for MNIST (Fig.~\ref{fig:synth_func_results}e), RM-GP-UCB (and RM-GP-TS) is able to assign most weights to SVHN which is indeed more similar to MNIST since they both contain images of digits.
In contrast, RGPE mistakenly assigns more meta-weights to CIFAR-10. The reason is that RGPE chooses the weights based on how accurately each meta-task’s GP surrogate predicts the pairwise ranking of the target observations (more details in Sec.~\ref{sec:related_works}, second paragraph). However, for MNIST, most target observations have very similar values since the overall accuracy is very high due to the simplicity of the MNIST dataset. Therefore, the predicted pairwise rankings become unreliable, thus rendering the weights learned by RGPE inaccurate and deteriorating the performance.

\begin{figure}[tb]
	\centering
	\begin{subfigure}[b]{0.45\linewidth}
		\includegraphics[width=\linewidth]{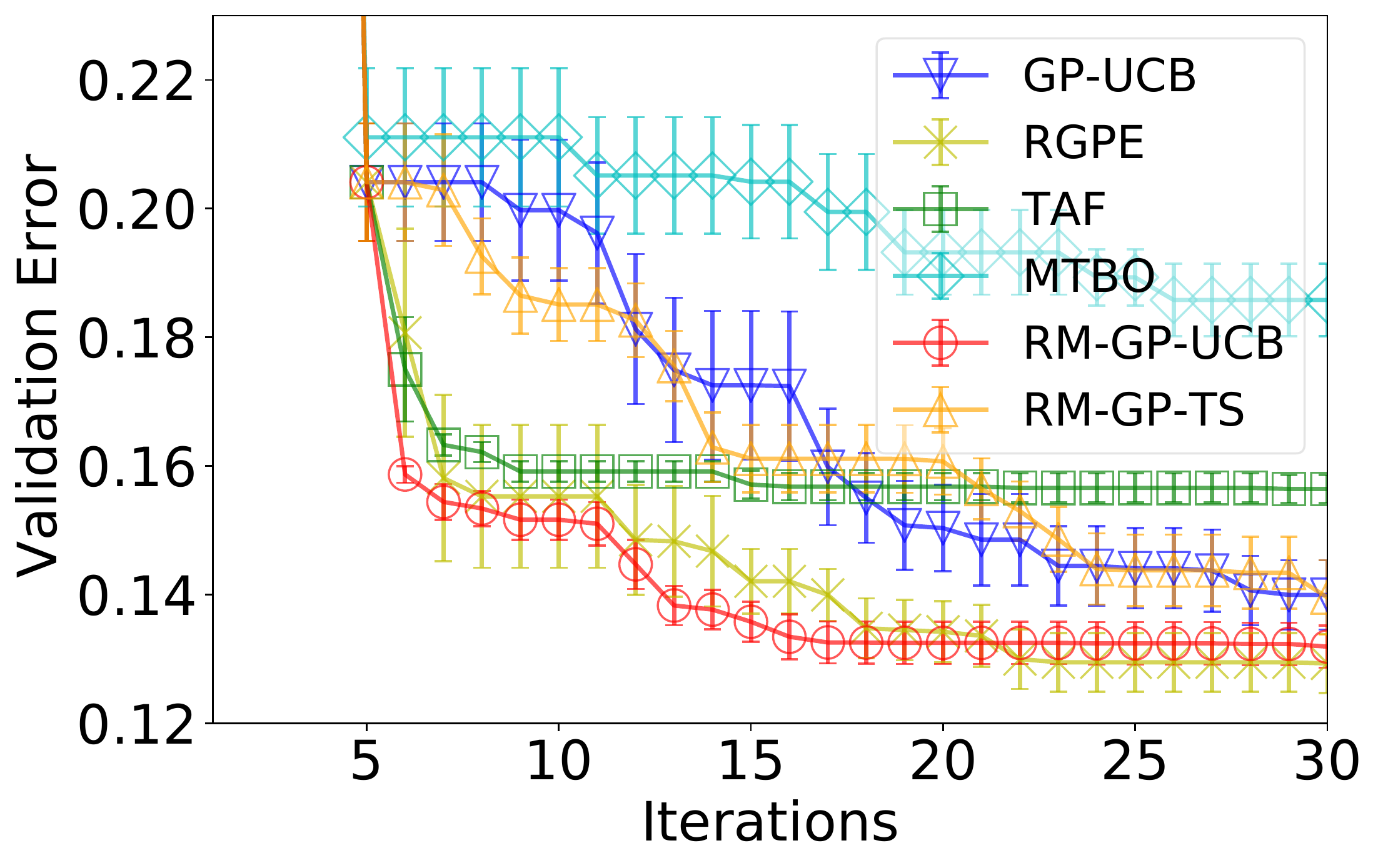}
		\caption{SVHN.}
	\end{subfigure}
	\hfill
	\begin{subfigure}[b]{0.45\linewidth}
		\includegraphics[width=\linewidth]{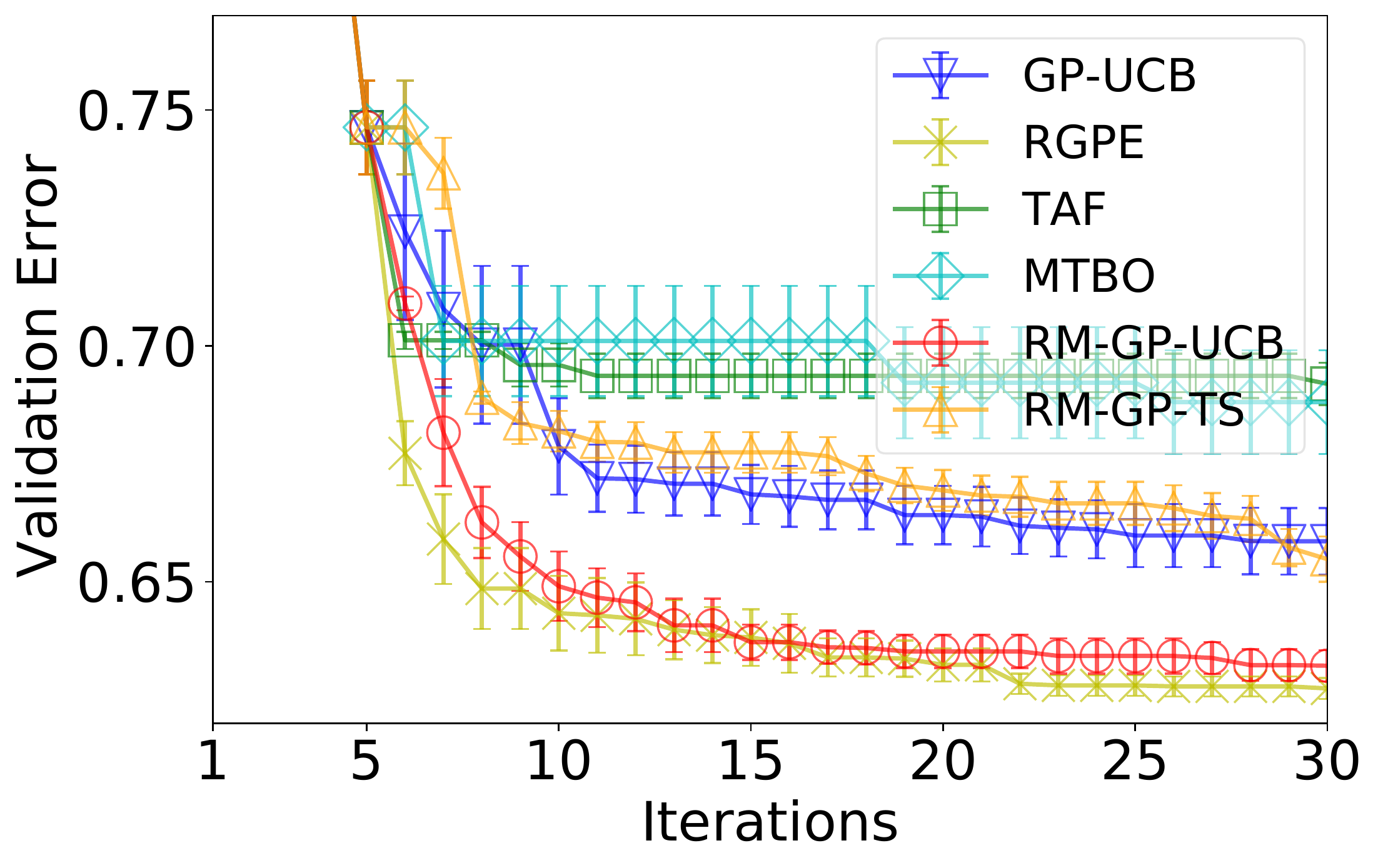}
		\caption{CIFAR-100.}
	\end{subfigure}
	\caption{Best validation error of CNN (both averaged over 10 random initializations).}
	\label{fig:cnn_2}
\end{figure}

\textbf{Hyperparameter Tuning for CNNs Using the Omniglot Dataset.}
The Omniglot dataset can be downloaded from \url{https://github.com/brendenlake/omniglot}, and it is under the MIT License.
The dataset consists of $50$ alphabets, $30$ from the background set and $20$ from the evaluation set. Each alphabet includes a number of characters, and all alphabets combine to have $1623$ characters. Every character only consists of $20$ example images, each drawn by a different person.
To perform one-shot classification, we use a Siamese neural network,
which takes two images as inputs and outputs a score indicating whether the pair of input images are predicted to be the same character.
The evaluation metric we use in the experiment is 2-way validation error. That is, we compare a test image in the validation set with two other images, only one of which is the same character as the test image, and evaluate whether the Siamese network is able to output a higher predictive score for the correct image which is the same character; we do this using every test image, and use the percentage of errors as the 2-way validation error.
In our setting, each task represents tuning $3$ hyperparameters of the Siamese network (the same hyperparameters and ranges as the CNN experiments above) using one alphabet. For each task, we use $75\%$ of the characters in the alphabet to produce the training set, and the remaining $25\%$ to generate the validation set.
We use $10$ alphabets from the background set as $10$ meta-tasks. For each meta-task, we generate $30$ meta-observations by running BO (using GP-UCB) for $30$ iterations. This in total produces $10 \times 30=300$ meta-observations. We use one of the alphabets from the evaluation set as the target task.

\begin{figure}[tb]
\centering
\includegraphics[width=0.4\columnwidth]{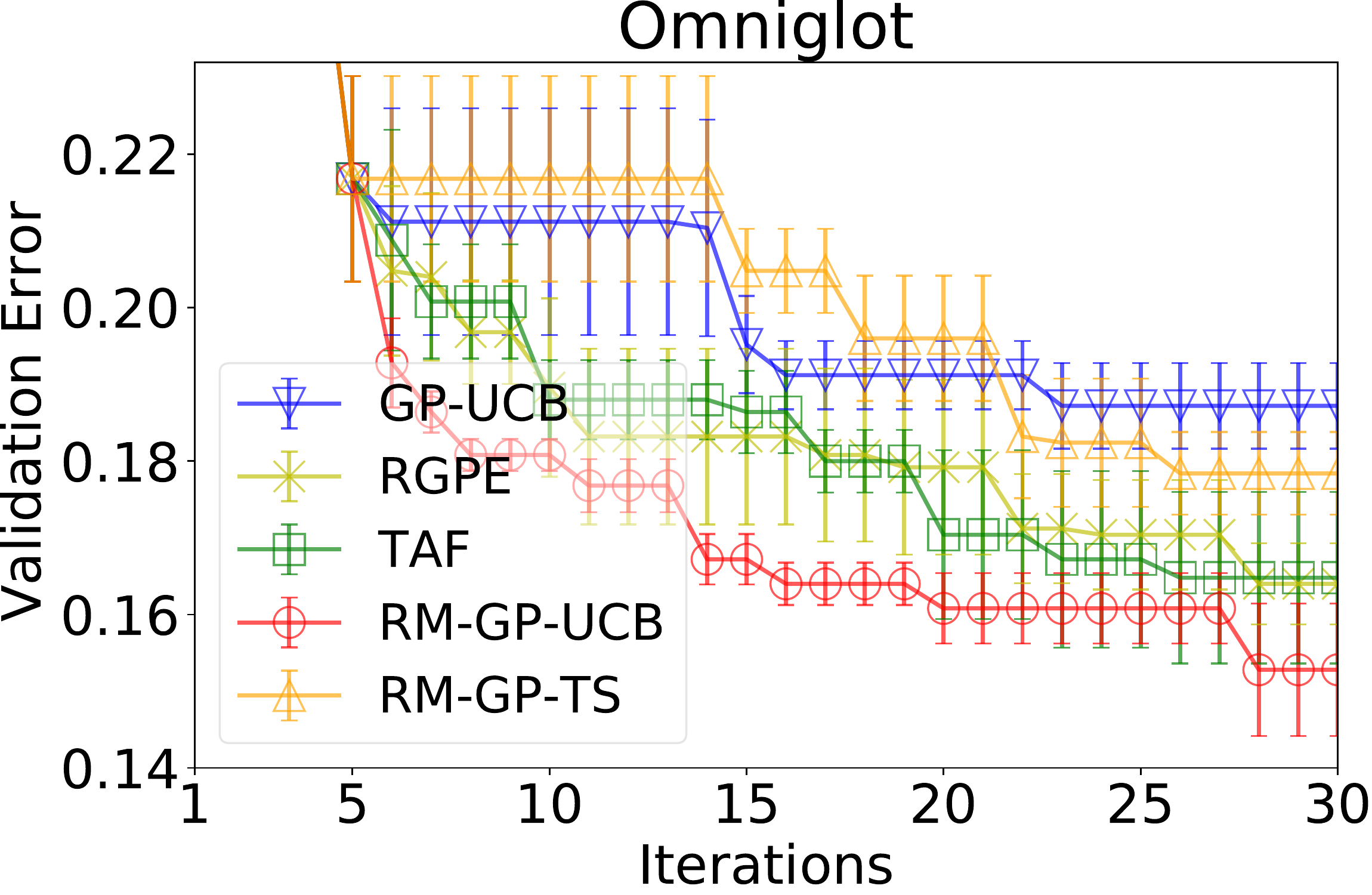}
\caption{2-way validation error on the Omniglot dataset.}
\label{fig:omniglot}
\end{figure}

\textbf{Hyperparameter Tuning for Support Vector Machines (SVMs).}
This benchmark dataset, which was originally introduced by~\citep{wistuba2015learning} and can be downloaded from~\url{https://github.com/wistuba/TST}, is created by performing hyperparameter tuning of SVM using $50$ diverse datasets.
$6$ hyperparameters are tuned: $3$ binary parameters indicating whether a linear, polynomial or radial basis function (RBF) kernel is used, the penalty parameter, the degree of the polynomial kernel, and the bandwidth parameter for the RBF kernel. A fixed grid of hyperparameters of size $288$ is created. For each dataset, every hyperparameter configuration on the grid is evaluated and the corresponding validation accuracy is recorded as the observed output of the objective function.
In our experiments, each dataset corresponds to a task. We treat one of the $50$ tasks as the target task, and the remaining tasks as $49$ meta-tasks. For each meta-task, the meta-observations are produced by randomly sampling $50$ points (hyperparameter configurations) from the grid. The results reported in the main paper (Fig.~\ref{fig:cnn}c) are averaged over $25$ trials, each trial treating a different task as the target task; for each trial/target task, we again average the results over $5$ random initializations.

\textbf{Human Activity Recognition (HAR).}
The dataset used in this experiment can be downloaded from \url{https://archive.ics.uci.edu/ml/datasets/Human+Activity+Recognition+Using+Smartphones}.

In this experiment of human activity prediction, each data instance (input-output pair) is characterized by a feature vector of length 561 and a label corresponding to one of the $6$ activities.
The SVM hyperparameters being optimized are the penalty parameter $C$ (from 0.01 to 10) and the radial basis function (RBF) kernel coefficient $\gamma$ (from 0.01 to 1).
There are in total 7,352 data instances for the 21 subjects that are used to generate the meta-tasks, and 2,947 instances for the 9 subjects used for performance validation.
For each subject, half of the instances are used as the training set, with the other half being used for validation.

\textbf{Non-stationary Bayesian Optimization.}
The clinical diagnosis dataset used in this experiment can be found at \url{https://www.kaggle.com/uciml/pima-indians-diabetes-database}, and it is associated with the CC0 License.
The hyperparameters of the logistic regression (LR) model being optimized are the batch size (20 to 60), 
the L2 regularization parameter ($10^{-6}$ to 0.01) and the learning rate (0.01 to 0.1).
The dataset represents a binary classification problem (whether a patient has diabetes or not), with each input instance consisting of 8 diagnostic features:
number of pregnancies, plasma glucose concentration, blood pressure, skin thickness, insulin, BMI, diabetes pedigree function, and age.

\textbf{Policy Search for Reinforcement Learning.} 
In this experiment, we use the Cart-Pole environment from OpenAI Gym (\url{https://github.com/openai/gym}), which is under the MIT License.
We adopt the linear softmax policy 
which linearly maps a state vector of length 4 
to an action vector of length 2, followed by a softmax operator.
As a result, for a particular state, the action with the largest softmax value is taken.
With this setting, $4\times 2=8$ parameters are tuned in this experiment.
The performance metric used in the experiment is the cumulative rewards (normalized to the range $[0,1]$) in an episode (averaged over $10$ independent episodes), and 
the maximum length of each episode is set to 200.

%

\subsection{Impacts of Max vs Mean in Function Gap Estimation}
\label{app:subsec_max_mean}
Here we explore the impact of the choice between using $\max$ (the outer $\max$ operator over $j=1,...,N_i$) or the empirical mean in the estimated upper bound on the function gap (Lemma~\ref{estimate_di}),
as mentioned in the first paragraph of Section~\ref{sec:experiment}.
Fig.~\ref{fig:inspect_use_max} plots the different performances using these two choices in the MNIST, CIFAR-$10$ and clinical diagnosis (non-stationary BO) experiments.
The results show that the performance deficit resulting from the use of the $\max$ operator is marginal in some experiments (Fig.~\ref{fig:inspect_use_max}a and b),
whereas the difference can be larger in some other experiments (Fig.~\ref{fig:inspect_use_max}c).
Therefore, it is recommended to use the empirical mean when estimating the upper bound on the function gap in practice.
\begin{figure}
	\centering
	\begin{tabular}{ccc}
		\hspace{-3mm} \includegraphics[width=0.328\linewidth]{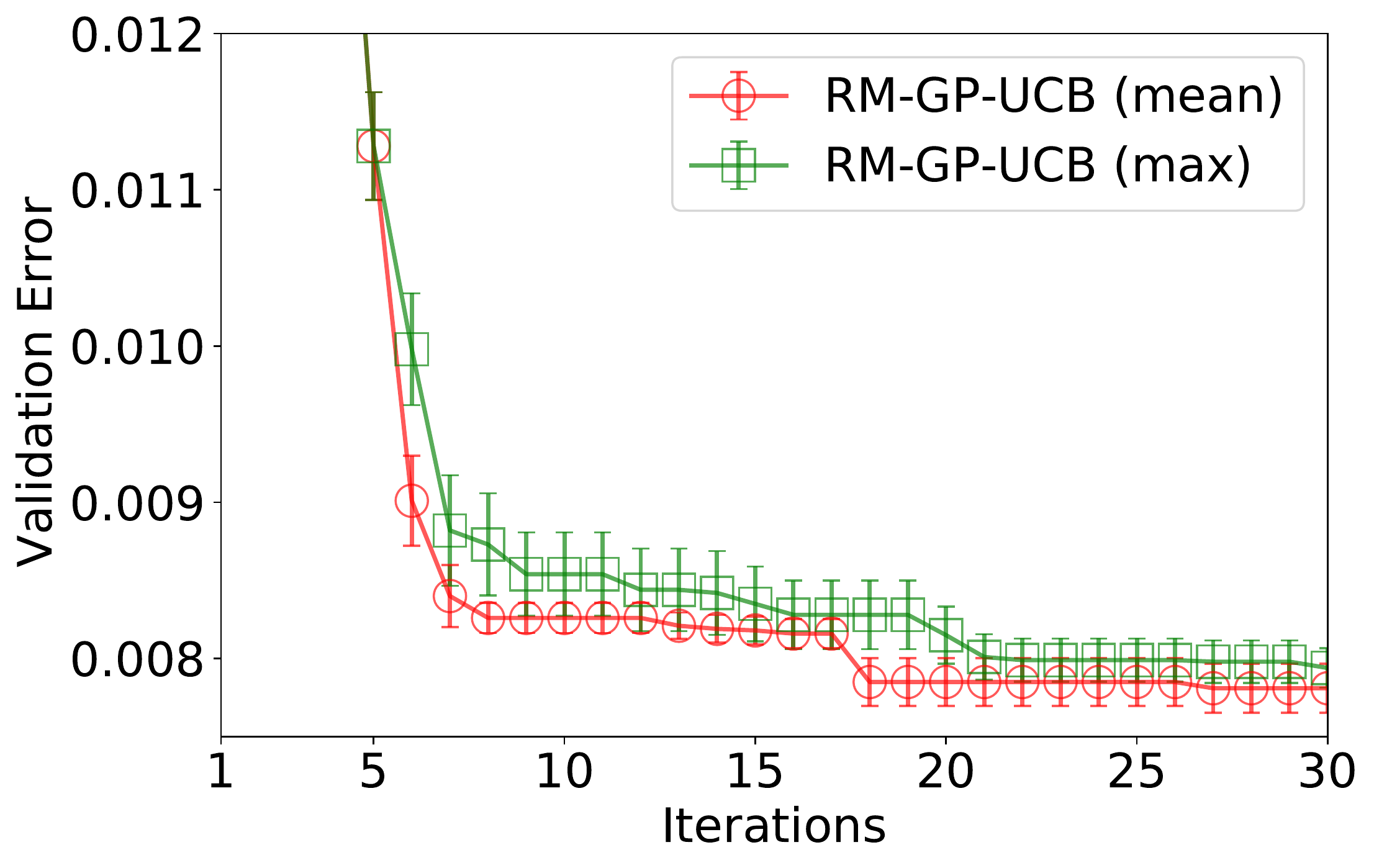} & \hspace{-4mm}
		\includegraphics[width=0.328\linewidth]{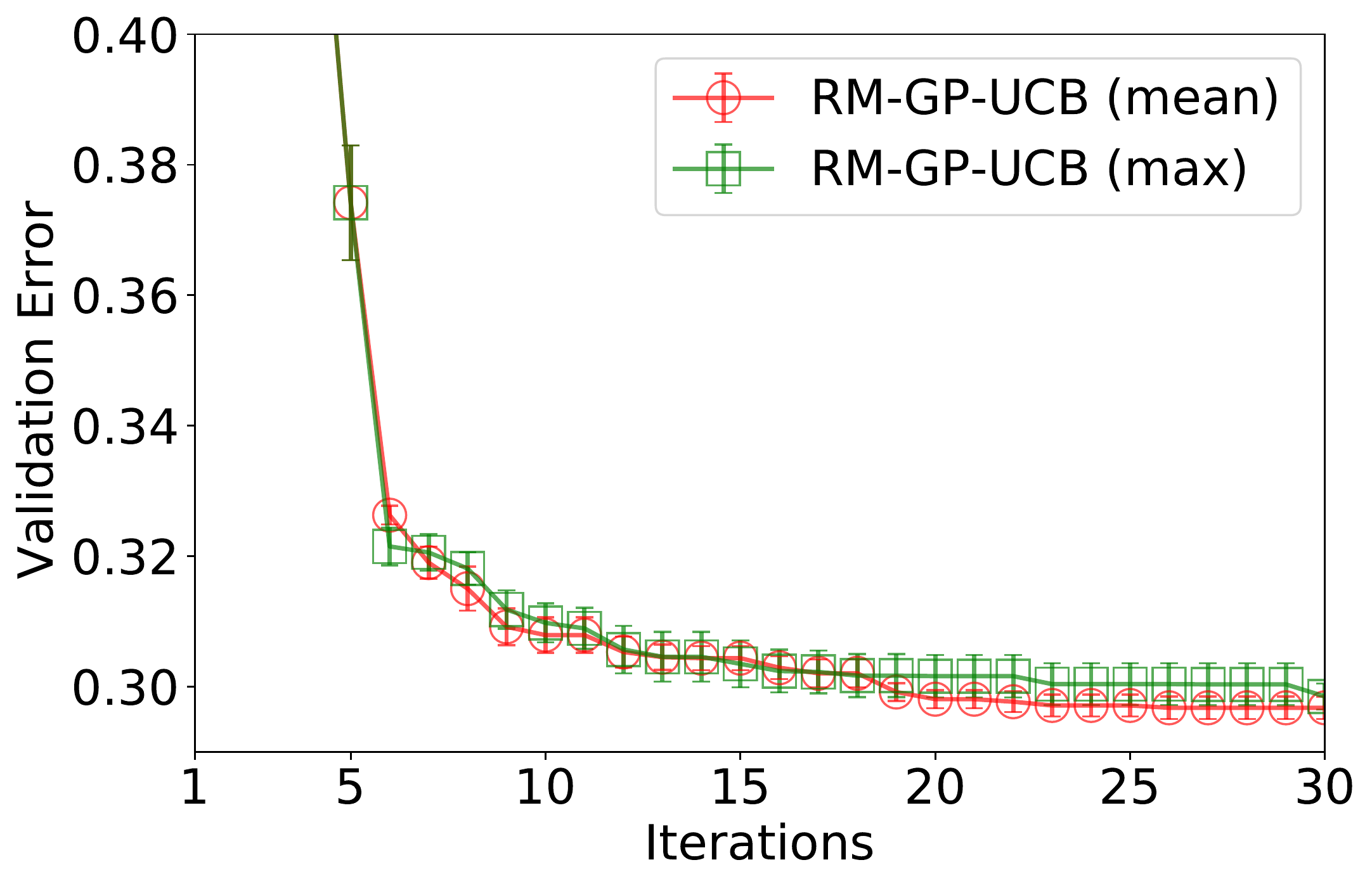} & \hspace{-4mm}
		\includegraphics[width=0.328\linewidth]{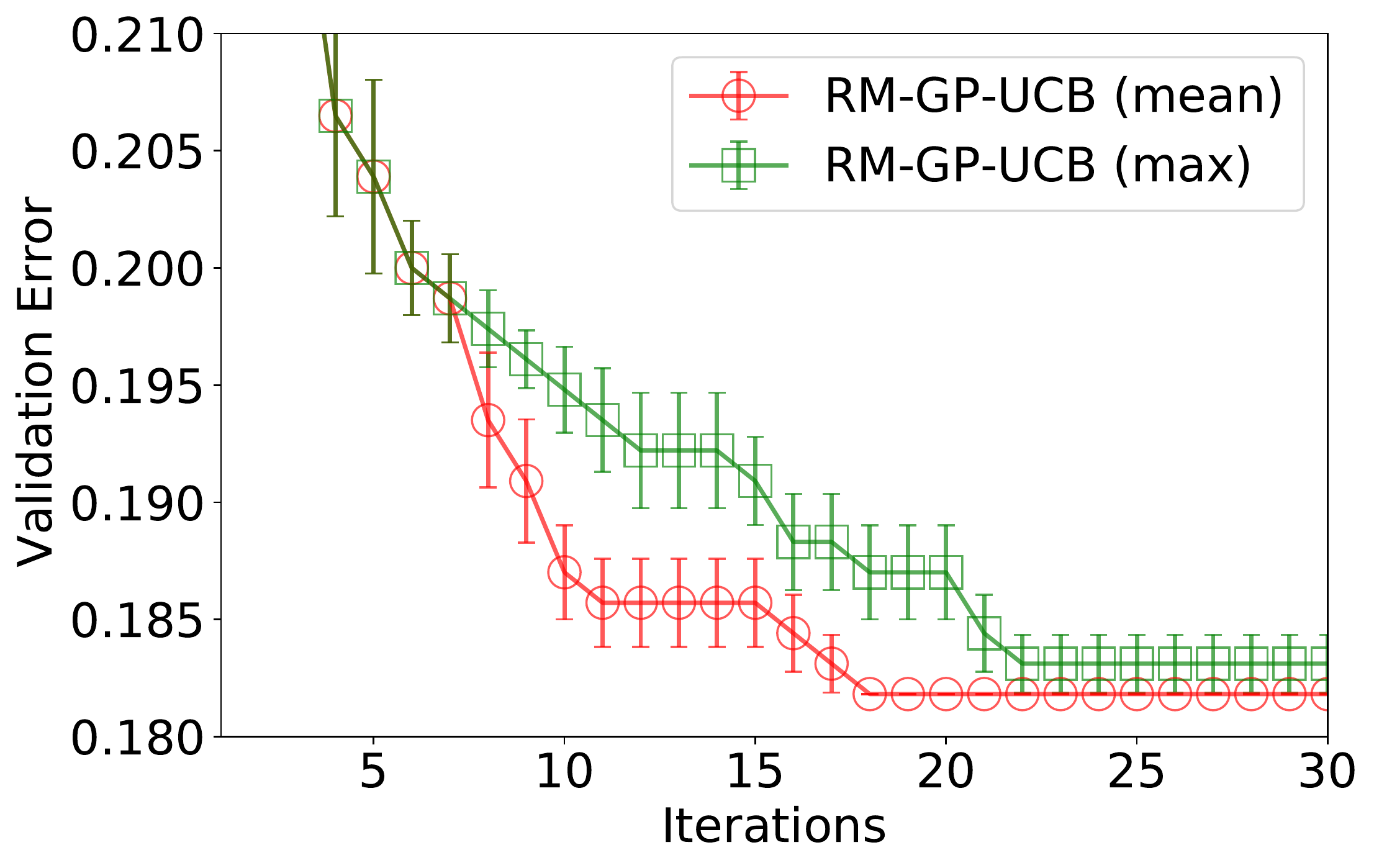}\\
		{(a)} & {(b)} & {(c)}
	\end{tabular}
	\caption{Impacts of using max vs empirical mean in estimating the upper bound on the function gaps, using the (a) MNIST, (b) CIFAR-10 and (c) non-stationary BO (clinical diagnosis) experiments.}
	\label{fig:inspect_use_max}
\end{figure}

\subsection{Scalability of Our Algorithms}
\label{app:scalability}
Here we further demonstrate the scalability of our RM-GP-UCB and RM-GP-TS algorithms.
by showing that our algorithms can be applied to experiments with a very large scale, and still performs competitively.
Specifically, we construct a much larger version of the experiment on policy search for RL, with $60$ meta-tasks each containing $130$ meta-observations. Fig.~\ref{fig:scalability_2}a and b plot the performance and runtime in this large-scale experiment. 
Consistent with Fig.~\ref{fig:cnn}e in the main text, our RM-GP-UCB algorithm still performs the best among all algorithms (Fig.~\ref{fig:scalability_2}a).
RM-GP-TS has a better performance here than in Fig.~\ref{fig:cnn}e, performing comparably with RGPE (Fig.~\ref{fig:scalability_2}a).
Moreover, RM-GP-TS is again significantly more scalable than RM-GP-UCB, RGPE and TAF, and its computational cost is comparable with standard GP-UCB (Fig.~\ref{fig:scalability_2}b).


\begin{figure}[tb]
	\centering
	\begin{subfigure}[b]{0.4\linewidth}
		\includegraphics[width=\linewidth]{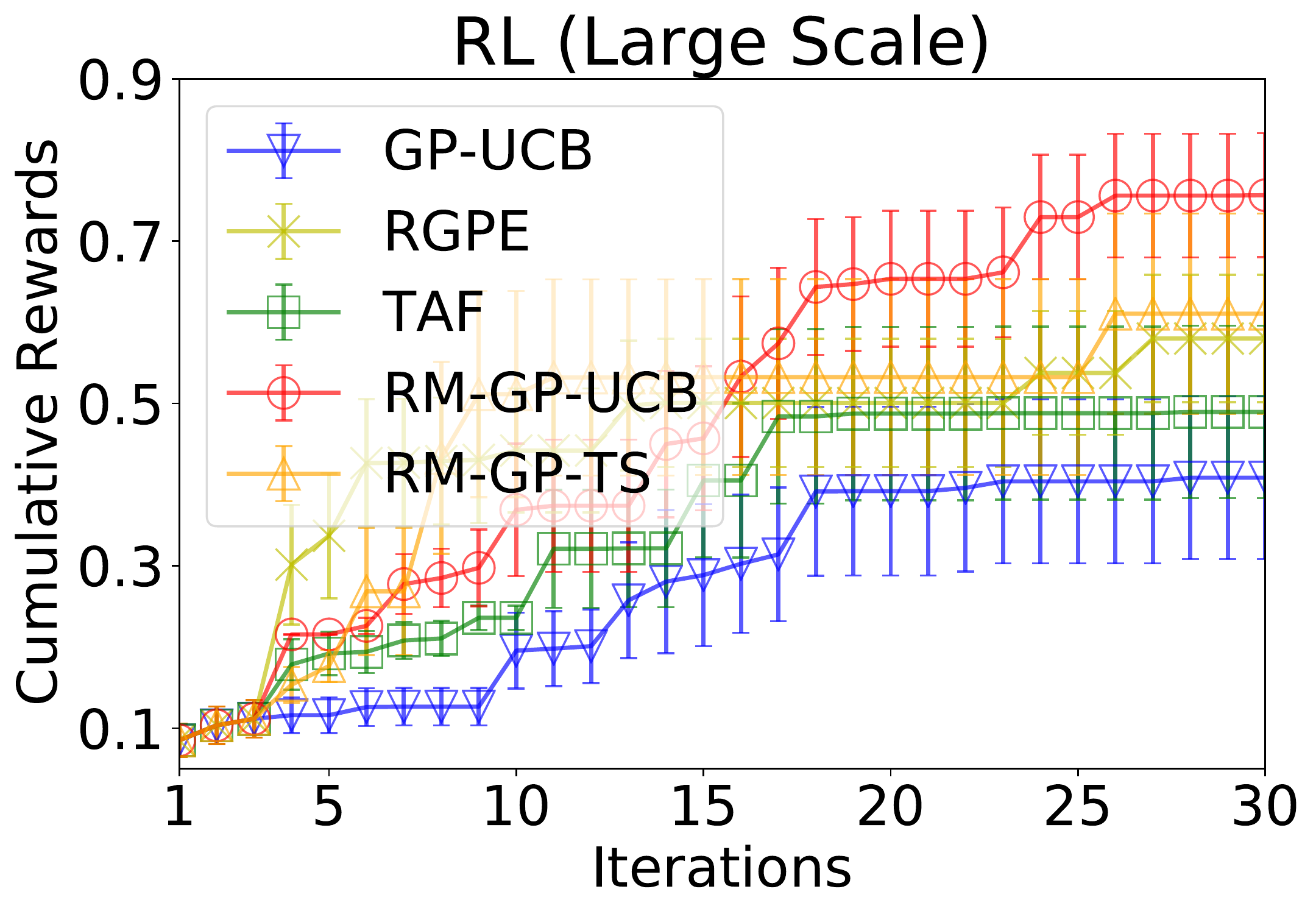}
		\caption{Cumulative rewards.}
	\end{subfigure}
	\hfill
	\begin{subfigure}[b]{0.4\linewidth}
		\includegraphics[width=\linewidth]{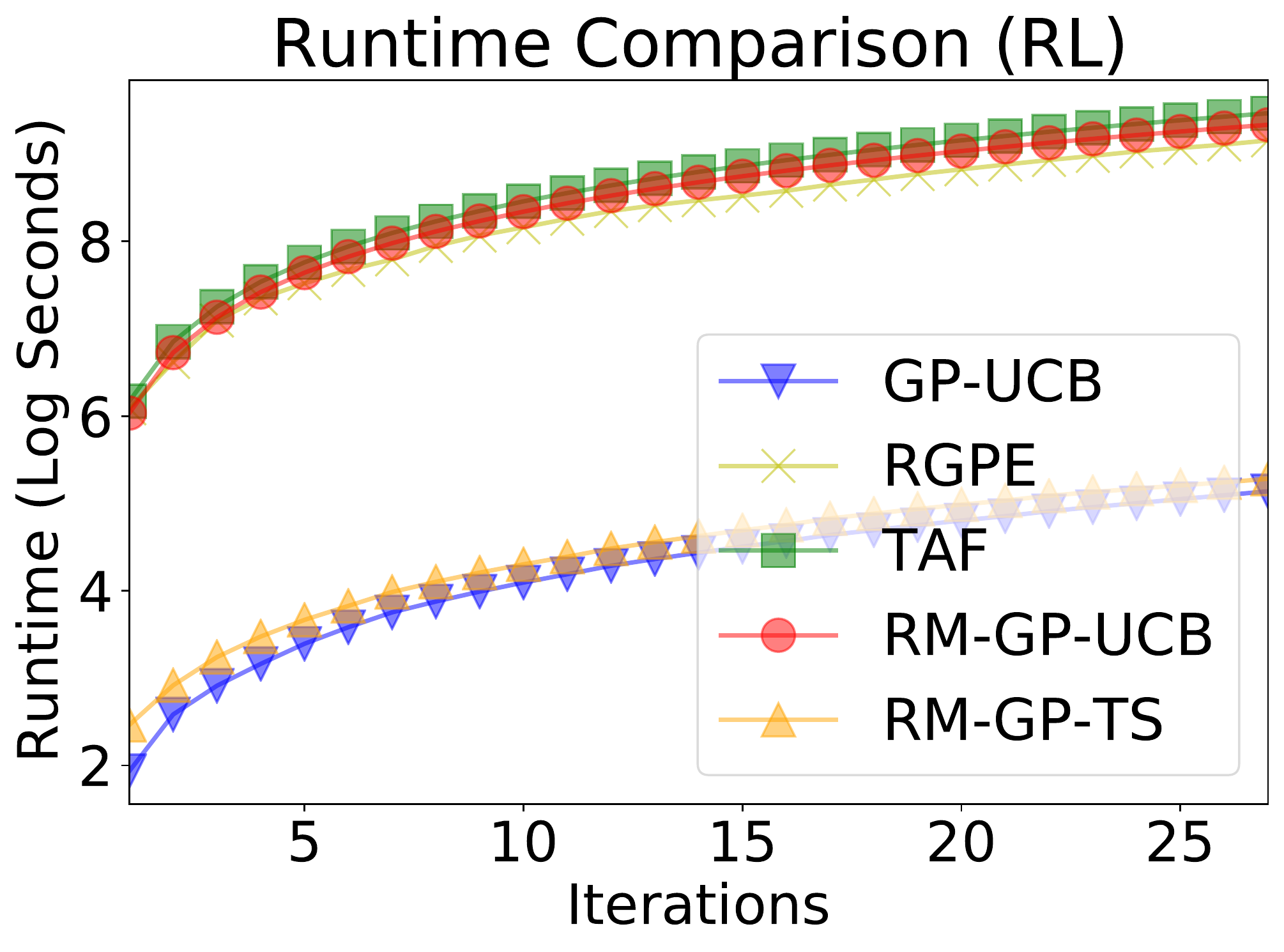}
		\caption{Runtime.}
	\end{subfigure}
	\caption{Results demonstrating that our algorithms can be applied to experiments with a very large scale, using a larger version of the RL experiment (with $60\times 130=7800$ meta-observations).}
	\label{fig:scalability_2}
\end{figure}

\subsection{More Details on RM-GP-TS}
\label{app:ts:details}
In this section, we present more details on the practical implementation of our RM-GP-TS algorithm.
In all experiments, when sampling a function from the GP posterior, we use random Fourier features (RFF)~\citep{dai2020federated,rahimi2008random} with $m=120$ random Fourier features.
Firstly, we need to construct a set of random features. For an SE kernel with hyperparameters $l$ and $\sigma_k$ (i.e., $k(\mathbf{z})=\sigma_k^2e^{-\frac{\norm{\mathbf{z}}^2_2}{2l^2}}$, with $\mathbf{z}=\mathbf{x}_1-\mathbf{x}_2,\forall \mathbf{x}_1,\mathbf{x}_2\in\mathcal{D}$), we firstly sample $m$ vectors $\{\mathbf{s}_i\}_{i=1,\ldots,m}$ from the $D$-dimensional Gaussian distribution: $\mathcal{N}(0, \frac{1}{l^2}I)$, and sample $m$ scalar values $\{b_i\}_{i=1,\ldots,m}$ from the uniform distribution within the domain $[0, 2\pi]$.
Next, for any input $\mathbf{x}\in\mathcal{D}$, its corresponding $m$-dimensional random features can be constructed as $\boldsymbol{\phi}(\mathbf{x})=[\sqrt{2/m}\cos(\mathbf{s}_i^{\top}\mathbf{x} + b_i)]^{\top}_{i=1,\ldots,m}$. Every $\boldsymbol{\phi}(\mathbf{x})$ is then normalized such that $\norm{\boldsymbol{\phi}(\mathbf{x})}^2_2=\sigma_k^2,\forall \mathbf{x}\in\mathcal{D}$.
Based on these, in order to (approximately) sample a function from the GP posterior, we firstly sample a vector $\boldsymbol{\omega}$ from the Gaussian distribution $\boldsymbol{\omega}\sim\mathcal{N}(\boldsymbol{\nu}_t,\sigma^2\boldsymbol{\Sigma}_t)$, with $\boldsymbol{\Sigma}_t=(\boldsymbol{\Phi}_t^{\top}\boldsymbol{\Phi}_t+\sigma^2 \boldsymbol{I})^{-1}$,  $\boldsymbol{\nu}_t=\boldsymbol{\Sigma}_t\boldsymbol{\Phi}_t^{\top}\mathbf{y}_t$, and $\boldsymbol{\Phi}_t=[\boldsymbol{\phi}(\mathbf{x}_1,\ldots,\mathbf{x}_t)]^{\top}$.
Finally, we can use the sampled $\boldsymbol{\omega}$ to construct the sampled function such that $f^t(\mathbf{x})=\boldsymbol{\phi}(\mathbf{x})^{\top}\boldsymbol{\omega},\forall \mathbf{x}\in\mathcal{D}$.
As a result, as mentioned in Sec.~\ref{sec:om_gp_ucb}, for a meta-task $i$, in order to sample multiple functions from the meta-function $f_i$ before the algorithm starts, we simply need to draw multiple samples of the vector $\boldsymbol{\omega}$ from the corresponding multivariate Gaussian distribution using the observations from meta-task $i$.
For both the target function and every meta-function, the kernel hyperparameters ($l$ and $\sigma_k$) used in the posterior sampling steps above are learned by maximizing the marginal likelihood (using full GP without RFF approximation), which is a common practice in BO.


\end{document}